\newcommand{\smallerthanSmall}{\fontsize{9pt}{9pt}\selectfont}
\newtcbox{\inlinebox}{on line,
  colback=gray!5, colframe=gray!80, boxrule=0.5pt, arc=2mm,
  left=1pt, right=1pt, top=0pt, bottom=0pt, boxsep=1pt,
  fontupper=\scriptsize}
\NewDocumentCommand{\mytilde}{}{
  \raisebox{-0.15ex}{$\mathtt{\sim}$}
}
\tikzstyle{block} = [rectangle, draw, rounded corners, 
\tikzstyle{decision} = [diamond, draw, aspect=2, inner sep=0.5em, text centered, font=\small]
\tikzstyle{arrow} = [thick,->,>=stealth]
\theoremstyle{plain}
\newtheorem{theorem}{Theorem}
\newtheorem{lemma}{Lemma}
\newtheorem{corollary}[theorem]{Corollary}
\theoremstyle{definition}
\newtheorem{definition}{Definition}
\theoremstyle{remark}
\title{Overcoming Joint Intractability with Lossless Hierarchical Speculative Decoding}
\author{
\parbox{\linewidth}{
\centering
\bfseries
Yuxuan Zhou$^{1}$\thanks{Work done during internship at Qwen Team, Alibaba Inc.},
Fei Huang$^{2}$,
Heng Li$^{1}$,
Fengyi Wu$^{3}$,
Tianyu Wang$^{3}$,\\
Jianwei Zhang$^{2}$,
Junyang Lin$^{2}$\thanks{Corresponding author.},
Zhi-Qi Cheng$^{3}$\footnotemark[\value{footnote}]\\[1em]
\normalfont
$^{1}$Independent Researcher \quad
$^{2}$Qwen Team, Alibaba Inc. \quad
$^{3}$University of Washington\\[0.5em]
\hspace*{-2em}\texttt{zhouyuxuanyx@gmail.com}, \texttt{junyang.ljy@alibaba-inc.com}, \texttt{zhiqics@uw.edu}
}
}
\begin{document}

\maketitle

\begin{abstract}
Verification is a key bottleneck in improving inference speed while maintaining distribution fidelity in Speculative Decoding. Recent work has shown that sequence-level verification leads to a higher number of accepted tokens compared to token-wise verification. However, existing solutions often rely on surrogate approximations or are constrained by partial information, struggling with joint intractability. In this work, we propose \emph{Hierarchical Speculative Decoding (HSD)}, a provably lossless verification method that significantly boosts the expected number of accepted tokens and overcomes joint intractability by balancing excess and deficient probability mass across accessible branches. Our extensive large-scale experiments demonstrate that HSD yields consistent improvements in acceptance rates across diverse model families and benchmarks. Moreover, its strong explainability and generality make it readily integrable into a wide range of speculative decoding frameworks. Notably, integrating HSD into EAGLE-3 yields over a 12\% performance gain, establishing state-of-the-art decoding efficiency without compromising distribution fidelity. Code is available at \href{https://github.com/ZhouYuxuanYX/Hierarchical-Speculative-Decoding}{https://github.com/ZhouYuxuanYX/Hierarchical-Speculative-Decoding}.
\end{abstract}

\section{Introduction}
\label{sec:intro}

Inference speed has become paramount for Large Language Models (LLMs) \citep{achiam2023gpt, touvron2023llama, bai2023qwen}, which generate text auto-regressively. Recent advances in test-time scaling \citep{openai2024o1, guo2025deepseek, yu2025dapo, peng2025lmm} have further underscored its importance. While techniques like pruning \citep{frankle2018lottery, sun2023simple} and quantization \citep{shen2020q, xiao2023smoothquant} improve efficiency but sacrifice performance, Speculative Decoding \citep{leviathan2023fast} achieves speedups while preserving the target model’s distribution, making it a particularly appealing alternative. It adopts a smaller model to make proposals and a larger model to select from them with a grounded verification strategy. Most approaches prioritize the drafting phase, but further gains face diminishing returns. Driven by the verification bottleneck, recent methods \citep{cai2024medusa, zhou2024distillspec, narasimhan2024faster} trade off fidelity for speed, relying on task-specific tuning; their performance typically remains constrained to carefully curated scenarios.

Recent work \citep{sun2024block, qin2025optimized} shows that jointly verifying draft tokens can improve the expected number of accepted tokens, but faces joint intractability: simply applying the resampling strategy used in tokenwise verification \citep{leviathan2023fast} would require full joint probabilities over all possible decoding paths to correctly recover the target distribution, which is computationally infeasible. To address this, \citep{qin2025optimized} employs a lossy fixed acceptance threshold, while \citep{sun2024block} proposes Blockwise Verification, which provably recovers the target distribution. However, Blockwise Verification still falls short of the ideal case, and both its underlying mechanism and compatibility with other methods remain unclear.


In this work, we propose Hierarchical Speculative Decoding (HSD), a provably lossless verification method built upon a novel hierarchical branch resampling strategy. In speculative decoding, resampling recovers portions of the target distribution that exceed the draft probability. As illustrated in \Cref{fig:overview}, HSD organizes multiple resampling distributions hierarchically across successive levels, with each distribution recovering only the partial target within its branch and resampling occurring immediately after the last accepted token. This design ensures the full target distribution is recovered in expectation while maximizing the expected number of accepted tokens, pushing the limits of lossless verification and enabling more efficient decoding. Notably, Blockwise verification focuses on independent verification with unclear potential for integration, while our method is designed to easily combine with other approaches, such as the widely adopted multi-draft setups.


\begin{figure}
\centering
 \includegraphics[width=0.8\linewidth]{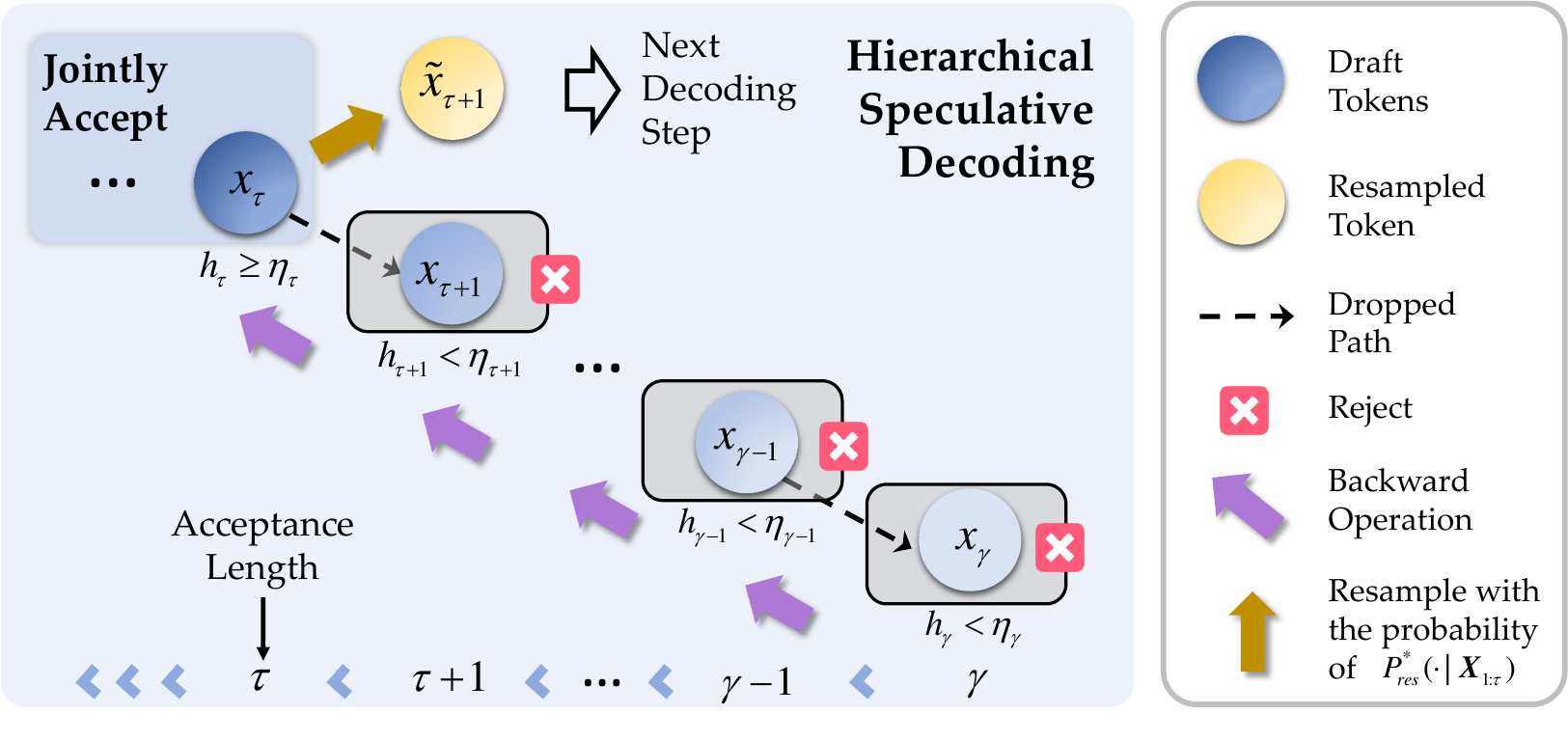}
\caption{\small{\textbf{Overview of HSD}. HSD accepts the draft \( \boldsymbol{X}_{\tau} \) by scanning backward from \( \gamma \) to \( \tau \), and then performs a single resampling at position \( \tau+1 \) using the corresponding distribution from the resampling hierarchy.}
}
\label{fig:overview}
\end{figure}


\textbf{In summary, our contributions are as follows}:
\begin{itemize}



\item We introduce \emph{Hierarchical Speculative Decoding} (HSD), a lossless and explainable verification method that integrates seamlessly with existing speculative decoding frameworks while remaining largely orthogonal to them.

\item HSD delivers a practical advance in inference scaling, achieving an average $6.7\%$ improvement in decoding speed across diverse benchmarks and model sizes while preserving distributional fidelity, with efficiency gains of up to $12.3\%$ on individual datasets.

\item HSD further improves decoding speed across multi-draft settings. Notably, integrating HSD into EAGLE-3 yields over $12\%$ performance gain, establishing new state-of-the-art decoding efficiency without compromising distribution fidelity.
\end{itemize}




\section{Related Work}
\label{sec:related}
Follow-up research on speculative decoding ~\citep{leviathan2023fast}  can be organized into two main phases: the drafting phase and the verification phase.

\textbf{Drafting Phase.} Drafting methods can be grouped into three categories:
\emph{(1)~Single-draft.} Early SD methods~\citep{leviathan2023fast} inspired PaSS~\citep{monea2023pass} and Draft\&Verify~\citep{zhang2024draft}, improving efficiency via multi-token generation or selective layer skipping. GLIDE~\citep{du2024glide} (shared KV-cache) offers further speedups but requires task-specific tuning.
\emph{(2)~Retrieval-based.} LLM-A~\citep{yang2023LLMA} and ReST~\citep{he2023rest} generate drafts from reference texts, potentially reducing latency, but face database limitations, distribution gaps, and reliance on greedy decoding.
\emph{(3)~Multi-draft.} Tree-attention frameworks—SpecInfer~\citep{miao2024specinfer}, Medusa~\citep{cai2024medusa}, and Eagle~\citep{li2024eagle, fan2026flatter}—expand many branches, quickly exhausting memory. Medusa and Eagle also predict drafts from the target model’s hidden features rather than a separate draft model, further boosting speed but requires task-specific tuning.

\textbf{Verification Phase.} Verification methods trade fidelity for speed. Lossless approaches~\citep{sun2023spectr, yang2024multi, hu2025towards} guarantee exact recovery but are costly. Block Verification~\citep{sun2024block} partially alleviates this bottleneck but offers limited improvement and low interpretability and integrity. Lossy methods—including BiLD~\citep{kim2023bild}, MTAD~\citep{qin2025optimized}, DistillSpec~\citep{zhou2024distillspec}, Medusa-2~\citep{cai2024medusa}, SpecCascade~\citep{narasimhan2024faster} and CoS~\citep{fu2025fast} increase speed but compromise distribution fidelity and require task-specific tuning.
In addition, Medusa and EAGLE always accept the first draft token to improve throughput, trading off exact recovery of the target distribution.

\section{Revisiting Tokenwise Speculative Decoding}
\label{sec:tokenwise}

In tokenwise speculative sampling~\citep{leviathan2023fast}, each token $x_t$ is drafted from $q(x_t)$ and verified against $p(x_t)$. It is accepted with probability $h(x_t)=\min\{1,\,p(x_t)/q(x_t)\}$, or rejected and replaced from $P_{\text{res}}(x_t)$. Thus the probability that $x_t$ is finally produced (``yielded'') is:
\begin{align}
P(x_t \text{ yielded})
&= P(x_t \text{ drafted and accepted})
 + P(\tilde{x}_t \text{ drafted and rejected},\, x_t \text{ resampled}). \label{eq:tokenwise-yield}
\end{align}

\paragraph{Accept term.}
If $x_t$ is proposed by $q$ and accepted,
\begin{equation}
P(x_t \text{ drafted and accepted})
  = q(x_t)\,h(x_t)
  = q(x_t)\,\min\!\{1,\,p(x_t)/q(x_t)\}.
\end{equation}
\paragraph{Resampling term.}
When a draft $\tilde{x}_t$ is rejected, the verifier resamples from
\[
P_{\text{res}}(x_t)
  = \frac{p(x_t) - \min\{p(x_t),q(x_t)\}}
         {\sum_{\tilde{x}_t\in \mathcal{V}} \bigl(p(\tilde{x}_t) - \min\{p(\tilde{x}_t),q(\tilde{x}_t)\}\bigr)}.
\]
The total probability of rejection is $\sum_{\tilde{x}_t \in \mathcal{V}} q(\tilde{x}_t)(1-h(\tilde{x}_t))$, giving
\begin{equation}
P(\tilde{x}_t \text{ drafted and rejected},\, x_t \text{ resampled})
   = \Bigl[\sum_{\tilde{x}_t\in \mathcal{V}} q(\tilde{x}_t)(1-h(\tilde{x}_t))\Bigr] P_{\text{res}}(x_t).
\end{equation}
\paragraph{Final distribution.}
The sum $\sum_{\tilde{x}_t \in \mathcal{V}} q(\tilde{x}_t)\,(1-h(\tilde{x}_t))$ corresponds to the total \emph{excess mass} assigned by the draft distribution to tokens where it allocates more probability than the target, while the denominator of $P_{\text{res}}(x_t)$ measures the total \emph{deficient mass}, i.e., the probability assigned by the target to tokens where it allocates more than the draft. For tokenwise distributions these match ($D_{\mathrm{LK}}(q,p) = D_{\mathrm{LK}}(p,q)$), so they cancel, yielding
\[
P(x_t \text{ is yielded}) = q(x_t) h(x_t) + D_{\mathrm{LK}}(q,p)\frac{p(x_t)-q(x_t)h(x_t)}{D_{\mathrm{LK}}(p,q)} = p(x_t).
\]

\section{Theoretical Foundations of Hierarchical Speculative Decoding}
\label{sec:pre}

For any \textbf{lossless speculative decoding}, the probability of generating an output decomposes into two parts: (1) the probability a draft is \emph{accepted}, becoming the final output, and (2) the probability a draft is \emph{rejected}, triggering a corrective resampling step. In \emph{token-wise} speculative decoding, resampling is straightforward because each token's probability is directly accessible. In contrast, full joint probabilities over sequences are intractable for auto-regressive models. \textbf{Hierarchical Speculative Decoding (HSD)} overcomes this via \emph{hierarchical branch resampling}, where multiple resampling distributions at different levels recover \emph{partial target distributions}, which together statistically recover the full distribution. This section formalizes the theoretical foundations.

\subsection{Recovery of Partial Distributions}
\label{sec:recovery}
To guide recovery within accessible subsets, we extend the divergence from \cite{leviathan2023fast} to partial distributions. Let \(\omega\) be a token or sequence, \(\Omega\) the full sample space, and \(p(\cdot), q(\cdot)\) the target and draft distributions. For \(\Omega'\subseteq \Omega\), define the \emph{generalized divergence}:

\begin{definition}
\label{def:general}
\textbf{Generalized Divergence.}
Given two distributions \( p \) and \( q \) over a sample space \( \Omega \), and a subset \( \Omega' \subseteq \Omega \), the \emph{generalized divergence} over \( \Omega' \) is defined as:
\begin{equation}
\setlength{\abovedisplayskip}{2pt}
\setlength{\belowdisplayskip}{2pt}
D_{\Omega'}(p, q) = \sum_{\tilde{\omega} \in \Omega'} \max\{p(\tilde{\omega}) - q(\tilde{\omega}),\ 0\}.
\end{equation}
The \emph{generalized divergence} \(D_{\Omega'}(p,q)\) measures the total \emph{deficient mass}, i.e., how much probability mass is missing in the draft \(q\) relative to the target \(p\) within the subset \(\Omega'\). The reverse divergence \(D_{\Omega'}(q,p)\) measures the corresponding \emph{excess mass}. In the whole space \(\Omega\), this is symmetric  (see \Cref{lem:symmetry} in \Cref{app:symmetry_TD} ) and reduces to the divergence from \cite{leviathan2023fast} (see \Cref{lem:equivalence} in \Cref{sec:equivalence}), which underpins standard token-wise speculative decoding.
\end{definition}



Next, we formalize the condition under which the partial target distribution is fully recoverable:

\begin{theorem}\label{theorem:derivation}
\textbf{Partial Distribution Recovery.} 
A target distribution over \(\Omega'\subseteq \Omega\) can be fully recovered via resampling iff $D_{\Omega'}(p, q)\,\le\;D_{\Omega'}(q, p)$. (See proof in \Cref{app:proof_partial_recovery}.)
\end{theorem}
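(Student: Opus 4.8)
The plan is to treat exact recovery of $p$ on the branch $\Omega'$ as a mass-balance feasibility question, using the same accept/resample mechanism as tokenwise verification but accounting for mass only within $\Omega'$. First I would write the per-element yield decomposition. For each $\omega \in \Omega'$ the acceptance step contributes $q(\omega)h(\omega)=\min\{p(\omega),q(\omega)\}$, which never exceeds $p(\omega)$; the shortfall that resampling must replenish at $\omega$ is therefore $p(\omega)-\min\{p(\omega),q(\omega)\}=\max\{p(\omega)-q(\omega),0\}$. Summing over $\Omega'$, the total demand placed on the resampling step equals exactly $D_{\Omega'}(p,q)$, the deficient mass.

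Next I would quantify the supply side. A draft $\omega\in\Omega'$ is rejected with probability $q(\omega)(1-h(\omega))=\max\{q(\omega)-p(\omega),0\}$, so the total rejection probability generated inside the branch is $R=\sum_{\omega\in\Omega'}\max\{q(\omega)-p(\omega),0\}=D_{\Omega'}(q,p)$, the excess mass. This $R$ is the entire probability budget the resampling distribution can redeposit: since $P_{\text{res}}$ is a genuine distribution, the mass it returns to $\Omega'$ is at most $R\cdot P_{\text{res}}(\Omega')\le R$.

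For sufficiency I would assume $D_{\Omega'}(p,q)\le D_{\Omega'}(q,p)$ and exhibit an explicit recovering $P_{\text{res}}$: set $P_{\text{res}}(\omega)=\max\{p(\omega)-q(\omega),0\}/D_{\Omega'}(q,p)$ for $\omega\in\Omega'$ and route the leftover mass $1-D_{\Omega'}(p,q)/D_{\Omega'}(q,p)\ge 0$ outside $\Omega'$. Then each deficient element receives $R\cdot P_{\text{res}}(\omega)=\max\{p(\omega)-q(\omega),0\}$, each over-allocated element receives zero, and the yield collapses to $p(\omega)$ everywhere on $\Omega'$; the edge case $D_{\Omega'}(q,p)=0$ forces $p=q$ on $\Omega'$ and is recovered trivially without resampling. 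For necessity I would use the counting bound from the previous two paragraphs: the yielded mass on $\Omega'$ is at most $\sum_{\omega\in\Omega'}\min\{p(\omega),q(\omega)\}+R$, while exact recovery requires it to equal $\sum_{\omega\in\Omega'}p(\omega)=\sum_{\omega\in\Omega'}\min\{p(\omega),q(\omega)\}+D_{\Omega'}(p,q)$, forcing $R\ge D_{\Omega'}(p,q)$, i.e. the stated inequality.

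The main obstacle I anticipate is being precise about the semantics of resampling in the partial/hierarchical setting, specifically making rigorous that $P_{\text{res}}$ may legitimately place the surplus $R-D_{\Omega'}(p,q)$ outside $\Omega'$ without disturbing recovery on $\Omega'$. This is exactly what separates the partial statement from the tokenwise case, where excess and deficient mass coincide and $P_{\text{res}}$ must be supported entirely on the deficient set. I would handle it by treating recovery on $\Omega'$ as a marginal condition and appealing to the hierarchical construction, so that the overflow is precisely the excess passed to the parent level; keeping $P_{\text{res}}$ a normalized distribution throughout is the one place where the inequality, rather than equality, is essential.
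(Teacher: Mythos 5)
Your proposal is correct and follows essentially the same route as the paper's proof: both decompose the yield as $\min\{p(\omega),q(\omega)\}$ plus the trigger mass $D_{\Omega'}(q,p)$ times $P_{\text{res}}(\omega)$, identify that resampling must deposit exactly $\max\{p(\omega)-q(\omega),0\}$ at each element, and obtain the inequality from the requirement that $P_{\text{res}}$ be a valid (normalized) distribution. Your explicit split into sufficiency (exhibiting $P_{\text{res}}$ with leftover mass routed outside $\Omega'$) and necessity (the mass-counting bound) is only an organizational refinement of the paper's single iff chain, and if anything makes explicit the point the paper leaves implicit, namely that surplus trigger mass may legitimately land outside $\Omega'$.
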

Intuitively, this ensures the "trigger mass" in the draft is sufficient to compensate for the deficit in the target distribution. Over the full space \(\Omega\), symmetry guarantees full recoverability.

\subsection{Resampling within the Accessible Branch}
\label{sec:intra}

With these definitions, we analyze resampling within \emph{accessible branches} along a draft sequence. Although computing full joint probabilities is intractable, the probabilities of all next tokens over the vocabulary $\mathcal{V}$ are accessible given any prefix $\boldsymbol{X}_{1:t-1}$. We define a \emph{branch} as:

\begin{equation}
   \text{Branch}(\boldsymbol{X}_{1:t-1}) = \{ \boldsymbol{X}_{1:t} = (\boldsymbol{X}_{1:t-1}, \tilde{x}_t) \mid \tilde{x}_t \in \mathcal{V} \}. 
\end{equation}
Branch divergence will guide redistribution of excess probability mass to correct local deficits.

Since only joint probabilities $p(\boldsymbol{X}_{1:t})$ within a given branch $\text{Branch}(\boldsymbol{X}_{1:t-1})$ are available, we introduce \emph{branch divergence} to quantify local deficits in the draft:

\begin{definition}
\label{def:branch_divergence}
    \textbf{Branch Divergence} 
\begin{equation}
    \setlength{\abovedisplayskip}{2pt}
\setlength{\belowdisplayskip}{2pt}
         D_\text{Branch}(p, q \mid \boldsymbol{X}_{1:t-1}) =  \sum_{\boldsymbol{X}_{1:t}\in \text{Branch}(\boldsymbol{X}_{1:t-1})} \text{max}\{p\left(\boldsymbol{X}_{1:t}\right) - q\left(\boldsymbol{X}_{1:t}\right), 0\}  
\end{equation}
\end{definition}

Branch divergence captures how much probability mass is missing locally. Unlike total divergence, it is inherently asymmetric, motivating the definition of \emph{branch asymmetry}:

\begin{definition} 
\label{def:asym}
\textbf{Asymmetry of Branch Divergence} 
\begin{equation}
\setlength{\abovedisplayskip}{2pt}
\setlength{\belowdisplayskip}{2pt}
    \Delta_\text{Branch}(\boldsymbol{X}_{1:t-1})  = D_\text{Branch}(p, q \mid \boldsymbol{X}_{1:t-1}) - D_\text{Branch}(q, p \mid \boldsymbol{X}_{1:t-1})
\end{equation}
\end{definition}
Asymmetry essentially reflects the probabilistic imbalance within the current branch. Here, $\Delta_\text{Branch} > 0$ indicates a deficit that cannot be corrected within the branch alone, while $\Delta_\text{Branch} < 0$ represents excess mass available to support other branches. It can be computed as follows:

\begin{theorem} 
\label{theorem:branch}
Quantifying Asymmetry of Branch Divergence     (see proof in \Cref{proof:quantify}):
\begin{equation}
   \setlength{\abovedisplayskip}{2pt}
\setlength{\belowdisplayskip}{2pt}
    \Delta_\text{Branch}(\boldsymbol{X}_{1:t-1})  = p\left(\boldsymbol{X}_{1:t-1}\right) - q\left(\boldsymbol{X}_{1:t-1}\right),        
\end{equation}
\end{theorem}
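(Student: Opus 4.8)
The plan is to collapse the difference of the two branch divergences into a single signed sum over the vocabulary and then marginalize using the autoregressive chain rule. First I would write both divergences explicitly over the common index set $\tilde{x}_t \in \mathcal{V}$ that parametrizes $\text{Branch}(\boldsymbol{X}_{1:t-1})$, abbreviating $P_{\tilde{x}_t} := p(\boldsymbol{X}_{1:t-1}, \tilde{x}_t)$ and $Q_{\tilde{x}_t} := q(\boldsymbol{X}_{1:t-1}, \tilde{x}_t)$, so that
\[
\Delta_\text{Branch}(\boldsymbol{X}_{1:t-1}) = \sum_{\tilde{x}_t \in \mathcal{V}} \max\{P_{\tilde{x}_t} - Q_{\tilde{x}_t},\, 0\} \;-\; \sum_{\tilde{x}_t \in \mathcal{V}} \max\{Q_{\tilde{x}_t} - P_{\tilde{x}_t},\, 0\}.
\]
Since both sums range over the same branch elements, I can merge them term by term into one sum indexed by $\tilde{x}_t$.

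The key algebraic step is the elementary identity $\max\{a, 0\} - \max\{-a, 0\} = a$, valid for every real $a$: checking the cases $a>0$, $a<0$, and $a=0$ shows that exactly one maximum is nonzero (or both vanish), and in each case the signed combination collapses to $a$. Applying this with $a = P_{\tilde{x}_t} - Q_{\tilde{x}_t}$ eliminates the maxima entirely, giving
\[
\Delta_\text{Branch}(\boldsymbol{X}_{1:t-1}) = \sum_{\tilde{x}_t \in \mathcal{V}} \bigl(P_{\tilde{x}_t} - Q_{\tilde{x}_t}\bigr) = \sum_{\tilde{x}_t \in \mathcal{V}} P_{\tilde{x}_t} - \sum_{\tilde{x}_t \in \mathcal{V}} Q_{\tilde{x}_t}.
\]

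Finally I would evaluate each marginal sum via the autoregressive factorization $p(\boldsymbol{X}_{1:t-1}, \tilde{x}_t) = p(\boldsymbol{X}_{1:t-1})\, p(\tilde{x}_t \mid \boldsymbol{X}_{1:t-1})$ together with $\sum_{\tilde{x}_t \in \mathcal{V}} p(\tilde{x}_t \mid \boldsymbol{X}_{1:t-1}) = 1$, so that $\sum_{\tilde{x}_t} P_{\tilde{x}_t} = p(\boldsymbol{X}_{1:t-1})$ and likewise $\sum_{\tilde{x}_t} Q_{\tilde{x}_t} = q(\boldsymbol{X}_{1:t-1})$. Substituting yields $\Delta_\text{Branch}(\boldsymbol{X}_{1:t-1}) = p(\boldsymbol{X}_{1:t-1}) - q(\boldsymbol{X}_{1:t-1})$, as claimed.

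I do not anticipate a genuine obstacle here: the result is essentially the signed-cancellation counterpart of the symmetric full-space case recorded under \Cref{def:general}, where the two divergences coincide and the asymmetry is zero. The one point that deserves care is justifying that the branch restriction is exactly a complete marginalization over the final coordinate, i.e. that $\text{Branch}(\boldsymbol{X}_{1:t-1})$ sweeps over every $\tilde{x}_t \in \mathcal{V}$ with no token omitted, so that the conditional probabilities genuinely sum to one. Once that identification is granted, the telescoping of the marginals is immediate and the proof concludes in two lines.
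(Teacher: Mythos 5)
Your proposal is correct and follows essentially the same route as the paper's own proof: both collapse the difference of the two branch divergences into a single signed sum $\sum_{\tilde{x}_t \in \mathcal{V}} \bigl(p(\boldsymbol{X}_{1:t-1},\tilde{x}_t) - q(\boldsymbol{X}_{1:t-1},\tilde{x}_t)\bigr)$ and then marginalize over the final token via the chain rule. The only cosmetic difference is that you justify the cancellation with the pointwise identity $\max\{a,0\} - \max\{-a,0\} = a$, whereas the paper splits the sum by the sign of $p - q$; these are the same argument.
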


From \Cref{theorem:derivation} and \Cref{theorem:branch}, we conclude that resampling can fully recover the target distribution over a branch whenever the draft has enough probability mass to cover the deficit:

\begin{corollary}
\label{cor:resample}
The target distribution over the Branch($\boldsymbol{X}_{1:t-1}$) can be recovered via resampling, under the following condition:
\begin{equation}
\setlength{\abovedisplayskip}{2pt}
\setlength{\belowdisplayskip}{2pt}
p(\boldsymbol{X}_{1:t-1}) \leq q(\boldsymbol{X}_{1:t-1}) \quad \text{or, equivalently,} \quad r(\boldsymbol{X}_{1:t-1}) \leq 1
\end{equation}
where $r\left(\boldsymbol{X}_{1:t-1}\right) = \frac{p\left(\boldsymbol{X}_{1:t-1}\right)}{q\left(\boldsymbol{X}_{1:t-1}\right)}$ denotes the probability ratio.
\end{corollary}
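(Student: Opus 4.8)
The plan is to obtain the corollary as a direct specialization of \Cref{theorem:derivation} to the subset $\Omega' = \text{Branch}(\boldsymbol{X}_{1:t-1})$, followed by the algebraic identity of \Cref{theorem:branch}. First I would observe that the branch divergence of \Cref{def:branch_divergence} is literally the generalized divergence of \Cref{def:general} evaluated at $\Omega' = \text{Branch}(\boldsymbol{X}_{1:t-1})$, with each element $\omega = \boldsymbol{X}_{1:t} = (\boldsymbol{X}_{1:t-1}, \tilde{x}_t)$ carrying the joint masses $p(\boldsymbol{X}_{1:t})$ and $q(\boldsymbol{X}_{1:t})$. That is, $D_{\text{Branch}}(p, q \mid \boldsymbol{X}_{1:t-1}) = D_{\Omega'}(p,q)$ and likewise for the reverse direction. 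Because \Cref{theorem:derivation} is stated for an arbitrary subset $\Omega' \subseteq \Omega$, it applies verbatim here, so the target distribution over the branch is recoverable via resampling iff $D_{\text{Branch}}(p,q\mid \boldsymbol{X}_{1:t-1}) \le D_{\text{Branch}}(q,p\mid \boldsymbol{X}_{1:t-1})$.

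Next I would rewrite this inequality through the asymmetry. By \Cref{def:asym}, the recovery condition $D_{\text{Branch}}(p,q\mid\boldsymbol{X}_{1:t-1}) - D_{\text{Branch}}(q,p\mid\boldsymbol{X}_{1:t-1}) \le 0$ is exactly $\Delta_{\text{Branch}}(\boldsymbol{X}_{1:t-1}) \le 0$. Substituting the closed form from \Cref{theorem:branch}, namely $\Delta_{\text{Branch}}(\boldsymbol{X}_{1:t-1}) = p(\boldsymbol{X}_{1:t-1}) - q(\boldsymbol{X}_{1:t-1})$, turns the condition into $p(\boldsymbol{X}_{1:t-1}) - q(\boldsymbol{X}_{1:t-1}) \le 0$, i.e. $p(\boldsymbol{X}_{1:t-1}) \le q(\boldsymbol{X}_{1:t-1})$. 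Dividing by $q(\boldsymbol{X}_{1:t-1}) > 0$ yields the equivalent ratio form $r(\boldsymbol{X}_{1:t-1}) \le 1$, closing the chain and giving both stated formulations simultaneously.

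The only genuinely delicate point, and the step I would spend the most care on, is confirming that the hypotheses of \Cref{theorem:derivation} are satisfied when the branch ``distributions'' are unnormalized: the masses $\{p(\boldsymbol{X}_{1:t})\}_{\tilde{x}_t \in \mathcal{V}}$ sum to $p(\boldsymbol{X}_{1:t-1})$ rather than to $1$, and similarly for $q$. I would verify that \Cref{theorem:derivation} invokes only the deficient- and excess-mass quantities $D_{\Omega'}(p,q)$ and $D_{\Omega'}(q,p)$, so no per-branch renormalization is required for the \textbf{iff} to go through; operationally, the resampling redistributes the branch's available excess mass to cover its local deficit, which is feasible precisely when the excess dominates the deficit. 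Should any subtlety arise here, the fallback is to rescale both branch measures by $1/q(\boldsymbol{X}_{1:t-1})$, a positive constant that preserves the sign of $\Delta_{\text{Branch}}(\boldsymbol{X}_{1:t-1})$ and leaves the inequality direction unchanged, so the conclusion is robust in either formulation.
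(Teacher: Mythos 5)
Your proposal is correct and follows exactly the paper's own (implicit) argument: the paper presents this corollary as an immediate consequence of \Cref{theorem:derivation} applied to $\Omega' = \text{Branch}(\boldsymbol{X}_{1:t-1})$ combined with the closed form $\Delta_{\text{Branch}}(\boldsymbol{X}_{1:t-1}) = p(\boldsymbol{X}_{1:t-1}) - q(\boldsymbol{X}_{1:t-1})$ from \Cref{theorem:branch}. Your extra verification that no per-branch renormalization is needed is a sound observation (the branch is simply a subset of the space of length-$t$ sequences, over which $p$ and $q$ are genuine distributions, so \Cref{theorem:derivation} applies verbatim), but it does not change the route.
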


For drafts of length $\gamma$, the full target distribution cannot be recovered by applying verification solely within the accessible $\text{Branch}(\boldsymbol{X}_{1:\gamma-1})$. However, we observe that the unused probability mass in certain branches can be leveraged to compensate for the unrecoverable mass in other branches, from a statistical perspective. This motivates the hierarchical branch resampling approach discussed next.

\subsection{Resampling in a Hierarchy of Accessible Branches}
\label{sec:inter}

Accessible branch divergences naturally form a hierarchical structure that enables systematic redistribution of excess probability mass. Specifically:
 
\begin{theorem} 
\label{thm:hierarchy}
\textbf{Hierarchy of Branch Divergence}

The total positive asymmetry of branch divergence across child branches is equal to the parent branch divergence, and vice versa. Specifically:

\begin{equation}
\setlength{\abovedisplayskip}{2pt}
\setlength{\belowdisplayskip}{2pt}
    \sum_{\Delta_\text{Branch}(\boldsymbol{X}_{1:t-2}, \tilde{x}_{t-1})>0}  \Delta_\text{Branch}(\boldsymbol{X}_{1:t-2}, \tilde{x}_{t-1}) =    D_\text{Branch}(p, q \mid \boldsymbol{X}_{1:t-2}), \quad \text{and vice versa,}
\end{equation}
where $\boldsymbol{X}_{1:t-2}, \tilde{x}_{t-1}$ ranges over all possible Branches with the shared prefix $\boldsymbol{X}_{1:t-2}$,  and $\boldsymbol{X}_{1:t-2}$ is the accessible branch along the draft sequence.
(See \Cref{proof:hierarchy} for the proof.)
\end{theorem}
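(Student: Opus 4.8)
The plan is to reduce the claim entirely to \Cref{theorem:branch}, which already expresses the asymmetry of any single branch as a difference of prefix probabilities. First I would expand the right-hand side, the parent branch divergence $D_\text{Branch}(p, q \mid \boldsymbol{X}_{1:t-2})$, using \Cref{def:branch_divergence}. The set $\text{Branch}(\boldsymbol{X}_{1:t-2})$ consists exactly of the length-$(t-1)$ prefixes $(\boldsymbol{X}_{1:t-2}, \tilde{x}_{t-1})$ as $\tilde{x}_{t-1}$ ranges over $\mathcal{V}$, so the parent branch divergence equals $\sum_{\tilde{x}_{t-1} \in \mathcal{V}} \max\{p(\boldsymbol{X}_{1:t-2}, \tilde{x}_{t-1}) - q(\boldsymbol{X}_{1:t-2}, \tilde{x}_{t-1}),\, 0\}$, purely by unwinding the definition.

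The key step is then to invoke \Cref{theorem:branch} at prefix length $t-1$: for each child branch it yields the identity $\Delta_\text{Branch}(\boldsymbol{X}_{1:t-2}, \tilde{x}_{t-1}) = p(\boldsymbol{X}_{1:t-2}, \tilde{x}_{t-1}) - q(\boldsymbol{X}_{1:t-2}, \tilde{x}_{t-1})$. Substituting this into the expanded parent divergence rewrites each summand $\max\{p - q,\, 0\}$ as $\max\{\Delta_\text{Branch}(\boldsymbol{X}_{1:t-2}, \tilde{x}_{t-1}),\, 0\}$. Finally I would observe that the sum of positive parts $\sum_{\tilde{x}_{t-1}} \max\{\Delta_\text{Branch},\, 0\}$ coincides, by the meaning of $\max\{\cdot,0\}$, with the sum restricted to those child branches where $\Delta_\text{Branch} > 0$, which is exactly the left-hand side. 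This closes the main direction.

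For the \emph{and vice versa} clause I would rerun the identical argument with the roles of $p$ and $q$ interchanged, equivalently collecting the negative parts $\sum_{\tilde{x}_{t-1}} \max\{-\Delta_\text{Branch},\, 0\}$; since $\Delta_\text{Branch}(q,p\mid\cdot) = -\Delta_\text{Branch}(p,q\mid\cdot)$, this produces $D_\text{Branch}(q, p \mid \boldsymbol{X}_{1:t-2})$, the mirror statement under the sign flip $\Delta \mapsto -\Delta$. The two halves are therefore formally the same computation applied to the two distributions in turn.

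The main obstacle is bookkeeping rather than genuine difficulty: I must apply \Cref{theorem:branch} at the correct prefix length—namely the child prefix $(\boldsymbol{X}_{1:t-2}, \tilde{x}_{t-1})$ of length $t-1$, not the grandchild branch that this prefix indexes—and I must verify that the enumeration of $\text{Branch}(\boldsymbol{X}_{1:t-2})$ matches precisely the index set $\{\tilde{x}_{t-1} \in \mathcal{V}\}$ over which the asymmetries are summed. Once this index alignment is pinned down, the equality is immediate, because taking the positive part of a difference and restricting the sum to its strictly positive summands are one and the same operation.
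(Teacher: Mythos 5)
Your proposal is correct and follows essentially the same route as the paper's own proof: both invoke \Cref{theorem:branch} at the child-prefix level to identify each $\Delta_\text{Branch}(\boldsymbol{X}_{1:t-2}, \tilde{x}_{t-1})$ with $p(\boldsymbol{X}_{1:t-2}, \tilde{x}_{t-1}) - q(\boldsymbol{X}_{1:t-2}, \tilde{x}_{t-1})$, and then observe that summing the positive parts is, by \Cref{def:branch_divergence}, exactly the parent branch divergence $D_\text{Branch}(p, q \mid \boldsymbol{X}_{1:t-2})$. Your treatment of the \emph{vice versa} clause by swapping $p$ and $q$ is the natural symmetric completion, which the paper leaves implicit.
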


This result guarantees that excess mass from overrepresented branches can be aggregated to offset deficits in underrepresented branches. Thus, hierarchical branch resampling guarantees exact recovery of the target distribution, even when individual branches cannot. This provides a rigorous theoretical foundation for deriving Hierarchical Speculative Decoding.

\begin{figure}[ht]
\centering
\begin{minipage}[t]{0.49\linewidth}
\begin{algorithm}[H]
\setstretch{1}
\caption{Naive HSD}
\label{alg:naive}
\smallerthanSmall
\begin{algorithmic}[1]
\REQUIRE Target probabilities: $\{p(\cdot),..., p(\cdot|\boldsymbol{X}_{1:\gamma})\}$
\REQUIRE Draft probabilities: $\{q(\cdot),..., q(\cdot|\boldsymbol{X}_{1:\gamma-1})\}$
\REQUIRE Draft tokens $\boldsymbol{X}_{1:\gamma}=\{x_1, ..., x_\gamma \}$

\STATE Initialize $\tau = 0$
\FOR{$t$ \textbf{in} $\gamma:1$} 
    \STATE Sample $\eta_t \sim U(0,1)$
    \IF{$h_t \geq \eta_t$}
        \STATE Set $\tau = t$  \hfill {\color{blue}\textit{\#accept $\boldsymbol{X}_{1:t}$}}
        \STATE \textbf{break}
    \ELSE
        \STATE Set $\tau = t-1$  \hfill {\color{blue}\textit{\#reject $x_{t}$}}
        \STATE \textbf{continue}  \hfill {\color{blue}\textit{\#step back}}
    \ENDIF
\ENDFOR

\IF{$\tau = \gamma$}
    \STATE Sample token from $p(\cdot|\boldsymbol{X}_{1:\gamma})$  \hfill {\color{blue}\textit{\#bonus token}}
\ELSE 
    \FOR{$t$ \textbf{in} $\tau:\gamma-1$}
        \STATE Sample token from $P_{\text{res}}(\cdot \mid \boldsymbol{X}_{1:t})$ \hfill {\color{blue}\textit{\#resample}}
    \ENDFOR
\ENDIF

\ENSURE $[\boldsymbol{X}_{1:\tau}, \Tilde{x}_{\tau+1}, \dots, \Tilde{x}_\gamma]$
\end{algorithmic}
\end{algorithm}
\end{minipage}
\hfill
\begin{minipage}[t]{0.49\linewidth}
\begin{algorithm}[H]
\setstretch{1}
\caption{HSD}
\label{alg:backward}
\smallerthanSmall
\begin{algorithmic}[1]
\REQUIRE Target probabilities: $\{p(\cdot),..., p(\cdot|\boldsymbol{X}_{1:\gamma})\}$
\REQUIRE Draft probabilities: $\{q(\cdot),..., q(\cdot|\boldsymbol{X}_{1:\gamma-1})\}$
\REQUIRE Draft tokens $\boldsymbol{X}_{1:\gamma}=\{x_1, ..., x_\gamma \}$

\STATE Initialize $\tau = 0$

\FOR{$t$ \textbf{in} $\gamma:1$}
    \STATE Sample $\eta_t \sim U(0,1)$
    \IF{$h_t \geq \eta_t$}
        \STATE Set $\tau = t$ \hfill {\color{blue}\textit{\#accept $\boldsymbol{X}_{1:t}$}}
        \STATE \textbf{break}
    \ELSE
        \STATE Set $\tau = t-1$ \hfill {\color{blue}\textit{\#reject $x_{t}$}}
        \STATE \textbf{continue} \hfill {\color{blue}\textit{\#step back}}
    \ENDIF
\ENDFOR

\IF{$\tau = \gamma$}
    \STATE Sample token from $p(\cdot|\boldsymbol{X}_{1:\gamma})$ \hfill {\color{blue}\textit{\#bonus token}}
\ELSE
    \STATE Sample token from $P^*_{\text{res}}(\cdot \mid \boldsymbol{X}_{1:\tau})$ \hfill {\color{blue}\textit{\#resample}}
\ENDIF

\ENSURE $[\boldsymbol{X}_{1:\tau}, \text{token}]$
\end{algorithmic}
\end{algorithm}
\end{minipage}
\end{figure}

\section{Hierarchical Speculative Decoding}
Guided by the theoretical foundations, we first develop a \emph{naive algorithm} (see \ref{app: naive algorithm}) that exactly recovers the target distribution. The procedure evaluates a candidate sequence \( \boldsymbol{X}_{1:\gamma} \) and scans backward to identify the longest accepted prefix \( \boldsymbol{X}_{1:\tau} \), then recursively resamples positions \( \tau + 1 \) through \( \gamma \) using the corresponding distributions from the resampling hierarchy. 

This naive approach, however, still requires \( \gamma - \tau + 1 \) additional calls to the target model, since the resampled branches are inaccessible. To remove this overhead, we introduce \emph{Capped Branch Resampling}, yielding our final \emph{Hierarchical Speculative Decoding (HSD)}. HSD recovers the target distribution with just one resampling step within the accessible branches. Concretely, after the resampling step at line 15 in \Cref{alg:backward}, HSD only needs to sample from the target distribution to continue generation until $\gamma$, which can be replaced by another speculative decoding step, eliminating additional target calls.

\subsection{Naive Hierachicial Speculative Decoding}
\label{app: naive algorithm}

Specifically, the acceptance probability is computed according to the following formula:
{
\begin{tcolorbox}[
    colback=blue!5!white,  
    colframe=blue!45!white, 
    arc=1mm,                 
    boxrule=1.5pt,           
    breakable,               
    enhanced,                
    before=\vspace{2pt},
    after=\vspace{-5pt},
]
\setstretch{1}
\textbf{\emph{Acceptance Probability}} $h_\gamma = \min\{r(\boldsymbol{X}_{1:\gamma}), 1\}$, and when $t<\gamma$:
\begin{equation}
\label{eq:acceptance_probability_naive}
h_t =
    \frac{D_{\text{Branch}}(p, q \mid \boldsymbol{X}_{1: t})}{\max \{ D_{\text{Branch}}(p, q \mid \boldsymbol{X}_{1: t}), D_{\text{Branch}}(q, p \mid \boldsymbol{X}_{1: t})\}},  
\end{equation}
\\
\textbf{\emph{Branch Resampling Probability}} (line 17 in \Cref{alg:naive}):
\begin{equation}
\label{eq:resample_prob_naive}
\setlength{\abovedisplayskip}{0pt}
\setlength{\belowdisplayskip}{0pt}
P_{r e s}\left(x_t \mid \boldsymbol{X}_{1:t-1}\right)=\frac{\max \left\{p\left(\boldsymbol{X}_{1: t}\right) - q\left(\boldsymbol{X}_{1: t}\right) , 0\right\}}{D_{\text{Branch}}(p, q \mid \boldsymbol{X}_{1: t-1})}
\end{equation}
\end{tcolorbox}
}
\emph{Branch Divergence} $D_{\text{Branch}}(p, q \mid \boldsymbol{X}_{1: t-1})$ is defined in \Cref{def:branch_divergence}. By construction, the \emph{Branch Resampling Probability} is defined within the accessible Branch($\boldsymbol{X}_{1:t-1}$), i.e., $P_\text{res}(\boldsymbol{X}_{1:t}\mid\text{Branch}(\boldsymbol{X}_{1:t-1}))$, which reduces to the token-level form $P_\text{res}(x_{t}\mid\boldsymbol{X}_{1:t-1})$.

The probability of the Target Model generating a sequence $\boldsymbol{X}_{1:\gamma}$ can be decomposed into two disjoint events: (i) full acceptance of the draft, or (ii) at least one rejection followed by resampling:
\begin{equation}
\begin{aligned}
P\bigl(\boldsymbol{X}_{1:\gamma}\,\text{is yielded}\bigr) \;  =& \; P\bigl(\boldsymbol{X}_{1:\gamma}\,\text{is sampled as draft},\,\boldsymbol{X}_{1:\gamma} \text{ is accepted}\bigr) \\
&+\sum_{\tilde{\boldsymbol{X}}_{1:\gamma} \neq \boldsymbol{X}_{1:\gamma}} 
P(\tilde{\boldsymbol{X}}_{1:\gamma} \text{ sampled and rejected, } \boldsymbol{X}_{1:\gamma} \text{ resampled}).
\end{aligned}
\end{equation}
\noindent\underline{Accept term}: probability for the case when $\boldsymbol{X}_{1:\gamma}$ is sampled as draft and then directly accepted.
\begin{equation}
\begin{aligned}
    & P\bigl(\boldsymbol{X}_{1:\gamma}\,\text{is sampled as draft},\,\boldsymbol{X}_{1:\gamma} \text{ is accepted}\bigr) 
  =     
    \underbrace{q(\boldsymbol{X}_{1:\gamma})}_{\text{sample probability}} \underbrace{\min \{r(\boldsymbol{X}_{1:\gamma}), 1\}}_{\text{accept probability at} \; \gamma}
\end{aligned}
\end{equation}
If $r(\boldsymbol{X}_{1:\gamma}) \leq 1$, this equals to the target probability $p(\boldsymbol{X}_{1:\gamma})$. Otherwise, it is equal to $q(\boldsymbol{X}_{1:\gamma})$, and the residual probability $p(\boldsymbol{X}_{1:\gamma}) - q(\boldsymbol{X}_{1:\gamma})$ is compensated via resampling.

\noindent\underline{Resampling term (partially resampled)}:
This term accounts for all cases where $\boldsymbol{X}_{1:\gamma}$ is obtained by resampling. Note that the accepted prefix must exactly match the corresponding subsequence of $\boldsymbol{X}_{1:\gamma}$ for this contribution to apply. Therefore, we can further decompose it by summing over all possible positions $\tau+1$ of the first rejected token, with $\tau$ being the length of the longest accepted prefix:
\begin{equation}
\begin{aligned}
&\sum_{\tau\!=\!0}^{\gamma} \sum_{\tilde{\boldsymbol{X}}_{\tau+1:\gamma}} 
P(\tilde{\boldsymbol{X}}_{1:\gamma} \text{ sampled and rejected, } \boldsymbol{X}_{1:\gamma} \text{ resampled}) = \\
&\sum_{\tau=0}^{\gamma} \sum_{\tilde{\boldsymbol{X}}_{\tau+1:\gamma}}
q(\boldsymbol{X}_{1:\tau}\tilde{\boldsymbol{X}}_{\tau+1:\gamma}) \cdot\prod_{t=\tau+1}^\gamma (1-h_t)\quad \cdot h_\tau \boldsymbol{X}_{1:\tau} \cdot \prod_{t=\tau+1}^\gamma P_{\text{res}}(x_t)
\end{aligned}
\label{eq:resample}
\end{equation}

\noindent
\textbf{Explanation of terms:}
\begin{enumerate}
    \item \textbf{Sampling:} $q(\boldsymbol{X}_{1:\tau} \tilde{\boldsymbol{X}}_{\tau+1:\gamma})$ is the probability of generating the initial draft sequence.
    \item \textbf{Backward Scan:} $\prod_{t=\tau+1}^{\gamma} (1 - h_t)$ corresponds to scanning backward from the end, rejecting tokens until the first accepted prefix is found.
    \item \textbf{Acceptance:} $h_\tau$ is the probability of accepting the longest prefix $\boldsymbol{X}_{1:\tau}$.
    \item \textbf{Resampling:} $\prod_{t=\tau+1}^{\gamma} P_{\text{res}}(x_t)$ resamples the remaining positions to recover exactly the target probability.
\end{enumerate}

This decomposition defines the procedure underlying \Cref{alg:naive} and provides the basis for its provable losslessness. The complete proof is given in \Cref{proof:naive-proof}, together with an illustrative example \Cref{proof:naive-example} showing how naive HSD recovers the target distribution.

\subsection{Hierarchical Speculative Decoding with Capped Branch Resampling}
\label{sec:method}

To introduce the capped branch sampling, we first define the \emph{Maximum Prefix Ratio Index}.

\begin{definition}
\label{def:max_ratios}
\textbf{Maximum Prefix Ratio Index}
For candidate tokens $\boldsymbol{X}_{1:t}$, the \emph{Maximum Prefix Ratio Index} $m(\boldsymbol{X}_{1:t})$ is the position in the prefix $\boldsymbol{X}_{1:t-1}$ where the joint probability ratio $r(\boldsymbol{X}_{1:i})$ is maximized; if no prefix exceeds 1, we set $m(\boldsymbol{X}_{1:t}) = 0$:
\[
m(\boldsymbol{X}_{1:t}) = 
\arg\max_{1 \le i < t} r(\boldsymbol{X}_{1:i}) \;\;\text{or}\;\; 0 \text{ if } \max_{1 \le i < t} r(\boldsymbol{X}_{1:i}) \le 1.
\]
\end{definition}

Based on the \emph{Maximum Prefix Ratio Index}, we define the \emph{Capped Prefix Ratio} $r^*$ as follows:
\begin{definition}
\label{def:r_star}
\textbf{Capped Prefix Ratio}
\begin{equation}
    \label{eq:r_star}
r^{*}(\boldsymbol{X}_{1:t}) = \min\{r(\boldsymbol{X}_{1:m(\boldsymbol{X}_{1:t})}), 1\}r(\boldsymbol{X}_{m(\boldsymbol{X}_{1:t})+1:t}).
\end{equation}
By Definition~5, we have $r(\boldsymbol{X}_{1:m(\boldsymbol{X}_{1:t})}) >1$, and according to \Cref{eq:r_star}, this implies the identity $r^*(\boldsymbol{X}_{1:t}) = r\bigl(\boldsymbol{X}_{m(\boldsymbol{X}_{1:t})+1:t}\bigr)$. 
\end{definition}

Then we define the \emph{Capped Branch Divergence}:
\begin{definition}
\label{def:d_star}
    \textbf{Capped Branch Divergence}
\begin{align}
\label{eq:d_star}
D^*_{\text{Branch}}\left(p, q \mid \boldsymbol{X}_{1: t-1}\right) &= \sum_{\substack{\boldsymbol{X}_{1:t} \in \text{Branch}(\boldsymbol{X}_{1:t-1}); \\ r^*\left(\boldsymbol{X}_{1: t}\right)>1}} \left(r^*\left(\boldsymbol{X}_{1: t}\right)-1\right) q\left(\boldsymbol{X}_{1: t}\right) & \\D^*_{\text{Branch}}\left(q, p \mid \boldsymbol{X}_{1: t-1}\right) &= \sum_{\substack{\boldsymbol{X}_{1:t} \in \text{Branch}(\boldsymbol{X}_{1:t-1}); \\ r^*\left(\boldsymbol{X}_{1: t}\right)\leq1}} \left(1-r^*\left(\boldsymbol{X}_{1: t}\right)\right) q\left(\boldsymbol{X}_{1: t}\right) \label{eq:side_by_side_align}
\end{align}
\end{definition}
Finally, the acceptance probability is computed according to the following formula:
{
\begin{tcolorbox}[
    colback=blue!5!white,  
    colframe=blue!55!white, 
    arc=1mm,                 
    boxrule=1.5pt,           
    breakable,               
    enhanced,                
    before=\vspace{2pt},
    after=\vspace{2pt},
]
\setstretch{1}
\textbf{\emph{Acceptance Probability}} $h_\gamma = \min \{r^*(\boldsymbol{X}_{1:\gamma}), 1\}$, and when $t<\gamma$:
\begin{equation}
\label{def:acceptance_probability}
h_t =
    \frac{D^*_{\text{Branch}}(p, q \mid \boldsymbol{X}_{1: t})}{D^*_{\text{Branch}}(q, p \mid \boldsymbol{X}_{1: t})},  
\end{equation}

\textbf{\emph{Capped Branch Resampling Probability}}~~(line 15 in \Cref{alg:backward}):
\begin{equation}
\label{eq:cap_resample}
\setlength{\abovedisplayskip}{0pt}
\setlength{\belowdisplayskip}{0pt}
P^*_\text{res}\left(x_t \mid \boldsymbol{X}_{1:t-1}\right)=\frac{\max \left\{q\left(\boldsymbol{X}_{1: t}\right)\left(r^*\left(\boldsymbol{X}_{1: t}\right)-1\right), 0\right\}}{D^*_{\text {Branch }}\left(p, q \mid \boldsymbol{X}_{1: t-1}\right)}
\end{equation}
\end{tcolorbox}
}

We refer to the above strategy as \emph{Capped Branch Resampling}. It plays a central role in enabling efficient resampling within the hierarchical branch resampling framework. The resampling distribution in \Cref{eq:cap_resample} enables recovery of the full target distribution with only a single resampling step for branches with negative asymmetry. The remaining positions can then be directly sampled from the target model, aligning with the start of the next speculative decoding step and thus incurring no extra computational cost.

We briefly clarify the core mechanism by which capping preserves the target joint distribution. 
From~\Cref{def:r_star} and~\Cref{def:d_star}, it follows that $D^*_{\text{Branch}}(p, q \mid \boldsymbol{X}_{1: t}) =\sum_{\boldsymbol{X}_{1:t}\in \text{Branch}(\boldsymbol{X}_{1:t-1})} \text{max}\{q\left(\boldsymbol{X}_{1:m(\boldsymbol{X}_{1:t})}\right)p\left(\boldsymbol{X}_{m(\boldsymbol{X}_{1:t})+1:t}\right) - q\left(\boldsymbol{X}_{1:t}\right), 0\} $.
Through the acceptance probability and resampling probability at position~$t$, we essentially guarantee that the probability of obtaining $\boldsymbol{X}_{1:t}$ is equal to $q\left(\boldsymbol{X}_{1:m(\boldsymbol{X}_{1:t})}\right)p\left(\boldsymbol{X}_{m(\boldsymbol{X}_{1:t})+1:t}\right)$, partially recovering the probability of the fragment $\boldsymbol{X}_{m(\boldsymbol{X}_{1:t})+1:t}$. And the deficient probability mass $p(\boldsymbol{X}_{1:m(\boldsymbol{X}_{1:t})}) - q(X_{1:m(\boldsymbol{X}_{1:t})})$ is statistically recovered from the resampling distributions in higher hierarchies, which corresponds to the fragments $\tilde{\boldsymbol{X}}_{1:m(\boldsymbol{X}_{1:t})}$ of other trajectories. An illustrative example in~\Cref{app: example_cap} demonstrates how the algorithm recovers loss over the entire path, with a further explanation of the capped ratio provided in \Cref{app:capped_lossless_proof}.

\subsection{Computational Efficiency}
The verification stage in HSD adds negligible overhead compared to the savings from reduced target model forward passes. Thanks to parallelized computations across both the vocabulary and draft positions, HSD is nearly as efficient as tokenwise verification. Runtime measurements (Appendix~\ref{app:computation}) show that verification accounts for less than 1\% of total decoding time, with the majority still spent on draft and target forward passes. These results demonstrate that HSD is not only theoretically lossless but also practically efficient, as further confirmed by our experiments in \Cref{sec:experiment}.

\subsection{Illustrative Example}
\label{sec:hsd_gsm8k_example}



\begin{figure}[!t]
\centering
\begin{tcolorbox}[colback=gray!5, colframe=gray!80,
    boxrule=0.5pt, arc=2mm, left=2pt, right=2pt, top=2pt, bottom=2pt,
    title=GSM8K Example, fonttitle=\bfseries]
\scriptsize
\begin{alltt}
Eliza's rate per hour for the first 40 hours she works each week is \$10. She also receives 
an overtime pay of 1.2 times her regular hourly rate. If Eliza worked for 45 hours this week,
how much are her earnings for this week?

To determine Eliza's earnings for the week, we need to calculate both her regular pay and 
her overtime pay.

1. **Calculate Regular Pay:**
   - Eliza's regular rate is \$10 per hour.
   - She worked 40 hours at her regular rate.
   - Regular pay: 40 hours × \$10/hour = \$400.

2. **Calculate Overtime Pay:**
   - Eliza worked a total of 45 hours, so
\end{alltt}
\end{tcolorbox}
\caption{GSM8K question with the generated prefix. The text shown is the printed output of the decoded string in Markdown format.}
\label{fig:gsm8k_example}
\end{figure}

We use a GSM8K question as a running example to demonstrate HSD (see \Cref{fig:gsm8k_example}). This example emphasizes the hierarchical acceptance mechanism and the capping behavior that are key to HSD.

\paragraph{Next-token probabilities.}
Under the given prefix, the large (target) and small (draft) models produce the next-token probabilities:

\begin{tcolorbox}[colback=gray!5, colframe=gray!80,
    boxrule=0.5pt, arc=2mm, left=2pt, right=2pt, top=-6pt, bottom=1pt,
    listing only]
\scriptsize
\begin{align*}
\{p(\cdot\mid \mathbf{X}_{1:\gamma})\} &= 
\{0.7156,1.0000,1.0000,1.0000,0.0000,0.0000,0.0000,1.0000,0.0000,0.4968\},\\
\{q(\cdot\mid \mathbf{X}_{1:\gamma})\} &= 
\{0.8771,0.7900,0.6514,0.2592,0.6773,0.1490,0.5775,1.0000,0.4611,0.3630\}.
\end{align*}
\end{tcolorbox}

The corresponding draft tokens are:
\(\{\texttt{she}, \texttt{work}, \texttt{ed}, \texttt{45}, \texttt{-}, \texttt{40}, \texttt{=}, \texttt{5}, \texttt{hours}, \texttt{of}\}\).

\paragraph{Joint probabilities and ratios.}
We compute the joint probabilities along the draft:

\begin{tcolorbox}[colback=gray!5, colframe=gray!80,
    boxrule=0.5pt, arc=2mm, left=2pt, right=2pt, top=-6pt, bottom=1pt,
    listing only]
\scriptsize
\begin{align*}
\{p(\mathbf{X}_{1:t})\}_{t=1}^{\gamma} &= 
\{0.7156,0.7156,0.7156,0.7156,0,0,0,0,0,0\},\\
\{q(\mathbf{X}_{1:t})\}_{t=1}^{\gamma} &= 
\{0.8771,0.6929,0.4513,0.1170,0.0792,0.0118,0.0068,0.0068,0.0031,0.0011\},\\
\{r(\mathbf{X}_{1:t})\}_{t=1}^{\gamma} &= 
\{0.8159,1.0327,1.5855,6.1171,0,0,0,0,0,0\}.
\end{align*}
\end{tcolorbox}

These ratios exhibit early growth above \(1\) (at \(t=2,3,4\)) and collapse to \(0\) once the target probability vanishes (from \(t \ge 5\)).

\paragraph{Maximum prefix indices and capped ratios.}
Following Definition~\ref{def:max_ratios}, the maximum prefix indices and capped ratios are 
\inlinebox{$\{m(\mathbf{X}_{1:t})\}_{t=1}^{\gamma} = \{0,0,2,3,4,4,4,4,4,4\}$} and
\inlinebox{$\{r^*(\mathbf{X}_{1:t})\}_{t=1}^{\gamma} = \{0.8159,1,1,1,0,0,0,0,0,0\}$}.

\paragraph{Capped branch divergences and acceptance.}
On the full vocabulary branch \(\mathrm{Branch}(\mathbf{X}_{1:t-1})\), we evaluate the capped branch divergences:
\begin{tcolorbox}[colback=gray!5, colframe=gray!80,
    boxrule=0.5pt, arc=2mm, left=2pt, right=2pt, top=-6pt, bottom=1pt,
    listing only]
\scriptsize
\begin{align*}
\{D^*_{\mathrm{Branch}}(p,q\mid \mathbf{X}_{1:t})\}_{t=1}^{\gamma} &= 
\{0.0227,0.2416,0.3343,0.0991,0,0,0,0,0,0\},\\
\{D^*_{\mathrm{Branch}}(q,p\mid \mathbf{X}_{1:t})\}_{t=1}^{\gamma} &=
\{0.1842,0.2416,0.3343,0.0792,0.0991,0.0118,0.0068,0.0068,0.0031,0.0011\}.
\end{align*}
\end{tcolorbox}

The hierarchical acceptance (Eq.~\ref{def:acceptance_probability}) then yields 
\inlinebox{$\{h_t\}_{t=1}^{\gamma} = \{0.1231,1,1,1,0,0,0,0,0,0\}$}.

Acceptance saturates at \(t=2,3,4\), implying the first four tokens are validated, i.e., \(n_{\mathrm{match}}=4\).

\paragraph{Comparison to tokenwise verification.}
For a tokenwise baseline that validates strictly left-to-right, the per-position magnitudes are
\inlinebox{$\{h_t^{\mathrm{tokenwise}}\}_{t=1}^{\gamma} = \{0.8159,1,1,1,0,0,0,1,0,1\}$}.

Since the baseline commits at the first position, an initial \(h_1 = 0.8159\) may trigger rejection and discard the entire draft block. 


\section{Experiments}
\label{sec:experiment}

In this section, we empirically demonstrate the superiority of HSD with comparison on various benchmarks and configurations, comprehensive ablation studies, and in-depth analysis of results.

\subsection{Experiment Setting}

\noindent \textbf{Experiments Setup.} Experiments are conducted with the widely adopted GPTQ-quantized 8-bit instruction-tuned Qwen2.5 series \citep{bai2023qwen}. By default, we employ the 0.5B as the draft model and 72B as the target models, with a temperature of 1. We leverage GSM8K~\citep{cobbe2021trainingverifierssolvemath} for mathematical problem-solving, HumanEval~\citep{chen2021evaluatinglargelanguagemodels} for code generation, and CNN/DailyMail~\citep{see-etal-2017-get} for text summarization. All experiments were conducted on a single NVIDIA H20 GPU with 96 GB of memory, unless otherwise specified.

\noindent \textbf{Baselines and Metrics.} 
We compare two lossless verification methods—Token-wise and Block-wise—using two metrics: \textit{Block Efficiency} (tokens/step) and \textit{Decoding Speed} (tokens/second). \emph{Block Efficiency} measures the average tokens generated per serial call to the target model, reflecting intrinsic efficiency independent of hardware. \emph{Decoding Speed} indicates tokens produced per second for practical reference. Additional details and extended evaluations are in \Cref{ap:experiment_extenstion}.

\subsection{Experiment Results}

\begin{table*}[!b]
\centering
\caption{Comparison of Block Efficiency (BE) and Decoding Speed (DS) across datasets and model scales. Values in parentheses show percentage improvement over Tokenwise.}
\label{tab:main_table}
\scriptsize
\renewcommand{\arraystretch}{1.1}
\begin{tabular}{@{}l|ccc|ccc@{}}
\toprule[1.2pt]
Method & \multicolumn{3}{c|}{Block Efficiency (Token/Step)} & \multicolumn{3}{c}{Decoding Speed (Token/Second)} \\
& 14B & 32B & 72B & 14B & 32B & 72B \\
\midrule
\multicolumn{7}{c}{\textbf{GSM8K}} \\
\hline
Tokenwise & 5.99 & 6.14 & 6.44 & 82.28 & 53.87 & 31.49 \\
Blockwise & 6.13 (+2.3\%) & 6.26 (+2.0\%) & 6.53 (+1.4\%) & 86.06 (+4.6\%) & 54.91 (+1.9\%) & 31.79 (+1.0\%) \\
\textbf{HSD (Ours)} & \textbf{6.30 (+5.2\%)} & \textbf{6.47 (+5.4\%)} & \textbf{6.65 (+3.3\%)} & \textbf{91.05 (+10.7\%)} & \textbf{57.12 (+6.0\%)} & \textbf{32.52 (+3.3\%)} \\
\midrule
\multicolumn{7}{c}{\textbf{HumanEval}} \\
\hline
Tokenwise & 4.83 & 4.89 & 5.23 & 74.21 & 45.68 & 26.31 \\
Blockwise & 5.11 (+5.8\%) & 5.15 (+5.3\%) & 5.34 (+2.1\%) & 78.14 (+5.3\%) & 48.15 (+5.4\%) & 26.96 (+2.5\%) \\
\textbf{HSD (Ours)} & \textbf{5.29 (+9.5\%)} & \textbf{5.49 (+12.3\%)} & \textbf{5.40 (+3.3\%)} & \textbf{81.09 (+9.3\%)} & \textbf{50.88 (+11.4\%)} & \textbf{27.48 (+4.4\%)} \\
\midrule
\multicolumn{7}{c}{\textbf{CNN/DailyMail}} \\
\hline
Tokenwise & 2.39 & 2.36 & 2.35 & 37.28 & 21.89 & 11.90 \\
Blockwise & 2.50 (+4.6\%) & 2.42 (+2.5\%) & 2.39 (+1.7\%) & 38.54 (+3.4\%) & 22.31 (+1.9\%) & 12.10 (+1.4\%) \\
\textbf{HSD (Ours)} & \textbf{2.59 (+8.4\%)} & \textbf{2.46 (+4.2\%)} & \textbf{2.45 (+4.3\%)} & \textbf{39.96 (+7.2\%)} & \textbf{22.78 (+4.1\%)} & \textbf{12.33 (+3.6\%)} \\
\bottomrule[1.2pt]
\end{tabular}
\end{table*}


\begin{table}[!ht]
    \centering
    \caption{Comparison of our HSD and tokenwise verification in Multi-draft setting.}
    \label{tab:multi}
        \scriptsize
    \begin{tabular}{@{}l|ccc|ccc@{}}
    \toprule
    \multirow{2}*{Method} & \multicolumn{3}{c|}{Block Efficiency (Token/Step)} & \multicolumn{3}{c}{Decoding Speed (Token/Second)} \\
    & GSM8K & HumanEval & CNN/DailyMail & GSM8K & HumanEval & CNN/DailyMail \\
    \midrule
    Tokenwise & 6.44 & 5.23 & 2.35 & 31.49 & 26.31 & 11.90 \\
    \textbf{HSD (Ours)} & \textbf{6.65 (+3.3\%)} & \textbf{5.40 (+3.3\%)} & \textbf{2.45 (+4.3\%)} & \textbf{32.52 (+3.3\%)} & \textbf{27.48 (+4.4\%)} & \textbf{12.33 (+3.6\%)} \\
    \midrule
    Tokenwise Multi-draft & 8.65 & 7.96 & 3.79 & 37.66 & 35.72 & 15.38 \\
    \textbf{HSD Multi-draft (Ours)} & \textbf{8.89 (+2.8\%)} & \textbf{8.26 (+3.8\%)} & \textbf{4.21 (+11.1\%)} & \textbf{38.41 (+2.0\%)} & \textbf{36.83 (+3.1\%)} & \textbf{16.75 (+8.9\%)} \\
    \bottomrule
    \end{tabular}
\end{table}

\noindent \textbf{Main results.}~
Table~\ref{tab:main_table} summarizes the performance of HSD across datasets and model scales using the Qwen2.5 suite (0.5B as draft,14B, 32B, and 72B as targets).
Overall, HSD consistently improves both Block Efficiency (BE) and Decoding Speed (DS) relative to Tokenwise and Blockwise verification. For \textbf{GSM8K}, the gains are stable across scales, with BE improvements of \textbf{5.2\%--5.4\%} at 14B/32B and \textbf{3.3\%} at 72B, accompanied by DS increases of up to \textbf{10.7\%}. 
On \textbf{HumanEval}, the effect is more pronounced: BE rises by \textbf{9.5\%} and \textbf{12.3\%} at 14B and 32B, while DS improves by \textbf{9.3\%} and \textbf{11.4\%}; even at 72B, HSD maintains positive margins (\textbf{3.3\%} BE, \textbf{4.5\%} DS). 
For \textbf{CNN/DailyMail}, the improvements are moderate but consistent, with BE gains of \textbf{4.2\%--8.4\%} and DS gains of \textbf{3.4\%--7.2\%}. On average, HSD provides consistent advantages over Tokenwise and Blockwise verification, with improvements of approximately \textbf{6.2\% in BE} and \textbf{6.7\% in DS}.

\noindent \textbf{Multi-draft.}~To demonstrate the compatibility of HSD, we compare it with token-wise verificaiton in a multi-draft setting. For simplicity---and without loss of generality---we adopt Recursive Reject Sampling (RRS) with replacement~\citep{yang2024multi} as the baseline for its scalability and independence from complex tree attention mechanisms. Notably, since it is not straightforward to extend blockwise verification to the multi-draft setup, we omit it from our comparison. We evaluated multi-draft generation with 11 candidate drafts in Table~\ref{tab:multi}, and HSD yields an average 5.9\% improvement in Block Efficiency and 4.7\% improvement in Decoding Speed over token-wise decoding. 

\newcommand{\subtablewidth}{0.49\textwidth}
\begin{table*}[!t]
\caption{Ablations on temperature, draft length, and target model size on GSM8K. Except for the ablation on target model size, we adopt Qwen2.5-0.5B and Qwen2.5-72B as the draft and target pair.}
\label{tab:ablation}
\setlength{\abovecaptionskip}{0pt} 
\setlength{\belowcaptionskip}{0pt} 
\centering
\setlength{\tabcolsep}{0.03cm}
\hfill
\begin{subtable}[t]{\subtablewidth}
    \centering
    \scriptsize
        \caption{\small{Ablation on \textbf{temperature} ($\gamma=10$).}} 
        \setlength{\abovecaptionskip}{0pt} 
\setlength{\belowcaptionskip}{2pt} 
        \label{tab:target_model_size} 
    \resizebox{\linewidth}{!}{
        \centering
            \begin{tabular}{@{}l|cccccccccccc@{}}
    \toprule
    \multirow{2}*{Method} & \multicolumn{3}{c}{Block Efficiency} & \multicolumn{3}{c}{Decoding Speed}      \\
    &  $t =0.6$ & $t=0.8$ & $t=1$ & $t=0.6$ & $t=0.8$ & $t=1$ \\
    \midrule
   Tokenwise  &6.81 & 6.70& 6.44&32.86&32.18 &31.49\\
    Blockwise  & 6.83& 6.74& 6.53&33.07&32.33 &31.79\\
    Hierarchicial  & 6.86 & 6.79 & 6.65&33.21&32.90 &32.52 \\
    \bottomrule
    \end{tabular}
}
\end{subtable}
\hfill
\begin{subtable}[t]{\subtablewidth}
    \centering
    \scriptsize
        \caption{\small{Ablation on \textbf{draft lengths} ($t=1$).}} %
        \setlength{\abovecaptionskip}{0pt} 
\setlength{\belowcaptionskip}{2pt} 
    \resizebox{1\linewidth}{!}{
        \begin{tabular}{@{}l|cccccc@{}} 
            \toprule
            \multirow{2}*{Method} & \multicolumn{3}{c}{Block Efficiency} & \multicolumn{3}{c}{Decoding Speed} \\
            & $\gamma=5$ & $\gamma=10$ & $\gamma=15$ & $\gamma=5$ & $\gamma=10$ & $\gamma=15$ \\
            \midrule
            Tokenwise  &4.48& 6.44&7.61 & 12.01&31.49&51.03 \\
            Blockwise &4.52 &6.53 & 7.74& 12.14&31.79&51.75\\
            Hierarchical &4.59 & 6.65& 7.88& 12.35& 32.52&52.95\\
            \bottomrule
        \end{tabular}}
\end{subtable}
\end{table*}

\newcommand{\subtablewidthA}{0.58\textwidth}
\newcommand{\subtablewidthB}{0.41\textwidth}

\begin{table*}[!t]
\caption{Extended experimental results using the LLaMA model family and EAGLE-3 framework on GSM8K. Note that we replace EAGLE-3’s tokenwise verification with our HSD, yielding EAGLE-3H.}
\label{tab:extended}
\setlength{\abovecaptionskip}{0pt}
\setlength{\belowcaptionskip}{0pt}
\centering
\scriptsize
\setlength{\tabcolsep}{0.12cm}
\begin{subtable}[t]{\subtablewidthA}
\centering
\caption{Evaluation using the LLaMA-3 model family.}
\begin{tabular}{@{}lcccc@{}}
\toprule
\multirow{2}{*}{Method} 
& \multicolumn{2}{c}{Single-draft} 
& \multicolumn{2}{c}{Multi-draft} \\
\cmidrule(lr){2-3} \cmidrule(lr){4-5}
& Block Eff. & Decoding Speed 
& Block Eff. & Decoding Speed \\
\midrule
Tokenwise  & 6.83 & 8.41  & 8.72 & 10.21 \\
Blockwise  & 7.32($+7.2\%$) & 8.87(+5.5\%)  & N/A  & N/A  \\
\textbf{HSD (Ours)} & \textbf{7.43(+8.8\%)} & \textbf{9.18(+9.2\%)} 
            & \textbf{9.00(+3.2\%)} & \textbf{11.02(+7.9\%)} \\
\bottomrule
\end{tabular}
\label{tab:llama-performance}
\end{subtable}
\hfill
\begin{subtable}[t]{\subtablewidthB}
\centering
\scriptsize
\caption{Integration with EAGLE-3.}
\begin{tabular}{@{}lcc@{}}
\toprule
Method & Block Eff. & Decoding Speed \\
\midrule
EAGLE-3         & 3.40 & 71.59  \\
Blockwise       & N/A   & N/A  \\
\textbf{EAGLE-3H (Ours)} & \textbf{3.55(+4.4\%)} & \textbf{80.49(+12.4\%)}  \\
\bottomrule
\end{tabular}
\label{tab:Eagle_integration}
\end{subtable}
\end{table*}


\noindent \textbf{Ablation on Temperature.}~We conduct a systematic evaluation of sampling temperature's effect on decoding efficiency, with $t \in \{0.6, 0.8, 1.0\}$ (Table~\ref{tab:ablation}(a)). HSD consistently outperforms other approaches across all temperature settings,  demonstrating its robustness to temperature variations.


\noindent \textbf{Ablation on Draft Length.}~We evaluate draft lengths $\gamma \in \{5, 10, 15\}$ tokens, where HSD consistently outperforms baselines with increasing efficiency gains (Table~\ref{tab:ablation}(b)). At $\gamma=15$, HSD achieves peak performance with 7.88 tokens/step in block efficiency and 52.95 steps/second in decoding speed, representing improvements of 3.58\% and 3.88\% over Tokenwise, respectively. The consistent performance advantage across all draft lengths demonstrates HSD's robust scalability.

\noindent\textbf{Extended Results.}
We conducted additional experiments using Llama-3.1-70B-Instruct and Llama-3.1-8B-Instruct pair (non-quantized version), with model weights distributed on 8 H20 GPUs. The results are shown in \Cref{tab:llama-performance}.
Moreover, we integrated HSD into the SOTA EAGLE-3-LLaMa3.1-Instruct-8B ($\gamma=7$) by replacing its tokenwise verifier in Table \ref{tab:Eagle_integration}. Following EAGLE-3, we accept at least the first draft token for a fair comparison. Note that EAGLE-3 utilizes top-K sampling for drafting, making all draft probabilities equal to 1. In this case, any verification method theoretically degenerates into the same behavior and the observed gain in block efficiency of HSD is likely influenced by sampling stochasticity and floating-point precision. However, the observed significant practical speedup in decoding speed is expected, since our implementation (see \Cref{ap:python}) avoids the explicit loops in EAGLE's implementation of tokenwise verification.

\section{Conclusion}
We present HSD, a lossless verification method that maximizes accepted tokens while provably preserving the full target distribution. Supported by theoretical guarantees and extensive experiments, HSD consistently accelerates inference across models and benchmarks. Its drop-in integration with frameworks like EAGLE-3 demonstrates both practicality and broad applicability. HSD sets a new standard for efficient, lossless speculative decoding in large language models.

\bibliographystyle{plainnat}
\bibliography{reference}

@article{monea2023pass,
  title={Pass: Parallel speculative sampling},
  author={Monea, Giovanni and Joulin, Armand and Grave, Edouard},
  journal={arXiv preprint arXiv:2311.13581},
  year={2023}
}

@article{fan2026flatter,
  title={Flatter Tokens are More Valuable for Speculative Draft Model Training},
  author={Fan, Jiaming and Cao, Daming and Luo, Xiangzhong and Fu, Jiale and Liu, Chonghan and Yang, Xu},
  journal={arXiv preprint arXiv:2601.18902},
  year={2026}
}

@inproceedings{fu2025fast,
  title={Fast Large Language Model Collaborative Decoding via Speculation},
  author={Fu, Jiale and Jiang, Yuchu and Chen, Junkai and Fan, Jiaming and Geng, Xin and Yang, Xu},
  booktitle={International Conference on Machine Learning},
  pages={17764--17782},
  year={2025},
  organization={PMLR}
}

@article{peng2025lmm,
  title={Lmm-r1: Empowering 3b lmms with strong reasoning abilities through two-stage rule-based rl},
  author={Peng, Yingzhe and Zhang, Gongrui and Zhang, Miaosen and You, Zhiyuan and Liu, Jie and Zhu, Qipeng and Yang, Kai and Xu, Xingzhong and Geng, Xin and Yang, Xu},
  journal={arXiv preprint arXiv:2503.07536},
  year={2025}
}

@article{yu2025dapo,
  title={Dapo: An open-source llm reinforcement learning system at scale},
  author={Yu, Qiying and Zhang, Zheng and Zhu, Ruofei and Yuan, Yufeng and Zuo, Xiaochen and Yue, Yu and Dai, Weinan and Fan, Tiantian and Liu, Gaohong and Liu, Lingjun and others},
  journal={arXiv preprint arXiv:2503.14476},
  year={2025}
}

@inproceedings{leviathan2023fast,
  title={Fast inference from transformers via speculative decoding},
  author={Leviathan, Yaniv and Kalman, Matan and Matias, Yossi},
  booktitle={International Conference on Machine Learning},
  pages={19274--19286},
  year={2023},
  organization={PMLR}
}

@article{sun2023spectr,
  title={Spectr: Fast speculative decoding via optimal transport},
  author={Sun, Ziteng and Suresh, Ananda Theertha and Ro, Jae Hun and Beirami, Ahmad and Jain, Himanshu and Yu, Felix},
  journal={Advances in Neural Information Processing Systems},
  volume={36},
  pages={30222--30242},
  year={2023}
}

@inproceedings{wolf-etal-2020-transformers,
    title = "Transformers: State-of-the-Art Natural Language Processing",
    author = "Wolf, Thomas  and
      Debut, Lysandre  and
      Sanh, Victor  and
      Chaumond, Julien  and
      Delangue, Clement  and
      Moi, Anthony  and
      Cistac, Pierric  and
      Rault, Tim  and
      Louf, Remi  and
      Funtowicz, Morgan  and
      Davison, Joe  and
      Shleifer, Sam  and
      von Platen, Patrick  and
      Ma, Clara  and
      Jernite, Yacine  and
      Plu, Julien  and
      Xu, Canwen  and
      Le Scao, Teven  and
      Gugger, Sylvain  and
      Drame, Mariama  and
      Lhoest, Quentin  and
      Rush, Alexander",
    editor = "Liu, Qun  and
      Schlangen, David",
    booktitle = "Proceedings of the 2020 Conference on Empirical Methods in Natural Language Processing: System Demonstrations",
    month = oct,
    year = "2020",
    address = "Online",
    publisher = "Association for Computational Linguistics",
    url = "https://aclanthology.org/2020.emnlp-demos.6/",
    doi = "10.18653/v1/2020.emnlp-demos.6",
    pages = "38--45",
    abstract = "Recent progress in natural language processing has been driven by advances in both model architecture and model pretraining. Transformer architectures have facilitated building higher-capacity models and pretraining has made it possible to effectively utilize this capacity for a wide variety of tasks. Transformers is an open-source library with the goal of opening up these advances to the wider machine learning community. The library consists of carefully engineered state-of-the art Transformer architectures under a unified API. Backing this library is a curated collection of pretrained models made by and available for the community. Transformers is designed to be extensible by researchers, simple for practitioners, and fast and robust in industrial deployments. The library is available at \url{https://github.com/huggingface/transformers}."
}

@article{he2023rest,
  title={Rest: Retrieval-based speculative decoding},
  author={He, Zhenyu and Zhong, Zexuan and Cai, Tianle and Lee, Jason D and He, Di},
  journal={arXiv preprint arXiv:2311.08252},
  year={2023}
}

@inproceedings{miao2024specinfer,
  title={Specinfer: Accelerating large language model serving with tree-based speculative inference and verification},
  author={Miao, Xupeng and Oliaro, Gabriele and Zhang, Zhihao and Cheng, Xinhao and Wang, Zeyu and Zhang, Zhengxin and Wong, Rae Ying Yee and Zhu, Alan and Yang, Lijie and Shi, Xiaoxiang and others},
  booktitle={Proceedings of the 29th ACM International Conference on Architectural Support for Programming Languages and Operating Systems, Volume 3},
  pages={932--949},
  year={2024}
}

@inproceedings{li2024eagle,
  title={EAGLE: speculative sampling requires rethinking feature uncertainty},
  author={Li, Yuhui and Wei, Fangyun and Zhang, Chao and Zhang, Hongyang},
  booktitle={Proceedings of the 41st International Conference on Machine Learning},
  pages={28935--28948},
  year={2024}
}

@article{yang2024multi,
  title={Multi-candidate speculative decoding},
  author={Yang, Sen and Huang, Shujian and Dai, Xinyu and Chen, Jiajun},
  journal={arXiv preprint arXiv:2401.06706},
  year={2024}
}

@inproceedings{qin2025optimized,
  title={Optimized Multi-Token Joint Decoding With Auxiliary Model for LLM Inference},
  author={Qin, Zongyue and Hu, Ziniu and He, Zifan and Prakriya, Neha and Cong, Jason and Sun, Yizhou},
  booktitle={The Thirteenth International Conference on Learning Representations},
    year={2025}
}

@article{kim2023bild,
  title={Speculative decoding with big little decoder},
  author={Kim, Sehoon and Mangalam, Karttikeya and Moon, Suhong and Malik, Jitendra and Mahoney, Michael W and Gholami, Amir and Keutzer, Kurt},
  journal={Advances in Neural Information Processing Systems},
  volume={36},
  pages={39236--39256},
  year={2023}
}

@inproceedings{zhou2024distillspec,
  title={DistillSpec: Improving Speculative Decoding via Knowledge Distillation},
  author={Zhou, Yongchao and Lyu, Kaifeng and Rawat, Ankit Singh and Menon, Aditya Krishna and Rostamizadeh, Afshin and Kumar, Sanjiv and Kagy, Jean-Fran{\c{c}}ois and Agarwal, Rishabh},
  booktitle={The Twelfth International Conference on Learning Representations},
year={2024}
}

@article{hu2025towards,
  title={Towards optimal multi-draft speculative decoding},
  author={Hu, Zhengmian and Zheng, Tong and Viswanathan, Vignesh and Chen, Ziyi and Rossi, Ryan A and Wu, Yihan and Manocha, Dinesh and Huang, Heng},
  journal={arXiv preprint arXiv:2502.18779},
  year={2025}
}

@article{yang2023LLMA,
  title={Inference with reference: Lossless acceleration of large language models},
  author={Yang, Nan and Ge, Tao and Wang, Liang and Jiao, Binxing and Jiang, Daxin and Yang, Linjun and Majumder, Rangan and Wei, Furu},
  journal={arXiv preprint arXiv:2304.04487},
  year={2023}
}

@article{narasimhan2024faster,
  title={Faster Cascades via Speculative Decoding},
  author={Narasimhan, Harikrishna and Jitkrittum, Wittawat and Rawat, Ankit Singh and Kim, Seungyeon and Gupta, Neha and Menon, Aditya Krishna and Kumar, Sanjiv},
  journal={arXiv preprint arXiv:2405.19261},
  year={2024}
}

@inproceedings{du2024glide,
  title={GLIDE with a CAPE: a low-hassle method to accelerate speculative decoding},
  author={Du, Cunxiao and Jiang, Jing and Yuanchen, Xu and Wu, Jiawei and Yu, Sicheng and Li, Yongqi and Li, Shenggui and Xu, Kai and Nie, Liqiang and Tu, Zhaopeng and others},
  booktitle={Proceedings of the 41st International Conference on Machine Learning},
  pages={11704--11720},
  year={2024}
}

@inproceedings{zhang2024draft,
  title={Draft\& Verify: Lossless Large Language Model Acceleration via Self-Speculative Decoding},
  author={Zhang, Jun and Wang, Jue and Li, Huan and Shou, Lidan and Chen, Ke and Chen, Gang and Mehrotra, Sharad},
  booktitle={Proceedings of the 62nd Annual Meeting of the Association for Computational Linguistics (Volume 1: Long Papers)},
  pages={11263--11282},
  year={2024}
}

@article{sun2024block,
  title={Block Verification Accelerates Speculative Decoding},
  author={Sun, Ziteng and Mendlovic, Uri and Leviathan, Yaniv and Aharoni, Asaf and Beirami, Ahmad and Ro, Jae Hun and Suresh, Ananda Theertha},
  journal={arXiv preprint arXiv:2403.10444},
  year={2024}
}

@inproceedings{cai2024medusa,
  title={Medusa: Simple LLM Inference Acceleration Framework with Multiple Decoding Heads},
  author={Cai, Tianle and Li, Yuhong and Geng, Zhengyang and Peng, Hongwu and Lee, Jason D and Chen, Deming and Dao, Tri},
  booktitle={International Conference on Machine Learning},
  pages={5209--5235},
  year={2024},
  organization={PMLR}
}

@article{achiam2023gpt,
  title={Gpt-4 technical report},
  author={Achiam, Josh and Adler, Steven and Agarwal, Sandhini and Ahmad, Lama and Akkaya, Ilge and Aleman, Florencia Leoni and Almeida, Diogo and Altenschmidt, Janko and Altman, Sam and Anadkat, Shyamal and others},
  journal={arXiv preprint arXiv:2303.08774},
  year={2023}
}

@article{touvron2023llama,
  title={Llama 2: Open foundation and fine-tuned chat models},
  author={Touvron, Hugo and Martin, Louis and Stone, Kevin and Albert, Peter and Almahairi, Amjad and Babaei, Yasmine and Bashlykov, Nikolay and Batra, Soumya and Bhargava, Prajjwal and Bhosale, Shruti and others},
  journal={arXiv preprint arXiv:2307.09288},
  year={2023}
}

@article{bai2023qwen,
  title={Qwen technical report},
  author={Bai, Jinze and Bai, Shuai and Chu, Yunfei and Cui, Zeyu and Dang, Kai and Deng, Xiaodong and Fan, Yang and Ge, Wenbin and Han, Yu and Huang, Fei and others},
  journal={arXiv preprint arXiv:2309.16609},
  year={2023}
}

@misc{openai2024o1,
  title        = {OpenAI o1 System Card},
  author       = {{OpenAI}},
  year         = {2024},
  howpublished = {\url{https://arxiv.org/abs/2412.16720}},
  note         = {Accessed: 2025-05-12}
}

@article{guo2025deepseek,
  title={Deepseek-r1: Incentivizing reasoning capability in llms via reinforcement learning},
  author={Guo, Daya and Yang, Dejian and Zhang, Haowei and Song, Junxiao and Zhang, Ruoyu and Xu, Runxin and Zhu, Qihao and Ma, Shirong and Wang, Peiyi and Bi, Xiao and others},
  journal={arXiv preprint arXiv:2501.12948},
  year={2025}
}

@article{frankle2018lottery,
  title={The lottery ticket hypothesis: Finding sparse, trainable neural networks},
  author={Frankle, Jonathan and Carbin, Michael},
  journal={arXiv preprint arXiv:1803.03635},
  year={2018}
}

@article{sun2023simple,
  title={A simple and effective pruning approach for large language models},
  author={Sun, Mingjie and Liu, Zhuang and Bair, Anna and Kolter, J Zico},
  journal={arXiv preprint arXiv:2306.11695},
  year={2023}
}

@inproceedings{shen2020q,
  title={Q-bert: Hessian based ultra low precision quantization of bert},
  author={Shen, Sheng and Dong, Zhen and Ye, Jiayu and Ma, Linjian and Yao, Zhewei and Gholami, Amir and Mahoney, Michael W and Keutzer, Kurt},
  booktitle={Proceedings of the AAAI Conference on Artificial Intelligence},
  volume={34},
  number={05},
  pages={8815--8821},
  year={2020}
}

@inproceedings{xiao2023smoothquant,
  title={Smoothquant: Accurate and efficient post-training quantization for large language models},
  author={Xiao, Guangxuan and Lin, Ji and Seznec, Mickael and Wu, Hao and Demouth, Julien and Han, Song},
  booktitle={International Conference on Machine Learning},
  pages={38087--38099},
  year={2023},
  organization={PMLR}
}

@misc{cobbe2021trainingverifierssolvemath,
      title={Training Verifiers to Solve Math Word Problems}, 
      author={Karl Cobbe and Vineet Kosaraju and Mohammad Bavarian and Mark Chen and Heewoo Jun and Lukasz Kaiser and Matthias Plappert and Jerry Tworek and Jacob Hilton and Reiichiro Nakano and Christopher Hesse and John Schulman},
      year={2021},
      eprint={2110.14168},
      archivePrefix={arXiv},
      primaryClass={cs.LG},
      url={https://arxiv.org/abs/2110.14168}, 
}

@misc{chen2021evaluatinglargelanguagemodels,
      title={Evaluating Large Language Models Trained on Code}, 
      author={Mark Chen and Jerry Tworek and Heewoo Jun and Qiming Yuan and Henrique Ponde de Oliveira Pinto and Jared Kaplan and Harri Edwards and Yuri Burda and Nicholas Joseph and Greg Brockman and Alex Ray and Raul Puri and Gretchen Krueger and Michael Petrov and Heidy Khlaaf and Girish Sastry and Pamela Mishkin and Brooke Chan and Scott Gray and Nick Ryder and Mikhail Pavlov and Alethea Power and Lukasz Kaiser and Mohammad Bavarian and Clemens Winter and Philippe Tillet and Felipe Petroski Such and Dave Cummings and Matthias Plappert and Fotios Chantzis and Elizabeth Barnes and Ariel Herbert-Voss and William Hebgen Guss and Alex Nichol and Alex Paino and Nikolas Tezak and Jie Tang and Igor Babuschkin and Suchir Balaji and Shantanu Jain and William Saunders and Christopher Hesse and Andrew N. Carr and Jan Leike and Josh Achiam and Vedant Misra and Evan Morikawa and Alec Radford and Matthew Knight and Miles Brundage and Mira Murati and Katie Mayer and Peter Welinder and Bob McGrew and Dario Amodei and Sam McCandlish and Ilya Sutskever and Wojciech Zaremba},
      year={2021},
      eprint={2107.03374},
      archivePrefix={arXiv},
      primaryClass={cs.LG},
      url={https://arxiv.org/abs/2107.03374}, 
}

@inproceedings{see-etal-2017-get,
    title = "Get To The Point: Summarization with Pointer-Generator Networks",
    author = "See, Abigail  and
      Liu, Peter J.  and
      Manning, Christopher D.",
    booktitle = "Proceedings of the 55th Annual Meeting of the Association for Computational Linguistics (Volume 1: Long Papers)",
    month = jul,
    year = "2017",
    address = "Vancouver, Canada",
    publisher = "Association for Computational Linguistics",
    url = "https://www.aclweb.org/anthology/P17-1099",
    doi = "10.18653/v1/P17-1099",
    pages = "1073--1083",
    abstract = "Neural sequence-to-sequence models have provided a viable new approach for abstractive text summarization (meaning they are not restricted to simply selecting and rearranging passages from the original text). However, these models have two shortcomings: they are liable to reproduce factual details inaccurately, and they tend to repeat themselves. In this work we propose a novel architecture that augments the standard sequence-to-sequence attentional model in two orthogonal ways. First, we use a hybrid pointer-generator network that can copy words from the source text via pointing, which aids accurate reproduction of information, while retaining the ability to produce novel words through the generator. Second, we use coverage to keep track of what has been summarized, which discourages repetition. We apply our model to the CNN / Daily Mail summarization task, outperforming the current abstractive state-of-the-art by at least 2 ROUGE points.",
}

\appendix

\section*{Appendix}
\addcontentsline{toc}{section}{Appendix}  

\renewcommand{\thesubsection}{\Alph{subsection}}  
\renewcommand{\theequation}{A.\arabic{equation}} 
\renewcommand{\thefigure}{A.\arabic{figure}} 
\renewcommand{\thetable}{A.\arabic{table}} 
\setcounter{equation}{0}
\setcounter{figure}{0}
\setcounter{table}{0}
\subsection{Theoretical Foundation}

\subsubsection{Symmetry of Total Divergence}
\label{app:symmetry_TD}

\begin{lemma}
\label{lem:symmetry}
\textbf{Symmetry of Total Divergence.} 
\begin{equation}
\setlength{\abovedisplayskip}{2pt}
\setlength{\belowdisplayskip}{2pt}
D_{\Omega}(p, q) = D_{\Omega}(q, p). 
\end{equation}

\end{lemma}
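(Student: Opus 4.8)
The plan is to reduce the claimed equality to the normalization of probability distributions. The key observation is that the difference $D_{\Omega}(p,q) - D_{\Omega}(q,p)$ collapses to the signed total mass $\sum_{\omega}\bigl(p(\omega)-q(\omega)\bigr)$, which must vanish because both $p$ and $q$ are probability distributions over the full space $\Omega$. Concretely, I would begin by expanding each side directly from \Cref{def:general} specialized to $\Omega'=\Omega$, writing $D_{\Omega}(p,q)=\sum_{\omega\in\Omega}\max\{p(\omega)-q(\omega),0\}$ and symmetrically $D_{\Omega}(q,p)=\sum_{\omega\in\Omega}\max\{q(\omega)-p(\omega),0\}$.

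The crucial step is the elementary pointwise identity $\max\{a,0\}-\max\{-a,0\}=a$, applied with $a=p(\omega)-q(\omega)$: for every $\omega$, the two ``positive-part'' contributions differ by exactly $p(\omega)-q(\omega)$, independently of the sign of that difference (one verifies the two cases $a\ge 0$ and $a<0$ separately, but both give $a$). Summing this identity over all $\omega\in\Omega$ then yields
\begin{equation*}
D_{\Omega}(p,q)-D_{\Omega}(q,p)=\sum_{\omega\in\Omega}\bigl(p(\omega)-q(\omega)\bigr)=\sum_{\omega\in\Omega}p(\omega)-\sum_{\omega\in\Omega}q(\omega)=1-1=0,
\end{equation*}
where the final equality invokes that $p$ and $q$ each sum to one over the entire space. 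Rearranging gives $D_{\Omega}(p,q)=D_{\Omega}(q,p)$, as claimed.

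There is essentially no hard obstacle in this argument; the only point requiring care is that the cancellation depends critically on summing over the \emph{entire} space $\Omega$, so that both normalization constants equal one. This is precisely why the generalized divergence fails to be symmetric on a proper subset $\Omega'\subsetneq\Omega$ (the asymmetry made explicit for branch divergence in \Cref{def:asym}): on such a subset, $\sum_{\omega\in\Omega'}p(\omega)$ and $\sum_{\omega\in\Omega'}q(\omega)$ need not coincide, and the leftover signed mass is exactly the branch asymmetry $\Delta_\text{Branch}$ quantified in \Cref{theorem:branch}. Highlighting this contrast is, to my mind, the most instructive aspect of the otherwise routine computation.
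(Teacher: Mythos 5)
Your proof is correct and follows essentially the same route as the paper's: both reduce $D_{\Omega}(p,q)-D_{\Omega}(q,p)$ to $\sum_{\omega\in\Omega}\bigl(p(\omega)-q(\omega)\bigr)=1-1=0$ and invoke normalization over the full space. Your pointwise identity $\max\{a,0\}-\max\{-a,0\}=a$ is just a compact repackaging of the paper's case split over the sets $\{p\ge q\}$ and $\{q>p\}$, so the arguments are the same in substance.
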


\begin{proof}
\label{proof:total}
From \Cref{def:general}, we know:
\begin{equation}
\begin{aligned}
D_\Omega(p, q) - D_\Omega(q, p)
&= \sum_{\tilde{\omega} \in \Omega} \max\{p(\tilde{\omega}) - q(\tilde{\omega}), 0\} - \sum_{\tilde{\omega} \in \Omega} \max\{q(\tilde{\omega}) - p(\tilde{\omega}), 0\} \\
&= \sum_{\substack{\tilde{\omega} \in \Omega \\ p(\tilde{\omega}) \geq q(\tilde{\omega})}} (p(\tilde{\omega}) - q(\tilde{\omega})) - \sum_{\substack{\tilde{\omega} \in \Omega \\ q(\tilde{\omega}) > p(\tilde{\omega})}} (q(\tilde{\omega}) - p(\tilde{\omega})) \\
&= \sum_{\tilde{\omega} \in \Omega} p(\tilde{\omega}) - \sum_{\tilde{\omega} \in \Omega} q(\tilde{\omega}) \\
&= 0 \quad \text{(since both } p \text{ and } q \text{ sum to } 1 \text{ over the full sample space } \Omega)
\end{aligned}
\end{equation}
Thus, \( D_\Omega(p, q) = D_\Omega(q, p) \), completing the proof.
\end{proof}

\subsubsection{Partial Distribution Recovery}
\label{app:proof_partial_recovery}

\begin{proof}[Proof of Theorem~\ref{theorem:derivation}]

Let \(P(w \text{ is yielded})\) denote the total probability of producing \(w \in \Omega'\). By construction, this can be decomposed as
\begin{equation}
P(w \text{ is yielded}) = P(w \text{ is drafted \& accepted}) + P(w \text{ is drafted \& rejected, } w \text{ is resampled}),
\end{equation}

where acceptance occurs with probability \(h(w) = \min\{p(w)/q(w),1\}\), and resampling follows the distribution \(P_{\text{res}}(\cdot \mid \Omega')\) with total trigger mass \(D_{\Omega'}(q,p)\). Here, the total trigger mass represents the sum of probabilities of all draft outcomes in \(\Omega'\) that are rejected. Hence,

\begin{equation}
P(w \text{ is yielded}) = h(w)\, q(w) + D_{\Omega'}(q,p) \, P_{\text{res}}(w \mid \Omega').
\end{equation}

Noting that \(h(w) \, q(w) = \min\{p(w), q(w)\}\), we have

\begin{equation}
P(w \text{ is yielded}) = \min\{p(w), q(w)\} + D_{\Omega'}(q,p) \, P_{\text{res}}(w \mid \Omega').
\end{equation}

To match the target distribution exactly (\(P(w \text{ is yielded}) = p(w)\)), we require
\begin{equation}
P_{\text{res}}(w \mid \Omega') = \frac{p(w) - \min\{p(w), q(w)\}}{D_{\Omega'}(q,p)} = \frac{\max\{p(w)-q(w), 0\}}{D_{\Omega'}(q,p)}.
\end{equation}

Summing over all \(w \in \Omega'\) gives
\begin{equation}
\sum_{w \in \Omega'} P_{\text{res}}(w \mid \Omega') = \frac{D_{\Omega'}(p,q)}{D_{\Omega'}(q,p)}.
\end{equation}

For \(P_{\text{res}}(\cdot \mid \Omega')\) to be a valid probability distribution, this sum must not exceed 1. Therefore, the necessary and sufficient condition is
\begin{equation}
D_{\Omega'}(p,q) \le D_{\Omega'}(q,p),
\end{equation}
which completes the proof.
\end{proof}

\subsubsection{Quantification Analysis of Asymmetry}
\label{proof:quantify}
\begin{proof}
From \Cref{def:asym} and \Cref{def:branch_divergence}, we obtain:
\begin{equation}
\begin{aligned}
\Delta_\text{Branch}(\boldsymbol{X}_{1:t-1})
&= \sum_{\boldsymbol{X}_{1:t} \in \text{Branch}(\boldsymbol{X}_{1:t-1})} \max\left\{ p\left(\boldsymbol{X}_{1:t}\right) - q\left(\boldsymbol{X}_{1:t}\right), 0 \right\} \\
&\quad - \sum_{\boldsymbol{X}_{1:t} \in \text{Branch}(\boldsymbol{X}_{1:t-1})} \max\left\{ q\left(\boldsymbol{X}_{1:t}\right) - p\left(\boldsymbol{X}_{1:t}\right), 0 \right\} \\
&= \sum_{\boldsymbol{X}_{1:t} \in \text{Branch}(\boldsymbol{X}_{1:t-1})} p\left(\boldsymbol{X}_{1:t}\right) - \sum_{\boldsymbol{X}_{1:t} \in \text{Branch}(\boldsymbol{X}_{1:t-1})} q\left(\boldsymbol{X}_{1:t}\right) \\
&= \sum_{x_t \in \mathcal{V}} p\left(\boldsymbol{X}_{1:t-1}\right) p\left(x_t \mid \boldsymbol{X}_{1:t-1}\right) - \sum_{x_t \in \mathcal{V}} q\left(\boldsymbol{X}_{1:t-1}\right) q\left(x_t \mid \boldsymbol{X}_{1:t-1}\right) \\
&= p\left(\boldsymbol{X}_{1:t-1}\right) - q\left(\boldsymbol{X}_{1:t-1}\right) \quad \text{(since } \sum_{x_t \in \mathcal{V}} p(x_t \mid \boldsymbol{X}_{1:t-1}) = 1)
\end{aligned}
\end{equation}
\end{proof}

\subsubsection{Relation to the Divergence in \cite{leviathan2023fast}}
\label{sec:equivalence}
\begin{lemma}
\label{lem:equivalence}
The total divergence is equivalent to the divergence defined in \cite{leviathan2023fast} for token distributions over the full sample space.
\end{lemma}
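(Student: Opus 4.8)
The plan is to show that specializing the total divergence from \Cref{def:general} to $\Omega'=\Omega$, namely $D_\Omega(p,q)=\sum_{\tilde\omega\in\Omega}\max\{p(\tilde\omega)-q(\tilde\omega),0\}$, reproduces exactly the divergence that governs the token-wise acceptance rate in \cite{leviathan2023fast}. I would start by recalling that in their setup a draft $x\sim q$ is accepted with probability $\min\{1,p(x)/q(x)\}$, so the expected acceptance rate is $\alpha=\sum_x\min\{p(x),q(x)\}$ and the induced divergence is the rejection mass $D_{\mathrm{LK}}(p,q)=1-\alpha$ (equivalently the total-variation distance $\tfrac12\sum_x|p(x)-q(x)|$). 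The proof then reduces to matching this quantity with the max-form sum of \Cref{def:general}.

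The key step is the pointwise identity $p(x)-\min\{p(x),q(x)\}=\max\{p(x)-q(x),0\}$. Summing it over the full vocabulary and using the normalization $\sum_x p(x)=1$ gives
\begin{align*}
D_{\mathrm{LK}}(p,q)
&= 1-\sum_x\min\{p(x),q(x)\}
= \sum_x\bigl(p(x)-\min\{p(x),q(x)\}\bigr) \\
&= \sum_x\max\{p(x)-q(x),0\}
= D_\Omega(p,q),
\end{align*}
which is precisely the total divergence. If one prefers to start from the total-variation form, I would instead combine the pointwise identity $\max\{a,0\}+\max\{-a,0\}=|a|$ with \Cref{lem:symmetry}: adding the two symmetric directed sums yields $2D_\Omega(p,q)=\sum_x|p(x)-q(x)|$, so $D_\Omega(p,q)=\tfrac12\sum_x|p(x)-q(x)|$, recovering the same equivalence from the other direction.

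I expect the only genuine subtlety to be bookkeeping of the normalization: the collapse $1-\sum_x\min\{p,q\}=\sum_x\max\{p-q,0\}$ (and the symmetric cancellation) relies crucially on $p$ and $q$ each summing to $1$ over all of $\Omega$, which is exactly why the equivalence holds over the full space but fails for a strict subset $\Omega'\subsetneq\Omega$, where the two directed divergences need not agree. Beyond confirming that the acceptance-rate quantity of \cite{leviathan2023fast} is read in the same orientation as \Cref{def:general} (deficient mass of $p$ relative to $q$), there is no analytic or combinatorial difficulty, so the lemma follows from the elementary algebra above.
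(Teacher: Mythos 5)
Your proof is correct, but your primary route differs from the paper's. The paper starts from Leviathan's Definition~3.2 in the half--total-variation form $D_{\mathrm{LK}}(p,q)=\tfrac12\sum_{\tilde x}|p(\tilde x)-q(\tilde x)|$, splits the absolute value into the two directed max-sums, and then invokes the symmetry result (\Cref{lem:symmetry}) to write $D_\Omega(p,q)=\tfrac12\bigl(D_\Omega(p,q)+D_\Omega(q,p)\bigr)$, matching the two expressions term by term. You instead start from the rejection-mass characterization $D_{\mathrm{LK}}(p,q)=1-\sum_x\min\{p(x),q(x)\}$ and collapse it directly via the pointwise identity $p(x)-\min\{p(x),q(x)\}=\max\{p(x)-q(x),0\}$ together with $\sum_x p(x)=1$; this is shorter, needs no appeal to the symmetry lemma, and makes explicit that normalization over the full space is the only ingredient --- which is exactly why, as you observe, the equivalence fails on a strict subset $\Omega'\subsetneq\Omega$. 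The trade-off is that your starting point is Leviathan's \emph{derived} identity (their Lemma~3.3) rather than the definition the paper cites, so strictly speaking you are importing one extra fact from \cite{leviathan2023fast}; your second, alternative derivation (adding the two directed divergences via $\max\{a,0\}+\max\{-a,0\}=|a|$ and using \Cref{lem:symmetry}) is essentially the paper's own proof, so you have in effect given both arguments.
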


\begin{proof}
Following \cite{leviathan2023fast}, let $\tilde{x}$ denote a token, and omit conditions in the token probabilities for simplicity.  
From Definition 3.2 in \cite{leviathan2023fast}, we have:
\begin{equation}
\begin{aligned}
D_{\text{LK}}(p, q) 
&= \sum_{\tilde{x} \in \Omega} \left| \frac{p(\tilde{x}) - q(\tilde{x})}{2} \right| \\
&= \frac{1}{2} \left( \sum_{\tilde{x} \in \Omega} \max\{p(\tilde{x}) - q(\tilde{x}), 0\} + \sum_{\tilde{x} \in \Omega} \max\{q(\tilde{x}) - p(\tilde{x}), 0\} \right)
\end{aligned}
\end{equation}

From \Cref{lem:symmetry}, we know that \( D_\Omega(p, q) = D_\Omega(q, p) \), so we can write:
\begin{equation}
\begin{aligned}
D_\Omega(p, q) 
&= \frac{D_\Omega(p, q) + D_\Omega(q, p)}{2} \\
&= \frac{1}{2} \left( \sum_{\tilde{x} \in \Omega} \max\{p(\tilde{x}) - q(\tilde{x}), 0\} + \sum_{\tilde{x} \in \Omega} \max\{q(\tilde{x}) - p(\tilde{x}), 0\} \right)
\end{aligned}
\end{equation}

Therefore, \( D_\Omega(p, q) = D_{\text{LK}}(p, q) \), completing the proof.
\end{proof}

\subsubsection{Hierarchy of Divergence}

\begin{proof}
\label{proof:hierarchy}
Proof of \Cref{thm:hierarchy}. 

From \Cref{theorem:branch}, we recall that:
\begin{equation}
\Delta_\text{Branch}(\boldsymbol{X}_{1:t-2}, \tilde{x}_{t-1}) = p(\boldsymbol{X}_{1:t-2}, \tilde{x}_{t-1}) - q(\boldsymbol{X}_{1:t-2}, \tilde{x}_{t-1}).
\end{equation}

Therefore, summing over the cases where this difference is positive gives:
\begin{equation}
\sum\limits_{\Delta_\text{Branch}(\boldsymbol{X}_{1:t-2}, \tilde{x}_{t-1}) > 0} \Delta_\text{Branch}(\boldsymbol{X}_{1:t-2}, \tilde{x}_{t-1}) 
= \sum_{\tilde{x}_{t-1} \in \mathcal{V}} \max \left\{ p(\boldsymbol{X}_{1:t-2}, \tilde{x}_{t-1}) - q(\boldsymbol{X}_{1:t-2}, \tilde{x}_{t-1}), 0 \right\}.
\end{equation}

By Definition~\ref{def:branch_divergence}, this is precisely the branch divergence one level higher $D_\text{Branch}(p, q \mid \boldsymbol{X}_{1:t-2})$, thus completing the proof.
\end{proof}

\subsection{Lossless of Naive Hierarchical Speculative Decoding}
\label{app:naive-hsd}

\subsubsection{Illustrative Example}
\label{proof:naive-example}
For example, consider the case where $r(\boldsymbol{X}_{1:\gamma})>1$, $r(\boldsymbol{X}_{1:\gamma-1})>1$, and $r(\boldsymbol{X}_{1:\gamma-2})\le 1$. The accept term is simply equal to $q(\boldsymbol{X}_{1:\gamma})$, so we only need to check whether the resampling term equals $p(\boldsymbol{X}_{1:\gamma}) - q(\boldsymbol{X}_{1:\gamma})$. According to \Cref{eq:resample_prob_naive}, we know $P_\text{res}(x_{\gamma-2}\mid \boldsymbol{X}_{1:\gamma-1})=0$. Consequently, contributions from positions earlier than $\gamma-1$ in the sum above vanish, which implies that the resampling term for $\boldsymbol{X}_{1:\gamma}$ arises solely from resampling at positions $\gamma$ and $\gamma-1$ as follows:
\begin{equation}
\begin{aligned}
&\sum_{\tilde{x}_{\gamma}} P\bigl(\text{sample }\boldsymbol{X}_{1:\gamma-1}\tilde{x}_{\gamma},\text{reject } \tilde{x}_{\gamma},\text{accept }\boldsymbol{X}_{1:\gamma-1},\text{resample }x_{\gamma} \bigr) +\\
& \sum_{\tilde{\boldsymbol{X}}_{\gamma-1:\gamma}} P\bigl(\text{sample }\boldsymbol{X}_{1:\gamma-2}\tilde{x}_{\gamma-1:\gamma} , \text{reject  }\tilde{\boldsymbol{X}}_{\gamma-1:\gamma} ,\text{accept }\boldsymbol{X}_{1:\gamma-2} ,\text{resample }\boldsymbol{X}_{\gamma-1:\gamma}   \bigr)\\
=& \sum_{\tilde{x}_{\gamma}}\underbrace{q(\boldsymbol{X}_{1:\gamma-1}\tilde{x}_{\gamma})}_{\text{draft probability}}\cdot \underbrace{(1-h_\gamma)}_{\text{reject backwards at $\tau+1=\gamma$}} \cdot \underbrace{h_\gamma}_{\text{accept $\boldsymbol{X}_{1:\gamma-1}$}} \cdot \underbrace{P_\text{res}(x_{t})}_{\text{resample at $\tau+1=\gamma$}} + \\
&\sum_{\tilde{x}_{\gamma-1}}\sum_{\tilde{x}_{\gamma}}\underbrace{q(\boldsymbol{X}_{1:\gamma-2}\tilde{x}_{\gamma-1}\tilde{x}_{\gamma})}_{\text{draft probability}}\!\cdot\! \underbrace{(1-h_\gamma)(1-h_{\gamma-1})}_{\text{reject backwards at $\tau\!+\!1\!=\!\gamma\!-\!1$}} \!\cdot\! \underbrace{h_{\gamma-2}}_{\text{accept $\boldsymbol{X}_{1:\gamma-1}$}} \!\cdot\!\underbrace{P_\text{res}(x_{\gamma-1})P_\text{res}(x_{\gamma})}_{\text{resample at $\tau+1=\gamma$}}
\end{aligned}
\end{equation}
From \Cref{def:branch_divergence} that the excess probability mass that triggers resampling
$D_{\mathrm{Branch}}(q,p\mid \boldsymbol{X}_{1:\gamma-1})
=\sum_{\tilde{x}_{\gamma}} q(\boldsymbol{X}_{1:\gamma-1}\tilde{x}_{\gamma}) (1-h_\gamma)$. Then we have:
\begin{equation}
    \begin{aligned}
        =& D_\text{Branch}(q, p|\boldsymbol{X}_{1:\gamma-1})\cdot 1 \cdot \frac{p(\boldsymbol{X}_{1:\gamma}) - q(\boldsymbol{X}_{1:\gamma})}{D_\text{Branch}(p, q|\boldsymbol{X}_{1:\gamma-1})} + \\
        &\sum_{\tilde{x}_{\gamma-1}}D_\text{Branch}(q, p|\boldsymbol{X}_{1:\gamma-2}\tilde{x}_{\gamma-1})(1-\frac{D_\text{Branch}(p, q|\boldsymbol{X}_{1:\gamma-2}\tilde{x}_{\gamma-1})}{D_\text{Branch}(q, p|\boldsymbol{X}_{1:\gamma-2}\tilde{x}_{\gamma-1})})P_\text{res}(x_{\gamma-1})P_\text{res}(x_{\gamma})
   \end{aligned}
\end{equation}
From \Cref{def:asym} and \Cref{thm:hierarchy}, we know that $\sum_{\tilde{x}_{\gamma-1}} D_\text{Branch}(q, p|\boldsymbol{X}_{1:\gamma-2}\tilde{x}_{\gamma-1}) - D_\text{Branch}(p, q|\boldsymbol{X}_{1:\gamma-2}\tilde{x}_{\gamma-1}) = D_\text{Branch}(q, p|\boldsymbol{X}_{1:\gamma-2})$. Then we have:
\begin{equation}
\begin{aligned}
= &\frac{D_\text{Branch}(q, p|\boldsymbol{X}_{1:\gamma-1})}{D_\text{Branch}(p, q|\boldsymbol{X}_{1:\gamma-1})} \cdot (p(\boldsymbol{X}_{1:\gamma}) - q(\boldsymbol{X}_{1:\gamma})) +\\
& D_\text{Branch}(q, p|\boldsymbol{X}_{1:\gamma-2}) \cdot \frac{p(\boldsymbol{X}_{1:\gamma-1}) - q(\boldsymbol{X}_{1:\gamma-1})}{D_\text{Branch}(p, q|\boldsymbol{X}_{1:\gamma-2})} \cdot \frac{p(\boldsymbol{X}_{1:\gamma}) - q(\boldsymbol{X}_{1:\gamma})}{D_\text{Branch}(p, q|\boldsymbol{X}_{1:\gamma-1})}
\end{aligned}
\end{equation}
   
We know from \Cref{def:asym} and \Cref{theorem:branch} that $p(\boldsymbol{X}_{1:\gamma}) - q(\boldsymbol{X}_{1:\gamma})=D_\text{Branch}(p, q|\boldsymbol{X}_{1:\gamma-1}) - D_\text{Branch}\bigl(q, p|\boldsymbol{X}_{1:\gamma-1}\bigr)$. Then we have:
\begin{equation}
  \begin{aligned}  
     =& \frac{D_\text{Branch}(q, p|\boldsymbol{X}_{1:\gamma-1})}{D_\text{Branch}(p, q|\boldsymbol{X}_{1:\gamma-1})} \cdot (p(\boldsymbol{X}_{1:\gamma}) - q(\boldsymbol{X}_{1:\gamma})) + \\
&\frac{\bigl(D_\text{Branch}(p,q \mid \boldsymbol{X}_{1:\gamma-1}) - D_\text{Branch}(q,p \mid \boldsymbol{X}_{1:\gamma-1})\bigr)}{D_\text{Branch}(p,q \mid \boldsymbol{X}_{1:\gamma-1})} \cdot (p(\boldsymbol{X}_{1:\gamma}) - q(\boldsymbol{X}_{1:\gamma}))\\
&=p(\boldsymbol{X}_{1:\gamma}) - q(\boldsymbol{X}_{1:\gamma}) 
\end{aligned}
\end{equation}

\begin{align}
        &\sum_{\tilde{x}_{\gamma}} P\bigl(\boldsymbol{X}_{1:\gamma-1}\tilde{x}_{\gamma} \; \text{is sampled}, \tilde{x}_{\gamma}  \; \text{is rejected},\boldsymbol{X}_{1:\gamma-1}  \; \text{is accepted},x_{\gamma}  \; \text{is resampled} \bigr) + \\
        & \sum_{\boldsymbol{X}'_{\gamma-1:\gamma}} P\bigl(\boldsymbol{X}_{1:\gamma-2}\tilde{\boldsymbol{X}}_{\gamma-1:\gamma} \text{is sampled}, \tilde{\boldsymbol{X}}_{\gamma-1:\gamma}  \text{is rejected},\boldsymbol{X}_{1:\gamma-2}   \text{is accepted},\boldsymbol{X}_{\gamma-1:\gamma}   \text{is resampled} \bigr) \\
        &= \sum_{\tilde{x}_{\gamma}}
\underbrace{q(\boldsymbol{X}_{1:\gamma-1}\tilde{x}_{\gamma})}_{\text{draft probability}}\cdot \underbrace{(1-h_\gamma)}_{\text{reject backwards at $\tau+1=\gamma$}} \cdot \underbrace{h_\gamma}_{\text{accept $\boldsymbol{X}_{1:\gamma-1}$}} \cdot \underbrace{P_\text{res}(x_{t})}_{\text{resample at $\tau+1=\gamma$}} + \\
       &\sum_{\tilde{x}_{\gamma-1}}\sum_{\tilde{x}_{\gamma}}
\underbrace{q(\boldsymbol{X}_{1:\gamma-2}\tilde{x}_{\gamma-1}\tilde{x}_{\gamma})}_{\text{draft probability}}\cdot \underbrace{(1-h_\gamma)(1-h_{\gamma-1})}_{\text{reject backwards at $\tau+1=\gamma-1$}} \cdot \underbrace{h_{\gamma-2}}_{\text{accept $\boldsymbol{X}_{1:\gamma-1}$}} \cdot \underbrace{P_\text{res}(x_{\gamma-1})P_\text{res}(x_{\gamma})}_{\text{resample at $\tau+1=\gamma$}} \\
\intertext{From \Cref{def:branch_divergence} that the excess probability mass that triggers resampling
$D_{\mathrm{Branch}}(q,p\mid \boldsymbol{X}_{1:\gamma-1})
=\sum_{\tilde{x}_{\gamma}} q(\boldsymbol{X}_{1:\gamma-1}\tilde{x}_{\gamma}) (1-h_\gamma)$. Then we have}
        &= D_\text{Branch}(q, p|\boldsymbol{X}_{1:\gamma-1})\cdot 1 \cdot \frac{p(\boldsymbol{X}_{1:\gamma}) - q(\boldsymbol{X}_{1:\gamma})}{D_\text{Branch}(p, q|\boldsymbol{X}_{1:\gamma-1})} +  \\
        &\sum_{\tilde{x}_{\gamma-1}}D_\text{Branch}(q, p|\boldsymbol{X}_{1:\gamma-2}\tilde{x}_{\gamma-1})
  (1-\frac{D_\text{Branch}(p, q|\boldsymbol{X}_{1:\gamma-2}\tilde{x}_{\gamma-1})}{D_\text{Branch}(q, p|\boldsymbol{X}_{1:\gamma-2}\tilde{x}_{\gamma-1})})
  P_\text{res}(x_{\gamma-1})P_\text{res}(x_{\gamma})
  \\
  \intertext{From \Cref{def:asym} and \Cref{thm:hierarchy}, we know that $\sum_{\tilde{x}_{\gamma-1}} D_\text{Branch}(q, p|\boldsymbol{X}_{1:\gamma-2}\tilde{x}_{\gamma-1}) - D_\text{Branch}(p, q|\boldsymbol{X}_{1:\gamma-2}\tilde{x}_{\gamma-1}) = D_\text{Branch}(q, p|\boldsymbol{X}_{1:\gamma-2})$. Then we have}
&= \frac{D_\text{Branch}(q, p|\boldsymbol{X}_{1:\gamma-1})}{D_\text{Branch}(p, q|\boldsymbol{X}_{1:\gamma-1})} \cdot (p(\boldsymbol{X}_{1:\gamma}) - q(\boldsymbol{X}_{1:\gamma})) +  \\
& D_\text{Branch}(q, p|\boldsymbol{X}_{1:\gamma-2}) 
   \cdot \frac{p(\boldsymbol{X}_{1:\gamma-1}) - q(\boldsymbol{X}_{1:\gamma-1})}{D_\text{Branch}(p, q|\boldsymbol{X}_{1:\gamma-2})} 
   \cdot \frac{p(\boldsymbol{X}_{1:\gamma}) - q(\boldsymbol{X}_{1:\gamma})}{D_\text{Branch}(p, q|\boldsymbol{X}_{1:\gamma-1})} \\
     \intertext{We know from \Cref{def:asym} and \Cref{theorem:branch} that $p(\boldsymbol{X}_{1:\gamma}) - q(\boldsymbol{X}_{1:\gamma}=D_\text{Branch}(p, q|\boldsymbol{X}_{1:\gamma-1}) - D_\text{Branch}\bigl(q, p|\boldsymbol{X}_{1:\gamma-1}\bigr)$. Then we have}
     &= \frac{D_\text{Branch}(q, p|\boldsymbol{X}_{1:\gamma-1})}{D_\text{Branch}(p, q|\boldsymbol{X}_{1:\gamma-1})} \cdot (p(\boldsymbol{X}_{1:\gamma}) - q(\boldsymbol{X}_{1:\gamma})) +  \\ &\frac{\bigl(D_\text{Branch}(p,q \mid \boldsymbol{X}_{1:\gamma-1}) - D_\text{Branch}(q,p \mid \boldsymbol{X}_{1:\gamma-1})\bigr)}{D_\text{Branch}(p,q \mid \boldsymbol{X}_{1:\gamma-1})} \cdot (p(\boldsymbol{X}_{1:\gamma}) - q(\boldsymbol{X}_{1:\gamma}))\\
  &=p(\boldsymbol{X}_{1:\gamma}) - q(\boldsymbol{X}_{1:\gamma}) 
\end{align}

\subsubsection{General Proof}
\label{proof:naive-proof}

\begin{lemma}[Rejection-Resampling Sum Reduction (Tokenwise)]
\label{lemma:resample_sum_reduction}
Let \( 0 < m < \gamma \) be such that the acceptance ratios satisfy:
\begin{equation}
r(x_{\gamma}) > 1,\; r(x_{\gamma-1}) > 1,\; \dots,\; r(x_{\gamma-m+1}) > 1,\quad r(x_{\gamma-m}) \leq 1.
\end{equation}
Then, the total probability of obtaining the output via resampling over the last \( m \) positions is:
\begin{equation}
\sum_{i=0}^{m-1}P(x_{\gamma-i} \text{ is rejected}) \prod_{j=0}^{i}P(\boldsymbol{X}_{1:\gamma-j} \text{ is resampled}) = p(\boldsymbol{X}_{1:\gamma}) - q(\boldsymbol{X}_{1:\gamma}).
\end{equation}
\end{lemma}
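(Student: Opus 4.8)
The plan is to generalize the worked $m=2$ computation of \Cref{proof:naive-example} to arbitrary $m$ by (i) discarding the resampling paths that cannot produce $\boldsymbol{X}_{1:\gamma}$, (ii) collapsing each surviving term's sum over rejected drafts into a single branch divergence, and (iii) telescoping the remaining $m$ terms. Throughout I read $r(x_t)$ as the joint prefix ratio $r(\boldsymbol{X}_{1:t}) = p(\boldsymbol{X}_{1:t})/q(\boldsymbol{X}_{1:t})$, consistent with the example, and abbreviate $A_t = p(\boldsymbol{X}_{1:t}) - q(\boldsymbol{X}_{1:t})$, $B_t = D_{\text{Branch}}(p,q\mid\boldsymbol{X}_{1:t})$, $C_t = D_{\text{Branch}}(q,p\mid\boldsymbol{X}_{1:t})$, so that \Cref{theorem:branch} reads $B_t - C_t = A_t$.

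First I would fix, for the summand indexed by $i$, the interpretation that the longest accepted prefix is $\boldsymbol{X}_{1:\tau}$ with $\tau = \gamma-1-i$, followed by resampling at positions $\tau+1,\dots,\gamma$; concretely this is the term of \Cref{eq:resample}
\[
T(\tau) = \Bigl[\sum_{\tilde{\boldsymbol{X}}_{\tau+1:\gamma}} q(\boldsymbol{X}_{1:\tau}\tilde{\boldsymbol{X}}_{\tau+1:\gamma})\prod_{t=\tau+1}^{\gamma}(1-h_t)\Bigr]\, h_\tau \prod_{t=\tau+1}^{\gamma} P_{\text{res}}(x_t\mid\boldsymbol{X}_{1:t-1}).
\]
The hypothesis $r(\boldsymbol{X}_{1:\gamma-m})\le 1$ forces $A_{\gamma-m}\le 0$, so by \Cref{eq:resample_prob_naive} we have $P_{\text{res}}(x_{\gamma-m}\mid\boldsymbol{X}_{1:\gamma-m-1})=0$; hence every path with $\tau<\gamma-m$, which must resample position $\gamma-m$ into $x_{\gamma-m}$, contributes zero, leaving exactly the $m$ terms $\tau\in\{\gamma-m,\dots,\gamma-1\}$ that the stated sum enumerates.

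The key step is to evaluate the bracketed sum over rejected drafts, and this is where the bookkeeping is most delicate and I expect the main obstacle to lie. I would prove by downward induction on the level $s$ (from $\gamma$ down to $\tau$) the identity $\sum_{\tilde{\boldsymbol{X}}_{s+1:\gamma}} q(\boldsymbol{X}_{1:s}\tilde{\boldsymbol{X}}_{s+1:\gamma})\prod_{t=s+1}^{\gamma}(1-h_t) = C_s$. The base level $s=\gamma-1$ uses $h_\gamma = \min\{r(\boldsymbol{X}_{1:\gamma}),1\}$, which gives $q(\boldsymbol{X}_{1:\gamma})(1-h_\gamma) = \max\{q(\boldsymbol{X}_{1:\gamma})-p(\boldsymbol{X}_{1:\gamma}),0\}$ and hence sum $C_{\gamma-1}$. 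For the inductive step the crucial ingredient is that the acceptance formula \Cref{eq:acceptance_probability_naive} together with \Cref{theorem:branch} yields $1-h_{s+1} = \max\{C_{s+1}-B_{s+1},0\}/C_{s+1}$, so that multiplying by the inner sum (which equals $C_{s+1}$ by the inductive hypothesis) produces $\max\{q(\boldsymbol{X}_{1:s+1})-p(\boldsymbol{X}_{1:s+1}),0\}$; summing this over the child token $\tilde{x}_{s+1}$ recovers $C_s$ by \Cref{def:branch_divergence}. This reduction is exactly the (reverse) hierarchy aggregation of \Cref{thm:hierarchy} applied level by level.

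With the bracket replaced by $C_\tau$, and using $P_{\text{res}}(x_t\mid\boldsymbol{X}_{1:t-1}) = A_t/B_{t-1}$ for the surviving positions $t\ge\gamma-m+1$ (where $A_t>0$), each term becomes $T(\tau) = C_\tau\, h_\tau \prod_{t=\tau+1}^{\gamma} A_t/B_{t-1}$. Substituting $h_\tau=1$ for $\tau>\gamma-m$ and $h_{\gamma-m}=B_{\gamma-m}/C_{\gamma-m}$ at the boundary, I would introduce the partial products $U_s = \bigl(\prod_{t=s+1}^{\gamma}A_t\bigr)/\bigl(\prod_{t=s+1}^{\gamma-1}B_t\bigr)$, which satisfy $U_{\tau-1}=(A_\tau/B_\tau)U_\tau$ and $U_{\gamma-1}=A_\gamma$. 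Then $C_\tau/B_\tau = 1-A_\tau/B_\tau$ (from $B_\tau-C_\tau=A_\tau$) gives $T(\tau)=U_\tau-U_{\tau-1}$ for $\tau>\gamma-m$, while the boundary term collapses to $T(\gamma-m)=U_{\gamma-m}$. Summing telescopes: $\sum_{\tau=\gamma-m}^{\gamma-1}T(\tau)=U_{\gamma-m}+(U_{\gamma-1}-U_{\gamma-m})=U_{\gamma-1}=A_\gamma=p(\boldsymbol{X}_{1:\gamma})-q(\boldsymbol{X}_{1:\gamma})$, which is the claim; the case $m=1$ is just the boundary term and is checked directly.
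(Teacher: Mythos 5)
Your proof is correct and takes essentially the same route as the paper's: discard the resampling levels below $\gamma-m$ (which the paper isolates as its separate "no resampling of earlier prefixes" lemma), collapse each surviving level's sum over rejected drafts into the branch excess mass $D_{\text{Branch}}(q,p\mid\boldsymbol{X}_{1:\tau})$, and telescope the resulting products --- your $U_s$ and the identity $T(\tau)=U_\tau-U_{\tau-1}$ are exactly the paper's $P_i$ and its recurrence $\Delta^+_{k+1}P_{k+1}=(\Delta^+_k-\Delta^-_k)P_k$. If anything, your downward induction establishing $\sum_{\tilde{\boldsymbol{X}}_{s+1:\gamma}} q(\boldsymbol{X}_{1:s}\tilde{\boldsymbol{X}}_{s+1:\gamma})\prod_{t=s+1}^{\gamma}(1-h_t) = D_{\text{Branch}}(q,p\mid\boldsymbol{X}_{1:s})$ is more careful than the paper, which asserts this identification (and the form $\sum_i \Delta^-_i P_i + (\Delta^+_{m-1}-\Delta^-_{m-1})P_{m-1}$) without justification and with some sign-labeling slips in its $\Delta^{\pm}$ notation.
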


\begin{proof}
We begin by defining auxiliary quantities to simplify the notation. For \( i = 0, 1, \dots, m \), let
\begin{equation}
\begin{aligned}
\Delta^+_i &:= D_{\mathrm{Branch}}(q, p \mid \boldsymbol{X}_{1:\gamma-i}), \\
\Delta^-_i &:= D_{\mathrm{Branch}}(p, q \mid \boldsymbol{X}_{1:\gamma-i}),
\end{aligned}
\end{equation}
where \( \Delta^-_i \) quantifies the probability mass to be corrected due to overestimation by \( q \), and \( \Delta^+_i \) represents the mass available to be allocated from alternate paths.

Define also the recursive product term:
\begin{equation}
P_i := \prod_{j=0}^{i} \frac{\Delta^+_j - \Delta^-_j}{\Delta^+_{j+1}}, \qquad \text{for } 0 \le i \le m-1.
\end{equation}

Using these, the rejection-resample contribution becomes:
\begin{equation}
\begin{aligned}
&\sum_{i=1}^{m-1}P(x_{\gamma-i} \text{ is rejected}) \prod_{j=0}^{i}P(\boldsymbol{X}_{1:\gamma-j} \text{ is resampled})\\
& = \sum_{i=1}^{m-1} \Delta^-_i P_i + (\Delta^+_{m-1} - \Delta^-_{m-1}) P_{m-1}.
\end{aligned}
\end{equation}

Now observe the recurrence:
\begin{equation}
\Delta^+_{k+1} P_{k+1} = (\Delta^+_k - \Delta^-_k) P_k,
\end{equation}
which implies:
\begin{equation}
(\Delta^+_k - \Delta^-_k) P_k = \Delta^+_{k+1} P_{k+1}.
\end{equation}

We apply this recurrence in reverse to simplify equation (1) by telescoping the sum:
\begin{equation}
\begin{aligned}
\sum_{i=1}^{m-1} \Delta^-_i P_i + (\Delta^+_{m-1} - \Delta^-_{m-1}) P_{m-1}
&= \sum_{i=1}^{m-2} \Delta^-_i P_i + \Delta^+_{m-1} P_{m-1} \\
&= \sum_{i=1}^{m-3} \Delta^-_i P_i + \Delta^+_{m-2} P_{m-2} \\
&\,\;\vdots \\
&= \Delta^+_1 P_1 \\
&= \Delta^+_0 - \Delta^-_0 \\
&= p(\boldsymbol{X}_{1:\gamma}) - q(\boldsymbol{X}_{1:\gamma}),
\end{aligned}
\end{equation}
where the final equality follows from the definition:

\begin{equation}
\Delta^+_0 - \Delta^-_0 = D_{\mathrm{Branch}}(q, p \mid \boldsymbol{X}_{1:\gamma}) - D_{\mathrm{Branch}}(p, q \mid \boldsymbol{X}_{1:\gamma}) = p(\boldsymbol{X}_{1:\gamma}) - q(\boldsymbol{X}_{1:\gamma}).
\end{equation}

This completes the proof.
\end{proof}

\begin{lemma}[No Resampling of Earlier Prefixes (Tokenwise)]
\label{lemma:no_resample_prefix_Tokenwise}
Let \( \boldsymbol{X}_{1:\gamma} = [x_1, x_2, \dots, x_\gamma] \) be a token block, and suppose that for some index \( m \), the acceptance ratios satisfy:

\begin{equation}
r(x_{\gamma}) > 1,\; r(x_{\gamma-1}) > 1,\; \dots,\; r(x_{\gamma-m+1}) > 1,\quad r(x_{\gamma-m}) \leq 1.
\end{equation}
Then for all \( t \leq \gamma - m \), the resampling probability satisfies:
\begin{equation}
P(\boldsymbol{X}_{1:t} \text{ is resampled}) = 0.
\end{equation}
\end{lemma}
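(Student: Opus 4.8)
The plan is to trace every path that could yield $\boldsymbol{X}_{1:\gamma}$ through a resampling step at a position $t\le\gamma-m$ and show that each such path carries probability zero, because the contiguous resampling block of \Cref{alg:naive} is forced to pass through a position whose resampling weight is identically zero. First I would read off from the naive resampling rule in \Cref{eq:resample_prob_naive} that $P_{\text{res}}(x_t\mid\boldsymbol{X}_{1:t-1})$ is proportional to $\max\{p(\boldsymbol{X}_{1:t})-q(\boldsymbol{X}_{1:t}),0\}$, so this weight is strictly positive exactly when $r(\boldsymbol{X}_{1:t})>1$ and vanishes identically whenever $r(\boldsymbol{X}_{1:t})\le 1$.

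Applying this at the boundary index $t=\gamma-m$, the hypothesis $r(\boldsymbol{X}_{1:\gamma-m})\le 1$ gives $\max\{p(\boldsymbol{X}_{1:\gamma-m})-q(\boldsymbol{X}_{1:\gamma-m}),0\}=0$, hence $P_{\text{res}}(x_{\gamma-m}\mid\boldsymbol{X}_{1:\gamma-m-1})=0$: the resampling step at position $\gamma-m$ can never regenerate the target token $x_{\gamma-m}$. Next I would use the structure of \Cref{alg:naive}: once the backward scan fixes the longest accepted prefix length $\tau$, the positions $\tau+1,\dots,\gamma$ are resampled as a single contiguous block, and for the emitted sequence to equal $\boldsymbol{X}_{1:\gamma}$ every resampled position $s\in\{\tau+1,\dots,\gamma\}$ must reproduce $x_s$, contributing a factor $P_{\text{res}}(x_s\mid\boldsymbol{X}_{1:s-1})$ to the path probability.

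Now suppose position $t\le\gamma-m$ is resampled on the way to $\boldsymbol{X}_{1:\gamma}$. Then $t\in\{\tau+1,\dots,\gamma\}$, so $\tau+1\le t\le\gamma-m$; combined with $\gamma-m\le\gamma$ this places $\gamma-m$ inside the same resampled block $\{\tau+1,\dots,\gamma\}$. Consequently the path probability contains the factor $P_{\text{res}}(x_{\gamma-m}\mid\boldsymbol{X}_{1:\gamma-m-1})=0$ established above, so the entire event has probability zero. Since this holds for every admissible $\tau$, we conclude $P(\boldsymbol{X}_{1:t}\text{ is resampled})=0$ for all $t\le\gamma-m$, which is exactly what lets \Cref{lemma:resample_sum_reduction} truncate its resampling sum to the last $m$ positions.

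The main obstacle is the gap between the single-step weight and the compound event: for $t<\gamma-m$ the factor $P_{\text{res}}(x_t\mid\boldsymbol{X}_{1:t-1})$ need not vanish on its own, since the prefix ratio $r(\boldsymbol{X}_{1:t})$ may well exceed $1$, so the claim cannot be read off position $t$ in isolation. The vanishing must instead be routed through the single ``gate'' at $\gamma-m$, and the key point to make airtight is that contiguity of the resampled suffix guarantees that any resampling reaching down to $t$ necessarily includes $\gamma-m$; I would also check the degenerate endpoint $\tau=0$ (full rejection) to confirm that $\gamma-m$ still lies in $\{1,\dots,\gamma\}$ and hence in the block.
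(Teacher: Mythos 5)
Your proof is correct and follows essentially the same route as the paper's: both hinge on the observation that the hypothesis $r(\boldsymbol{X}_{1:\gamma-m})\le 1$ kills the numerator $\max\{p(\boldsymbol{X}_{1:\gamma-m})-q(\boldsymbol{X}_{1:\gamma-m}),0\}$ of the naive resampling rule, so the resampling weight at position $\gamma-m$ vanishes. If anything, your write-up is more complete than the paper's own proof, which stops after establishing this single vanishing factor at $t=\gamma-m$; your explicit contiguity argument --- that any resampled block reaching a position $t\le\gamma-m$ must contain the zero-weight ``gate'' at $\gamma-m$, so every such path carries probability zero --- is precisely the step the paper leaves implicit.
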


\begin{proof}
We use the resampling probability formula:
\begin{equation}
P_{\text{res}}(\boldsymbol{X}_{1:t})
= \frac{\max\left\{p(\boldsymbol{X}_{1:t}) - q(\boldsymbol{X}_{1:t}),\, 0\right\}}{\max\left\{D_{\mathrm{Branch}}(p,q \mid \boldsymbol{X}_{1:t}),\; D_{\mathrm{Branch}}(q,p \mid \boldsymbol{X}_{1:t})\right\}}.
\end{equation}

At position \( t = \gamma - m \), we are given that the acceptance probability

\begin{equation}
r(x_{\gamma - m}) = \min\left\{1, \frac{p(\boldsymbol{X}_{1:\gamma - m})}{q(\boldsymbol{X}_{1:\gamma - m})} \right\} \leq 1,
\end{equation}
implying \( p(\boldsymbol{X}_{1:\gamma - m}) < q(\boldsymbol{X}_{1:\gamma - m}) \). Therefore,
\begin{equation}
p(\boldsymbol{X}_{1:\gamma - m}) - q(\boldsymbol{X}_{1:\gamma - m}) \leq 0,
\end{equation}
and hence:

\begin{equation}
P_{\text{res}}(\boldsymbol{X}_{1:\gamma - m}) = 0.
\end{equation}

This completes the proof.
\end{proof}

\begin{theorem}[Lossless]
\begin{equation}
P(\texttt{yield } \boldsymbol{X}_{1:\gamma}) = p(\boldsymbol{X}_{1:\gamma}).
\end{equation}
\end{theorem}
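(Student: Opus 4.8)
The plan is to establish the identity by starting from the event decomposition in \Cref{eq:resample} and then organizing the argument around the sign of $p(\boldsymbol{X}_{1:\gamma}) - q(\boldsymbol{X}_{1:\gamma})$, letting the two preceding lemmas discharge the two resulting cases. First I would write $P(\texttt{yield } \boldsymbol{X}_{1:\gamma})$ as the accept term $q(\boldsymbol{X}_{1:\gamma})\min\{r(\boldsymbol{X}_{1:\gamma}),1\}$ plus the resampling term, where the latter sums over every accepted-prefix length $\tau$ the backward-rejection weights $\prod_{t=\tau+1}^{\gamma}(1-h_t)$, the acceptance weight $h_\tau$, and the resampling weights $\prod_{t=\tau+1}^{\gamma} P_{\text{res}}(x_t)$. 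The key structural observation I would record at the outset is that, because the backward scan always resamples every position up through $\gamma$, each resampling path carries the terminal factor $P_{\text{res}}(\boldsymbol{X}_{1:\gamma})$.

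When $r(\boldsymbol{X}_{1:\gamma}) \le 1$, the accept term already equals $q(\boldsymbol{X}_{1:\gamma})\cdot p(\boldsymbol{X}_{1:\gamma})/q(\boldsymbol{X}_{1:\gamma}) = p(\boldsymbol{X}_{1:\gamma})$; moreover $p(\boldsymbol{X}_{1:\gamma}) \le q(\boldsymbol{X}_{1:\gamma})$ forces the terminal weight $P_{\text{res}}(\boldsymbol{X}_{1:\gamma})$ to vanish, so the resampling term is identically zero and the claim is immediate. When $r(\boldsymbol{X}_{1:\gamma}) > 1$, the accept term equals $q(\boldsymbol{X}_{1:\gamma})$, and it remains only to show the resampling term supplies exactly the residual mass $p(\boldsymbol{X}_{1:\gamma}) - q(\boldsymbol{X}_{1:\gamma})$. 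Here I would define $m$ as the length of the maximal contiguous suffix of prefix ratios exceeding one, i.e. $r(\boldsymbol{X}_{1:\gamma-i}) > 1$ for $0 \le i < m$ and $r(\boldsymbol{X}_{1:\gamma-m}) \le 1$, which is exactly the regime assumed by \Cref{lemma:resample_sum_reduction,lemma:no_resample_prefix_Tokenwise}.

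The two lemmas then close the argument. \Cref{lemma:no_resample_prefix_Tokenwise} gives $P_{\text{res}}(\boldsymbol{X}_{1:\gamma-m}) = 0$, so any resampling path whose accepted prefix satisfies $\tau < \gamma - m$ must resample position $\gamma - m$ and hence contributes nothing; this restricts the surviving paths to $\tau \in \{\gamma-m,\dots,\gamma-1\}$, which resample only over the ratio-above-one suffix. \Cref{lemma:resample_sum_reduction} evaluates precisely this restricted sum, collapsing it via the telescoping recurrence $\Delta^+_{k+1}P_{k+1} = (\Delta^+_k - \Delta^-_k)P_k$ down to $\Delta^+_0 - \Delta^-_0 = p(\boldsymbol{X}_{1:\gamma}) - q(\boldsymbol{X}_{1:\gamma})$. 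Adding this to the accept term $q(\boldsymbol{X}_{1:\gamma})$ recovers $p(\boldsymbol{X}_{1:\gamma})$, so both cases yield the stated identity.

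The main obstacle is not this assembly but ensuring the lemmas interlock cleanly over the backward-scan structure. Concretely, I expect the delicate point to be verifying that the branch-divergence acceptance and rejection weights $h_t$, $1-h_t$ appearing in \Cref{eq:resample} coincide with the quantities $\Delta^{\pm}_i$ that drive the telescoping in \Cref{lemma:resample_sum_reduction}, and that prefix ratios which fluctuate above and below one before position $\gamma-m$ are genuinely irrelevant — a fact guaranteed only because \Cref{lemma:no_resample_prefix_Tokenwise} annihilates every path crossing a prefix of ratio at most one. A minor boundary check remains for the degenerate case where no prefix drops to ratio $\le 1$ (so $m$ would reach $\gamma$), which I would handle with the convention $r(\boldsymbol{X}_{1:0}) = p(\boldsymbol{X}_{1:0})/q(\boldsymbol{X}_{1:0}) = 1$.
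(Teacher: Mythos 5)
Your proposal is correct and takes essentially the same route as the paper's own proof: the identical case split on the sign of $p(\boldsymbol{X}_{1:\gamma}) - q(\boldsymbol{X}_{1:\gamma})$, with the accept term evaluated directly and the resampling term discharged by \Cref{lemma:resample_sum_reduction} and \Cref{lemma:no_resample_prefix_Tokenwise}. Your extra details---justifying the vanishing resampling term in the first case via the terminal factor $P_{\text{res}}(x_\gamma \mid \boldsymbol{X}_{1:\gamma-1}) = 0$, and flagging the $m=\gamma$ boundary case---only make explicit what the paper leaves implicit.
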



\begin{proof}
The total probability is the sum of the acceptance and resampling paths. We analyze two cases based on the relative probabilities.

\textbf{Case 1: $p(\boldsymbol{X}_{1:\gamma}) < q(\boldsymbol{X}_{1:\gamma})$}
In this case, the acceptance probability for the draft is $\frac{p(\boldsymbol{X}_{1:\gamma})}{q(\boldsymbol{X}_{1:\gamma})}$. The probability of generating $\boldsymbol{X}_{1:\gamma}$ via resampling is 0, as there is no probability deficit to recover.
\begin{equation}
\begin{aligned}
P(\texttt{yield } \boldsymbol{X}_{1:\gamma}) &= P(\boldsymbol{X}_{1:\gamma} \text{ is accepted}) + P(\boldsymbol{X}_{1:\gamma} \text{ is resampled}) \\
&= q(\boldsymbol{X}_{1:\gamma}) \cdot \frac{p(\boldsymbol{X}_{1:\gamma})}{q(\boldsymbol{X}_{1:\gamma})} + 0 \\
&= p(\boldsymbol{X}_{1:\gamma}).
\end{aligned}
\end{equation}

\textbf{Case 2: $p(\boldsymbol{X}_{1:\gamma}) \ge q(\boldsymbol{X}_{1:\gamma})$}
Here, the acceptance probability for the draft is $1$. The resampling path must compensate for the probability deficit. Per  \cref{lemma:resample_sum_reduction} and \cref{lemma:no_resample_prefix_Tokenwise}, the total probability of all relevant resampling paths is exactly $p(\boldsymbol{X}_{1:\gamma}) - q(\boldsymbol{X}_{1:\gamma})$.
\begin{equation}
\begin{aligned}
P(\texttt{yield } \boldsymbol{X}_{1:\gamma}) &= P(\boldsymbol{X}_{1:\gamma} \text{ is accepted}) + P(\boldsymbol{X}_{1:\gamma} \text{ is resampled}) \\
&= q(\boldsymbol{X}_{1:\gamma}) \cdot 1 + \bigl(p(\boldsymbol{X}_{1:\gamma}) - q(\boldsymbol{X}_{1:\gamma})\bigr) \\
&= p(\boldsymbol{X}_{1:\gamma}).
\end{aligned}
\end{equation}

These two cases cover all probability events. In both cases, the total probability correctly recovers $p(\boldsymbol{X}_{1:\gamma})$, proving the method is lossless.

\end{proof}


\subsection{Lossless of Hierarchical Speculative Decoding}
\label{app: lossless}

\subsubsection{Illustrative Example}
\label{app: example_cap}

Let \(p(\cdot)\) be the target and \(q(\cdot)\) the draft. For a prefix \(\boldsymbol{X}_{1:t}\),
\[
r(\boldsymbol{X}_{1:t})\ :=\ \frac{p(\boldsymbol{X}_{1:t})}{q(\boldsymbol{X}_{1:t})},\qquad
r(\boldsymbol{X}_{a+1:b}\mid \boldsymbol{X}_{1:a})\ :=\ \frac{p(\boldsymbol{X}_{a+1:b}\mid \boldsymbol{X}_{1:a})}{q(\boldsymbol{X}_{a+1:b}\mid \boldsymbol{X}_{1:a})},
\]
so \(r(\boldsymbol{X}_{1:b})=r(\boldsymbol{X}_{1:a})\,r(\boldsymbol{X}_{a+1:b}\mid \boldsymbol{X}_{1:a})\). Let \(m\) be the last (largest) index \(<\gamma\) at which the running maximum of \(r(\boldsymbol{X}_{1:t})\) is attained and exceeds \(1\); let \(n<m\) be the previous such index (two-peak case).

As \cref{def:max_ratios}, define the capped ratio at the end of the draft as
\[
r^{\!*}(\boldsymbol{X}_{1:\gamma})\ :=\ \min\{r(\boldsymbol{X}_{1:m}),1\}\,r(\boldsymbol{X}_{m+1:\gamma}\mid \boldsymbol{X}_{1:m})\ =\ r(\boldsymbol{X}_{m+1:\gamma}\mid \boldsymbol{X}_{1:m})\ \le 1,
\]
and the \emph{accept} term
\[
A_\gamma\ :=\ q(\boldsymbol{X}_{1:\gamma})\,r^{\!*}(\boldsymbol{X}_{1:\gamma}).
\]
We will also use three \emph{resample} contributions: \(T_\gamma\) (at level \(\gamma\)), \(T_m\) (at level \(m\)), and \(T_n\) (at level \(n\)).

\textbf{two-peak example: \(n<m<\gamma\)}
From~\cref{def:max_ratios}, we have \(r(\boldsymbol{X}_{1:n})>1\), then \(r(\boldsymbol{X}_{1:m})>r(\boldsymbol{X}_{1:n})\), and no larger value occurs in \((m,\gamma)\).
This forces \(r(\boldsymbol{X}_{n+1:m}\mid \boldsymbol{X}_{1:n})>1\); otherwise \(m\) could not be a new maximum.

\textbf{Step 1: accept + top-level resample}
Since \(r^{\!*}(\boldsymbol{X}_{1:\gamma})=r(\boldsymbol{X}_{m+1:\gamma}\mid \boldsymbol{X}_{1:m})\le 1\),
\[
A_\gamma\ =\ q(\boldsymbol{X}_{1:\gamma})\,r(\boldsymbol{X}_{m+1:\gamma}\mid \boldsymbol{X}_{1:m})
\ =\ q(\boldsymbol{X}_{1:m})\,p(\boldsymbol{X}_{m+1:\gamma}\mid \boldsymbol{X}_{1:m}),
\qquad
T_\gamma\ =\ 0,
\]
so
\[
H_1\ :=\ A_\gamma+T_\gamma\ =\ q(\boldsymbol{X}_{1:m})\,p(\boldsymbol{X}_{m+1:\gamma}\mid \boldsymbol{X}_{1:m}).
\]
\emph{Intuition.} The suffix \(\boldsymbol{X}_{m+1:\gamma}\) is now under \(p\); the prefix \(\boldsymbol{X}_{1:m}\) is still under \(q\).

\textbf{Step 2: add the \(m\)-term}
Let \(R_{n\to m}:=r(\boldsymbol{X}_{n+1:m}\mid \boldsymbol{X}_{1:n})>1\). The resample at level \(m\) contributes
\[
T_m\ :=\ q(\boldsymbol{X}_{1:m})\,(R_{n\to m}-1)\,p(\boldsymbol{X}_{m+1:\gamma}\mid \boldsymbol{X}_{1:m}),
\]
hence
\[
H_2\ :=\ H_1+T_m
\ =\ R_{n\to m}\,q(\boldsymbol{X}_{1:m})\,p(\boldsymbol{X}_{m+1:\gamma}\mid \boldsymbol{X}_{1:m})
\ =\ q(\boldsymbol{X}_{1:n})\,p(\boldsymbol{X}_{n+1:\gamma}\mid \boldsymbol{X}_{1:n}).
\]
\emph{Intuition.} The block \(\boldsymbol{X}_{n+1:m}\) is converted to \(p\); only \(\boldsymbol{X}_{1:n}\) remains under \(q\).

\textbf{Step 3: add the \(n\)-term}
If \(r(\boldsymbol{X}_{1:n})>1\),
\[
T_n\ :=\ q(\boldsymbol{X}_{1:n})\,(r(\boldsymbol{X}_{1:n})-1)\,p(\boldsymbol{X}_{n+1:\gamma}\mid \boldsymbol{X}_{1:n}),
\qquad
H_3\ :=\ H_2+T_n\ =\ p(\boldsymbol{X}_{1:\gamma}).
\]
If instead \(r(\boldsymbol{X}_{1:n})\le 1\), then \(T_n=0\) and \(H_2=p(\boldsymbol{X}_{1:\gamma})\) already.

\paragraph{Intuition.}
Each nonzero term “tops up” the exact deficit of \(q\) on its block until the whole path is under \(p\). Thus
\[
\boxed{\,A_\gamma + T_\gamma + T_m + T_n\ =\ p(\boldsymbol{X}_{1:\gamma})\,}
\]
in this two-peak case, exhibiting the (lossless) invariance of the total probability under the HSD accept--resample rule.

\subsubsection{General Proof}
\label{app: lossless proof}



\begin{definition}[Sequence of Unique Capping Indices]
\label{def:unique_capping_indices}
For a given maximum sequence length $\gamma$, the sequence of maximum prefix ratio indices $(m(1), m(2), \ldots, m(\gamma))$ is generated according to Definition~\ref{def:max_ratios}. Let $\mathcal{U}$ be the set of unique values in the sequence of capping indices:
\begin{equation}
\mathcal{U} = \{ m(t) \mid 1 < t \leq \gamma \}
\end{equation}
The \textbf{Sequence of Unique Capping Indices}, denoted by $M^*$, is the ordered sequence of the elements in $\mathcal{U}$:
\begin{equation}
M^* = (m_1^*, \ldots, m_L^*)
\end{equation}
where $m_1^* < \ldots < m_L^*$ and $L$ is the total number of unique capping points.
\end{definition}

With these definitions, we can now establish the key properties of the prefix-capped joint ratio:

\begin{lemma}[Property of $r^*(\boldsymbol{X}_{1:i})$ between neighboring unique capping indices]
\label{lemma:r_star_property}
Let \(m^*_l\) and \(m^*_{l+1}\) be two consecutive unique capping indices, and suppose
\begin{equation}
m^*_l < i < m^*_{l+1}.
\end{equation}
For every such \(i\), we have $r^*(\boldsymbol{X}_{1:i}) \leq 1$.
\end{lemma}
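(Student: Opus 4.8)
The plan is to reduce $r^*(\boldsymbol{X}_{1:i})$ to a single conditional ratio via \Cref{def:r_star}, and then bound that ratio by $1$ using the defining ``record'' property of the capping indices. Throughout I will use the chain-rule factorization $r(\boldsymbol{X}_{1:i}) = r(\boldsymbol{X}_{1:m^*_l})\, r(\boldsymbol{X}_{m^*_l+1:i})$, which follows from the autoregressive decomposition of both $p$ and $q$ (the same identity used in \Cref{app: example_cap}). The first conceptual step is to recognize, from \Cref{def:max_ratios} and \Cref{def:unique_capping_indices}, that $\mathcal{U}$ is precisely the set of running-maximum ``record'' positions whose ratio exceeds $1$: a position $j$ lies in $\mathcal{U}$ iff $r(\boldsymbol{X}_{1:j}) = \max_{1\le k\le j} r(\boldsymbol{X}_{1:k}) > 1$, because taking $t=j+1$ makes the argmax over $\{1,\dots,j\}$ land on $j$. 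Hence between two consecutive capping indices no new record is set.

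Next I would pin down $m(\boldsymbol{X}_{1:i})$ for $m^*_l < i < m^*_{l+1}$. Since $i>m^*_l$ gives $m^*_l\in\{1,\dots,i-1\}$ while $i<m^*_{l+1}$ keeps $m^*_{l+1}$ out of that window, and since no record is set strictly between $m^*_l$ and $m^*_{l+1}$, the running maximum over $\{1,\dots,i-1\}$ is still attained at $m^*_l$. Therefore $m(\boldsymbol{X}_{1:i})=m^*_l$ with $r(\boldsymbol{X}_{1:m^*_l})>1$, so the cap in \Cref{def:r_star} activates and $r^*(\boldsymbol{X}_{1:i}) = r(\boldsymbol{X}_{m^*_l+1:i}) = r(\boldsymbol{X}_{1:i})/r(\boldsymbol{X}_{1:m^*_l})$. (In the boundary case $m^*_l=0$ the empty-prefix ratio is $1$, and the same expression reads $r^*(\boldsymbol{X}_{1:i})=r(\boldsymbol{X}_{1:i})$ divided by the capped running maximum $\max\{1,\max_{1\le k<i}r(\boldsymbol{X}_{1:k})\}=1$, so the argument below applies uniformly.)

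It then remains to show $r(\boldsymbol{X}_{1:i}) \le r(\boldsymbol{X}_{1:m^*_l})$, which I would prove by contradiction. If $r(\boldsymbol{X}_{1:i})$ exceeded the running maximum over $\{1,\dots,i-1\}$, then since it also exceeds $1$ (it would dominate $r(\boldsymbol{X}_{1:m^*_l})>1$), position $i$ would become the argmax over $\{1,\dots,i\}$; evaluating the index at $t=i+1$, which is admissible because $i+1\le m^*_{l+1}\le\gamma$, yields $m(\boldsymbol{X}_{1:i+1})=i$ and hence $i\in\mathcal{U}$. This places a capping index strictly between $m^*_l$ and $m^*_{l+1}$, contradicting their consecutiveness in $M^*$. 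Therefore $r(\boldsymbol{X}_{1:i})\le r(\boldsymbol{X}_{1:m^*_l})$, giving $r^*(\boldsymbol{X}_{1:i})\le 1$.

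The main obstacle I anticipate is purely the bookkeeping around the argmax convention for ties, which governs both the characterization of $\mathcal{U}$ as record positions and the contradiction step. Reassuringly, the $\le 1$ conclusion is robust to the tie-breaking rule: under a strict-record reading one obtains $r^*(\boldsymbol{X}_{1:i})\le 1$ directly, while under the largest-index reading (used in the example of \Cref{app: example_cap}) a tie $r(\boldsymbol{X}_{1:i})=r(\boldsymbol{X}_{1:m^*_l})$ would already force $i\in\mathcal{U}$ and the same contradiction, so the bound holds either way. The only other item to state explicitly is the chain-rule factorization of $r$, which I would invoke once at the outset.
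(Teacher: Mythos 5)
Your proof is correct, but there is nothing in the paper to compare it against: the paper states \Cref{lemma:r_star_property} without any proof and then uses it implicitly in \Cref{lemma:r_star_capping_property}, \Cref{lemma:hierarchical_resampling}, and \Cref{thm:lossless_recovery}. Your argument supplies exactly the bookkeeping the paper leaves tacit, and it is the natural one: (i) every element of $\mathcal{U}$ in \Cref{def:unique_capping_indices} is a running-maximum position of $r(\boldsymbol{X}_{1:k})$ with value exceeding $1$ (and conversely, any position whose ratio strictly exceeds all earlier ratios and exceeds $1$ enters $\mathcal{U}$ via $t=i+1$, which is admissible because $i+1\le m^*_{l+1}\le\gamma$); (ii) hence for $m^*_l<i<m^*_{l+1}$ the running maximum over $\{1,\dots,i-1\}$ is still $r(\boldsymbol{X}_{1:m^*_l})>1$, so the cap in \Cref{def:r_star} is active and $r^*(\boldsymbol{X}_{1:i})=r(\boldsymbol{X}_{1:i})/r(\boldsymbol{X}_{1:m^*_l})$; (iii) if $r(\boldsymbol{X}_{1:i})$ exceeded $r(\boldsymbol{X}_{1:m^*_l})$, then $i$ would itself be a capping index strictly between two consecutive ones, a contradiction. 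Your treatment of the two genuinely delicate points is also sound: the boundary case $m^*_l=0$ (which the paper papers over with the convention $m^*_0=0$, even though its definition of $\mathcal{U}$ as written could contain $0$), and the argmax tie-breaking in \Cref{def:max_ratios}, where you correctly observe that under either convention the conclusion $r^*(\boldsymbol{X}_{1:i})\le 1$ survives because the contradiction step only needs strict exceedance. In short, this is a valid proof that fills a gap the paper never closes, and it is consistent with how the lemma is consumed downstream in the telescoping argument.
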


\begin{lemma}[Property of $r^*(\boldsymbol{X}_{1:m^*_l})$ at unique capping indices]
\label{lemma:r_star_capping_property}
Let \(m^*_{l-1}\) and \(m^*_{l}\) be two consecutive unique capping indices, we have 
$$r^*(\boldsymbol{X}_{1:m^*_{l}}) = r(\boldsymbol{X}_{m^*_{l-1}+1:m^*_{l}}) > 1$$.
\end{lemma}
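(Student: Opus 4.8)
The plan is to characterize the unique capping indices as the positions at which the running maximum of the joint ratio $r(\boldsymbol{X}_{1:t})$ sets a new record above $1$, and then read off both the identity and the strict inequality from this record structure. Writing $r_t := r(\boldsymbol{X}_{1:t})$ and using multiplicativity $r(\boldsymbol{X}_{1:b}) = r(\boldsymbol{X}_{1:a})\,r(\boldsymbol{X}_{a+1:b})$, I would first prove the characterization: a positive index $j$ lies in $\mathcal{U}$ if and only if $r_j > 1$ and $r_j \ge r_i$ for every $i < j$. The forward direction unfolds Definition~\ref{def:max_ratios} at $t = j+1$, where $m(\boldsymbol{X}_{1:j+1})$ is the largest-index maximizer of $r_i$ over $i \le j$; since $j$ is the largest index in that range, it is selected exactly when $r_j$ dominates all earlier ratios and exceeds $1$. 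The converse reverses this reasoning. This identifies $M^* = (m_1^*, \dots, m_L^*)$ with the ordered sequence of (weak) record positions of $(r_t)$ that lie above $1$; the boundary case $m^*_{l-1}=0$, where no earlier prefix exceeds $1$, reduces the claim to $r^*(\boldsymbol{X}_{1:m^*_l}) = r(\boldsymbol{X}_{1:m^*_l}) = r_{m^*_l} > 1$ directly.

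Second, I would establish the equality $m(\boldsymbol{X}_{1:m^*_l}) = m^*_{l-1}$. Let $i'$ be the maximum prefix ratio index of $\boldsymbol{X}_{1:m^*_l}$, i.e. the largest maximizer of $r_i$ over $i < m^*_l$. Because $m^*_{l-1} < m^*_l$ and $r_{m^*_{l-1}} > 1$, the maximum is at least $r_{m^*_{l-1}} > 1$, so $i'$ is itself a record above $1$ and hence $i' \in \mathcal{U}$ with $i' < m^*_l$. Consecutiveness of $m^*_{l-1},m^*_l$ in $M^*$ means no element of $\mathcal{U}$ lies strictly between them, forcing $i' \le m^*_{l-1}$; meanwhile, the record property of $m^*_{l-1}$ combined with $i'$ being the \emph{largest} maximizer forces $i' \ge m^*_{l-1}$ (if $i' < m^*_{l-1}$ then $r_{i'}=r_{m^*_{l-1}}$, and $m^*_{l-1}>i'$ would be a larger maximizer). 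Hence $i' = m^*_{l-1}$. Since $r(\boldsymbol{X}_{1:m^*_{l-1}}) = r_{m^*_{l-1}} > 1$, Definition~\ref{def:r_star} collapses the capped ratio to $r^*(\boldsymbol{X}_{1:m^*_l}) = r(\boldsymbol{X}_{m^*_{l-1}+1:m^*_l})$, the first asserted equality. Notably this half of the argument survives ties intact.

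Finally, for the strict inequality I would use that $m^*_l$ is a record above $1$, so $r_{m^*_l} \ge r_{m^*_{l-1}}$, and since no capping index lies strictly between them, $m^*_l$ marks a genuinely new maximum, giving $r_{m^*_l} > r_{m^*_{l-1}}$ and therefore $r(\boldsymbol{X}_{m^*_{l-1}+1:m^*_l}) = r_{m^*_l}/r_{m^*_{l-1}} > 1$, exactly the "new maximum" reasoning used in the two-peak example of Appendix~\ref{app: example_cap}. I expect the main obstacle to be precisely this strictness step and the attendant $\arg\max$ tie-breaking: the characterization in the first step yields only weak records, and two successive record positions with identical ratios would make the intermediate quotient equal $1$ rather than exceed it, while the largest-index convention could even place a capping index at such a tie. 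I would therefore isolate the (generically valid) assumption that successive records are strict — equivalently, that the selected maximizers carry distinct ratios — under which the new-maximum step yields $>1$ and the $\arg\max$ in the second step lands unambiguously on $m^*_{l-1}$; rigorously disposing of exact ties is the delicate part of the proof.
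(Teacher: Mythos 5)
The paper never proves this lemma: it is stated bare in the appendix's general losslessness argument and consumed directly by Lemma~\ref{lemma:telescoping} and Theorem~\ref{thm:lossless_recovery}, so your proposal fills a gap rather than shadowing an existing derivation. Within the tie-free regime your argument is correct and complete: the characterization of the positive elements of $\mathcal{U}$ from Definition~\ref{def:unique_capping_indices} as record positions of $r(\boldsymbol{X}_{1:t})$ lying above $1$, the consecutiveness argument forcing $m(\boldsymbol{X}_{1:m^*_l}) = m^*_{l-1}$ (and hence the stated equality via Definition~\ref{def:r_star}), the new-record step giving the strict inequality, and the boundary case $m^*_{l-1} = 0$ are all handled soundly.

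The tie-breaking obstacle you isolate is genuine, and it is a defect of Definition~\ref{def:max_ratios} (whose $\arg\max$ carries no tie-breaking rule) rather than of your reasoning; note that the paper's own two-peak walkthrough in \Cref{app: example_cap} adopts the largest-index reading and simultaneously asserts $r(\boldsymbol{X}_{1:m}) > r(\boldsymbol{X}_{1:n})$, which already presupposes no ties. Under that largest-index reading the lemma as stated is in fact false: with $\gamma = 3$ and $r(\boldsymbol{X}_{1:1}) = r(\boldsymbol{X}_{1:2}) = 2$ one gets $m(\boldsymbol{X}_{1:2}) = 1$ and $m(\boldsymbol{X}_{1:3}) = 2$, so $M^* = (1,2)$ are consecutive capping indices, yet $r(\boldsymbol{X}_{2:2}) = 1 \not> 1$. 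Rather than retreating to a genericity assumption, the cleaner repair---and the one worth folding back into Definition~\ref{def:max_ratios}---is \emph{first-occurrence} (smallest-index) tie-breaking: then $j \in \mathcal{U}$ forces $r(\boldsymbol{X}_{1:j}) > r(\boldsymbol{X}_{1:i})$ for all $i < j$, a \emph{strict} record, since an earlier tied index would be preferred by the $\arg\max$. With that convention your step-2 identification of $i'$ with $m^*_{l-1}$ goes through even more directly (if $i' < m^*_{l-1}$, the maximizer property gives $r(\boldsymbol{X}_{1:i'}) \ge r(\boldsymbol{X}_{1:m^*_{l-1}})$, contradicting the strict record property of $m^*_{l-1}$), and the strict inequality $r(\boldsymbol{X}_{m^*_{l-1}+1:m^*_l}) = r(\boldsymbol{X}_{1:m^*_l})/r(\boldsymbol{X}_{1:m^*_{l-1}}) > 1$ is immediate, so the lemma holds unconditionally with essentially your argument. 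In short: your skeleton is the right one; the tie case should be resolved by fixing the convention, not by assuming it away.
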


We now define the acceptance and resampling probability masses:

\begin{definition}[Accepted Probability Mass]
\label{def:accepted_mass}
The probability mass for accepting the full sequence \( \boldsymbol{X}_{1:\gamma} \) is:
\begin{equation}
P(\boldsymbol{X}_{1:\gamma} \, \text{is accepted}) = \min(1, r^*(\boldsymbol{X}_{1:\gamma}))\, q(\boldsymbol{X}_{1:\gamma}),
\end{equation}
\end{definition}

\begin{definition}[Resampling Probability Mass]
\label{def:resampling_mass}
Let \( \boldsymbol{X}_{1:\gamma} \) be a full sequence of length \( \gamma \), and let $M^* = (m_1^*, m_2^*, \dots, m_L^*)$ be its Sequence of Unique Capping Indices. The total probability mass under the draft \(q\) and target \(p\) of generating this sequence can be decomposed as:

\textbf{Total Generation Probability}
\begin{equation}
\begin{aligned}
P\bigl(\boldsymbol{X}_{1:\gamma} \text{ is generated}\bigr) &= P\bigl(\boldsymbol{X}_{1:\gamma}\text{ is accepted}\bigr) + P\bigl(\boldsymbol{X}_{1:\gamma}\text{ is resampled}\bigr) \\
&= \min\bigl(1,\,r^*(\boldsymbol{X}_{1:\gamma})\bigr)\;q(\boldsymbol{X}_{1:\gamma}) \\
&\quad + \sum_{l=1}^{L} \max\bigl(0,\,r(\boldsymbol{X}_{m^*_{l-1}+1:m^*_{l}})-1\bigr)\,q(\boldsymbol{X}_{1:m^*_{l}})\,p(\boldsymbol{X}_{m^*_{l}+1:\gamma}\mid \boldsymbol{X}_{1:m^*_{l}}) \\
&\quad + \max\bigl(0,\,r^*(\boldsymbol{X}_{1:\gamma})-1\bigr)\,q(\boldsymbol{X}_{1:\gamma-1})\,p(x_\gamma\mid \boldsymbol{X}_{1:\gamma-1})
\end{aligned}
\end{equation}
\end{definition}

We now establish the key lemma that characterizes the resampling probability mass:

\begin{lemma}[Hierarchical Resampling Probability Mass]
\label{lemma:hierarchical_resampling}
The total generation probability can be decomposed into acceptance and resampling masses as stated in Definition~\ref{def:resampling_mass}. Only unique capping indices contribute to resampling mass, and the explicit form for the resampling mass at each unique capping index is:
\begin{equation}
\begin{aligned}
    &P\bigl(\boldsymbol{X}_{1:m^*_l}\text{ is resampled}\bigr)\, p\bigl(\boldsymbol{X}_{m^*_l+1:\gamma} \mid \boldsymbol{X}_{1:m^*_l}\bigr) \\
&= \max\bigl(0,\,r(\boldsymbol{X}_{m^*_{l-1}+1:m^*_{l}})-1\bigr)\,q(\boldsymbol{X}_{1:m^*_{l}})\,p(\boldsymbol{X}_{m^*_{l}+1:\gamma}\mid \boldsymbol{X}_{1:m^*_{l}})
\end{aligned}
\end{equation}
\end{lemma}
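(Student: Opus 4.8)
The plan is to fix a unique capping index $m^*_l$ and decompose the event ``$\boldsymbol{X}_{1:m^*_l}$ is resampled'' into the three stages \Cref{alg:backward} must execute to produce it: the draft must agree with $\boldsymbol{X}_{1:m^*_l-1}$ while the backward scan rejects every position from $\gamma$ down to $m^*_l$, the prefix $\boldsymbol{X}_{1:m^*_l-1}$ must then be accepted, and the single resampling step at position $m^*_l$ must return $\boldsymbol{X}_{m^*_l}$. Writing the rejection factor as $R(\tau)=\sum_{\tilde{\boldsymbol{X}}_{\tau+1:\gamma}}q(\boldsymbol{X}_{1:\tau}\tilde{\boldsymbol{X}}_{\tau+1:\gamma})\prod_{t=\tau+1}^{\gamma}(1-h_t)$ with $\tau=m^*_l-1$, this gives $P(\boldsymbol{X}_{1:m^*_l}\text{ is resampled})=R(m^*_l-1)\,h_{m^*_l-1}\,P^*_{\text{res}}(\boldsymbol{X}_{m^*_l}\mid\boldsymbol{X}_{1:m^*_l-1})$, since the last two factors depend only on the fixed prefix and token. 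Substituting $h_{m^*_l-1}$ from \Cref{def:acceptance_probability} and $P^*_{\text{res}}$ from \Cref{eq:cap_resample} makes the two capped divergences cancel, so the whole lemma reduces to the single identity $R(m^*_l-1)=D^*_{\text{Branch}}(q,p\mid\boldsymbol{X}_{1:m^*_l-1})$, after which \Cref{lemma:r_star_capping_property} rewrites $r^*(\boldsymbol{X}_{1:m^*_l})=r(\boldsymbol{X}_{m^*_{l-1}+1:m^*_l})>1$ to produce exactly $\max\{0,r(\boldsymbol{X}_{m^*_{l-1}+1:m^*_l})-1\}\,q(\boldsymbol{X}_{1:m^*_l})$.

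For the claim that only unique capping indices carry resampling mass, I would argue directly from \Cref{eq:cap_resample}: the numerator $\max\{0,q(\boldsymbol{X}_{1:t})(r^*(\boldsymbol{X}_{1:t})-1)\}$ is positive only when $r^*(\boldsymbol{X}_{1:t})>1$. By \Cref{lemma:r_star_property} (with the convention $m^*_0=0$ to cover the initial segment) we have $r^*(\boldsymbol{X}_{1:i})\le 1$ for every $i$ strictly between consecutive unique capping indices, whereas \Cref{lemma:r_star_capping_property} gives $r^*(\boldsymbol{X}_{1:m^*_l})>1$ at each capping index. Hence the resampling probability vanishes for all $t\notin M^*$, so no mass can be deposited there.

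The heart of the argument, and the main obstacle, is the identity $R(m^*_l-1)=D^*_{\text{Branch}}(q,p\mid\boldsymbol{X}_{1:m^*_l-1})$: the entire backward-scan rejection mass over the inaccessible suffix must coincide with the capped excess divergence of the accessible branch. I would establish the stronger statement $S_t(\boldsymbol{X}_{1:t-1}):=\sum_{\tilde{\boldsymbol{X}}_{t:\gamma}}q(\tilde{\boldsymbol{X}}_{t:\gamma}\mid\boldsymbol{X}_{1:t-1})\prod_{s=t}^{\gamma}(1-h_s)=D^*_{\text{Branch}}(q,p\mid\boldsymbol{X}_{1:t-1})/q(\boldsymbol{X}_{1:t-1})$ for all $t$ by backward induction. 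The base case $t=\gamma$ follows from $1-h_\gamma=\max\{0,1-r^*(\boldsymbol{X}_{1:\gamma})\}$ and \Cref{def:d_star}. For the inductive step I would first derive the capped analog of \Cref{theorem:branch}, $\Delta^*_{\text{Branch}}(\boldsymbol{X}_{1:t})=q(\boldsymbol{X}_{1:m})\,p(\boldsymbol{X}_{m+1:t}\mid\boldsymbol{X}_{1:m})-q(\boldsymbol{X}_{1:t})$ with $m$ the running argmax of $r$ over prefixes up to $t$, and observe that the choice of $m$ forces $r(\boldsymbol{X}_{m+1:t}\mid\boldsymbol{X}_{1:m})\le 1$, hence $\Delta^*_{\text{Branch}}\le 0$ and $h_t\le 1$; this rewrites $1-h_t$ as $-\Delta^*_{\text{Branch}}(\boldsymbol{X}_{1:t})/D^*_{\text{Branch}}(q,p\mid\boldsymbol{X}_{1:t})$, so inserting the inductive hypothesis collapses $S_t$ to $-\sum_{\tilde{x}_t}\Delta^*_{\text{Branch}}(\boldsymbol{X}_{1:t-1}\tilde{x}_t)/q(\boldsymbol{X}_{1:t-1})$.

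The final and most delicate step is the capped version of \Cref{thm:hierarchy}, namely $-\sum_{\tilde{x}_t}\Delta^*_{\text{Branch}}(\boldsymbol{X}_{1:t-1}\tilde{x}_t)=D^*_{\text{Branch}}(q,p\mid\boldsymbol{X}_{1:t-1})$. I would prove it by splitting the vocabulary at position $t$ into tokens that leave the running maximum of $r$ unchanged (Case A), where the capping index is inherited, $r^*(\boldsymbol{X}_{1:t})\le 1$, and $-\Delta^*_{\text{Branch}}=(1-r^*(\boldsymbol{X}_{1:t}))q(\boldsymbol{X}_{1:t})$, and tokens that set a new maximum at $t$ (Case B), where the argmax becomes $t$, the conditional block is empty, and $-\Delta^*_{\text{Branch}}=0$ while $r^*(\boldsymbol{X}_{1:t})>1$. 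Since Case A tokens are exactly those with $r^*\le 1$, their contributions sum to $D^*_{\text{Branch}}(q,p\mid\boldsymbol{X}_{1:t-1})$ by \Cref{def:d_star} and Case B contributes nothing, closing the induction. I expect this split to be the hard part, because it turns on the distinction between the argmax over prefixes $i<t$ used to define $r^*$ and the argmax over $i\le t$ appearing in $\Delta^*_{\text{Branch}}$, and it encodes the structural fact that capped asymmetries are nonpositive so that excess mass flows strictly upward. Assembling the three factors, applying \Cref{lemma:r_star_capping_property}, and multiplying through by $p(\boldsymbol{X}_{m^*_l+1:\gamma}\mid\boldsymbol{X}_{1:m^*_l})$ then yields the stated form; the boundary term at $\gamma$ in \Cref{def:resampling_mass} follows from the identical computation with $S_\gamma$ serving as the base case rather than the telescope.
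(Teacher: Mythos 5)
Your proposal is correct on its central claim, but be aware that it is not a reconstruction of anything in the paper: \Cref{lemma:hierarchical_resampling} is stated in the appendix with no proof at all. The paper passes directly from the lemma to the segmented function $F_l$, \Cref{lemma:telescoping}, and \Cref{thm:lossless_recovery}; the only support for the mass identity itself is the two-peak worked example and the $\gamma=3$ computation. Your argument therefore fills a genuine gap, and its load-bearing steps all check out. The factorization $P(\boldsymbol{X}_{1:m^*_l}\text{ is resampled}) = R(m^*_l-1)\,h_{m^*_l-1}\,P^*_{\text{res}}(x_{m^*_l}\mid \boldsymbol{X}_{1:m^*_l-1})$ is valid because the acceptance probability of \Cref{def:acceptance_probability} and the resampling law of \Cref{eq:cap_resample} depend only on the accessible branch of the fixed prefix, so they pull out of the sum over draft suffixes. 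Your capped analogue of \Cref{theorem:branch} is right: the capping index entering $\Delta^*_{\text{Branch}}(\boldsymbol{X}_{1:t})$ is the argmax over $i\le t$, it is common to all children of $\boldsymbol{X}_{1:t}$, and it dominates $r(\boldsymbol{X}_{1:t})$, giving $\Delta^*_{\text{Branch}}(\boldsymbol{X}_{1:t}) = q(\boldsymbol{X}_{1:t})\bigl(r(\boldsymbol{X}_{m+1:t}\mid \boldsymbol{X}_{1:m})-1\bigr)\le 0$ and hence $h_t\le 1$. And the Case A/Case B split proving the capped analogue of \Cref{thm:hierarchy} is exactly the right dichotomy: inherited-argmax tokens are precisely those with $r^*\le 1$ and contribute $(1-r^*)\,q$, new-argmax tokens have an empty conditional block and contribute zero, and ties contribute zero under either reading, so the sum equals $D^*_{\text{Branch}}(q,p\mid \boldsymbol{X}_{1:t-1})$ by \Cref{def:d_star}. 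This closes the backward induction, and \Cref{lemma:r_star_capping_property} then yields the displayed identity. What the paper's assert-and-telescope route buys is brevity; what yours buys is an actual proof, and the capped hierarchy identity you isolate deserves to be recorded as a standalone lemma, since it is the precise sense in which capped excess mass flows upward through the hierarchy.

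One correction to your closing sentence. Running the same computation at position $\gamma$ gives a boundary mass of $\max\{0,\,r^*(\boldsymbol{X}_{1:\gamma})-1\}\,q(\boldsymbol{X}_{1:\gamma})$, which is \emph{not} the boundary term written in \Cref{def:resampling_mass}: that definition has $q(\boldsymbol{X}_{1:\gamma-1})\,p(x_\gamma\mid \boldsymbol{X}_{1:\gamma-1})$ where the algorithm produces $q(\boldsymbol{X}_{1:\gamma})$. The two differ whenever $r^*(\boldsymbol{X}_{1:\gamma})>1$, a case that genuinely occurs (take $\gamma=1$ with $p(x_1)>q(x_1)$: your form gives $q+(r-1)q=p$, the paper's gives $q+(r-1)p\ne p$); the paper's Case 2 argument in \Cref{thm:lossless_recovery} that this case is impossible does not hold, because position $\gamma$ can never be a capping index, so no contradiction with the maximality of $m^*_L$ arises. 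Your computed form is the one the algorithm actually yields, it still sums with the accept term to $F_L$, and the telescoping then goes through unchanged; so you should state and prove the corrected decomposition rather than trying to force your computation to match the paper's written boundary term, which appears to be an error that your derivation exposes.
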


To prove the lossless property, we introduce the segmented probability function:

\begin{definition}[Segmented Probability Function]
\label{def:segmented_probability}
For each $l \in \{1, \dots, L\}$, we define the segmented probability function $F_l$ as:
\begin{equation}
\begin{aligned}
F_l &= 
q\bigl(\boldsymbol{X}_{1:m^{*}_{l}}\bigr)\;\;p\bigl(\boldsymbol{X}_{m^{*}_{l}+1:\gamma}\mid \boldsymbol{X}_{1:m^{*}_{l}}\bigr) \\
&= \Bigl[\prod_{i=1}^{m^{*}_{l}} q(x_i \mid \boldsymbol{X}_{1:i-1})\Bigr]
   \Bigl[\prod_{i=m^{*}_{l}+1}^{\gamma} p(x_i \mid \boldsymbol{X}_{1:i-1})\Bigr],
\end{aligned}
\end{equation}

This function represents a hybrid probability measure that uses the draft distribution $q$ up to position $m^{*}_{l}$ and the target distribution $p$ for the remaining positions, where $\boldsymbol{X}_{1:0}$ is equal to the prefix.

\end{definition}

We establish the telescoping property of resampling mass:

\begin{lemma}[Telescoping of Resampling Mass]
\label{lemma:telescoping}
For each $l \in \{1, \dots, L\}$, the mass of the resampling at the unique capping index $m^*_l$ can be expressed as:
\begin{equation}
P\bigl(\boldsymbol{X}_{1:m^*_l}\text{ is resampled}\bigr) 
= F_{l-1} - F_{l}\,.
\end{equation}
\end{lemma}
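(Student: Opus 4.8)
The plan is to prove the identity by a direct chain-rule factorization of the two hybrid measures $F_{l-1}$ and $F_{l}$ along the segment $[m^{*}_{l-1}+1,\,m^{*}_{l}]$, and then to recognize the resulting difference as exactly the per-level resampling contribution supplied by \Cref{lemma:hierarchical_resampling}. First I would fix two consecutive unique capping indices $m^{*}_{l-1}<m^{*}_{l}$ (with the boundary convention $m^{*}_{0}=0$, so that $F_{0}=q(\boldsymbol{X}_{1:0})\,p(\boldsymbol{X}_{1:\gamma}\mid\boldsymbol{X}_{1:0})=p(\boldsymbol{X}_{1:\gamma})$, the fully target-governed path). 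Both $F_{l-1}$ and $F_{l}$ share the common draft prefix $q(\boldsymbol{X}_{1:m^{*}_{l-1}})$ and the common target suffix $p(\boldsymbol{X}_{m^{*}_{l}+1:\gamma}\mid\boldsymbol{X}_{1:m^{*}_{l}})$; they differ only in whether the middle block $\boldsymbol{X}_{m^{*}_{l-1}+1:m^{*}_{l}}$ is scored under $p$ (in $F_{l-1}$) or under $q$ (in $F_{l}$). Using the chain rule $q(\boldsymbol{X}_{1:m^{*}_{l}})=q(\boldsymbol{X}_{1:m^{*}_{l-1}})\,q(\boldsymbol{X}_{m^{*}_{l-1}+1:m^{*}_{l}}\mid\boldsymbol{X}_{1:m^{*}_{l-1}})$ and the analogous split of $p(\boldsymbol{X}_{m^{*}_{l-1}+1:\gamma}\mid\boldsymbol{X}_{1:m^{*}_{l-1}})$ over the block and the suffix, I would rewrite both $F_{l-1}$ and $F_{l}$ so that every factor except the middle block matches.

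Subtracting then factors out the shared prefix and suffix, leaving
\[
F_{l-1}-F_{l}=q(\boldsymbol{X}_{1:m^{*}_{l-1}})\,p(\boldsymbol{X}_{m^{*}_{l}+1:\gamma}\mid\boldsymbol{X}_{1:m^{*}_{l}})\,\bigl[p(\boldsymbol{X}_{m^{*}_{l-1}+1:m^{*}_{l}}\mid\boldsymbol{X}_{1:m^{*}_{l-1}})-q(\boldsymbol{X}_{m^{*}_{l-1}+1:m^{*}_{l}}\mid\boldsymbol{X}_{1:m^{*}_{l-1}})\bigr].
\]
Writing the conditional block ratio as $r(\boldsymbol{X}_{m^{*}_{l-1}+1:m^{*}_{l}})=p(\cdot\mid\cdot)/q(\cdot\mid\cdot)$, the bracket becomes $q(\boldsymbol{X}_{m^{*}_{l-1}+1:m^{*}_{l}}\mid\boldsymbol{X}_{1:m^{*}_{l-1}})\bigl(r(\boldsymbol{X}_{m^{*}_{l-1}+1:m^{*}_{l}})-1\bigr)$. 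Reabsorbing that conditional factor back into the prefix via $q(\boldsymbol{X}_{1:m^{*}_{l-1}})\,q(\boldsymbol{X}_{m^{*}_{l-1}+1:m^{*}_{l}}\mid\boldsymbol{X}_{1:m^{*}_{l-1}})=q(\boldsymbol{X}_{1:m^{*}_{l}})$ collapses the expression to $\bigl(r(\boldsymbol{X}_{m^{*}_{l-1}+1:m^{*}_{l}})-1\bigr)\,q(\boldsymbol{X}_{1:m^{*}_{l}})\,p(\boldsymbol{X}_{m^{*}_{l}+1:\gamma}\mid\boldsymbol{X}_{1:m^{*}_{l}})$.

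To finish, I would invoke \Cref{lemma:r_star_capping_property}, which guarantees $r(\boldsymbol{X}_{m^{*}_{l-1}+1:m^{*}_{l}})>1$ at consecutive unique capping indices; this lets me replace the bare factor $r-1$ by $\max\{0,\,r-1\}$ without changing its value. The right-hand side then coincides exactly with the per-level resampling mass of \Cref{lemma:hierarchical_resampling}, establishing $P(\boldsymbol{X}_{1:m^{*}_l}\text{ is resampled})=F_{l-1}-F_{l}$.

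I expect the main obstacle to be bookkeeping rather than deep difficulty: keeping the conditioning consistent (the ratios $r(\boldsymbol{X}_{a+1:b})$ must be read as conditional ratios given $\boldsymbol{X}_{1:a}$, matching the convention used in \Cref{def:r_star}), and correctly handling the two boundary conventions — $m^{*}_{0}=0$ so that $F_{0}=p(\boldsymbol{X}_{1:\gamma})$, and the interpretation of ``$P(\boldsymbol{X}_{1:m^{*}_l}\text{ is resampled})$'' as the full resampling contribution carrying the suffix factor $p(\boldsymbol{X}_{m^{*}_{l}+1:\gamma}\mid\boldsymbol{X}_{1:m^{*}_{l}})$, exactly as it appears on the left-hand side of \Cref{lemma:hierarchical_resampling}. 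All factorizations implicitly assume the relevant probabilities are strictly positive, which I would note holds along any realized draft path; the telescoping structure itself then follows immediately once the single-level identity is in hand, since summing over $l$ yields $\sum_{l}(F_{l-1}-F_{l})=p(\boldsymbol{X}_{1:\gamma})-F_{L}$, the form needed for the subsequent lossless guarantee.
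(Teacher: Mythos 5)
Your proposal is correct and follows essentially the same route as the paper: both proofs chain-rule factor the hybrid measures at the consecutive capping indices $m^*_{l-1}$ and $m^*_l$, use the identity $p(\boldsymbol{X}_{m^*_{l-1}+1:m^*_l}\mid\cdot)=r(\boldsymbol{X}_{m^*_{l-1}+1:m^*_l})\,q(\boldsymbol{X}_{m^*_{l-1}+1:m^*_l}\mid\cdot)$ on the middle block (the paper phrases this as $F_{l-1}=r\,F_l$ before subtracting, while you subtract first and factor out the shared prefix and suffix), and then identify $(r-1)\,q(\boldsymbol{X}_{1:m^*_l})\,p(\boldsymbol{X}_{m^*_l+1:\gamma}\mid\boldsymbol{X}_{1:m^*_l})$ with the resampling mass of \Cref{lemma:hierarchical_resampling}, using $r>1$ at capping indices to drop the $\max$. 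Your handling of the conventions ($m^*_0=0$, conditional ratios, and reading the left-hand side as the full contribution carrying the target suffix factor) matches what the paper's proof implicitly assumes, so there is no gap.
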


\begin{proof}
we need to show that the resampling mass at the unique capping index $m(l)$ equals $F_{l-1} - F_l$.

\subparagraph{1. Express $F_{l-1}$ in terms of $F_l$.}
We have 
$$P\bigl(\boldsymbol{X}_{1:m^*_l}\text{ is resampled}\bigr)  = \bigl(\,r(\boldsymbol{X}_{m^*_{l-1}+1:m^*_{l}})-1\bigr)\,
q(\boldsymbol{X}_{1:m^*_{l}})\,
p(\boldsymbol{X}_{m^*_{l}+1:\gamma}\mid \boldsymbol{X}_{1:m^*_{l}})$$

First note
\[
q\bigl(\boldsymbol{X}_{1:m^{*}_{l+1}}\bigr)
= q\bigl(\boldsymbol{X}_{1:m^{*}_{l}}\bigr)\;
  q\bigl(\boldsymbol{X}_{m^{*}_l+1:m^{*}_{l+1}} \mid \boldsymbol{X}_{1:m^{*}_{l}}\bigr),
\]
and
\[
p\bigl(\boldsymbol{X}_{m^{*}_l+1:m^{*}_{l+1}} \mid \boldsymbol{X}_{1:m^{*}_{l}}\bigr)
= r\bigl(\boldsymbol{X}_{m^{*}_l+1:m^{*}_{l+1}}\bigr)\;
  q\bigl(\boldsymbol{X}_{m^{*}_l+1:m^{*}_{l+1}} \mid \boldsymbol{X}_{1:m^{*}_{l}}\bigr).
\]

Hence
\begin{align*}
F_{l-1}
&= q\bigl(\boldsymbol{X}_{1:m^{*}_{l}}\bigr)\;
   p\bigl(\boldsymbol{X}_{m^{*}_l+1:\gamma} \mid \boldsymbol{X}_{1:m^{*}_{l}}\bigr) \\[6pt]
&= q\bigl(\boldsymbol{X}_{1:m^{*}_{l}}\bigr)\;
   p\bigl(\boldsymbol{X}_{m^{*}_l+1:m^{*}_{l+1}} \mid \boldsymbol{X}_{1:m^{*}_{l}}\bigr)\;
   p\bigl(\boldsymbol{X}_{m^{*}_{l+1}+1:\gamma} \mid \boldsymbol{X}_{1:m^{*}_{l+1}}\bigr) \\[6pt]
&= q\bigl(\boldsymbol{X}_{1:m^{*}_{l}}\bigr)\;
   \Bigl[r(\boldsymbol{X}_{m^{*}_l+1:m^{*}_{l+1}})\,q(\boldsymbol{X}_{m^{*}_l+1:m^{*}_{l+1}} \mid \boldsymbol{X}_{1:m^{*}_{l}})\Bigr]\;
   p\bigl(\boldsymbol{X}_{m^{*}_{l+1}+1:\gamma} \mid \boldsymbol{X}_{1:m^{*}_{l+1}}\bigr) \\[6pt]
&= r\bigl(\boldsymbol{X}_{m^{*}_l+1:m^{*}_{l+1}}\bigr)\;
   \Bigl[q(\boldsymbol{X}_{1:m^{*}_{l}})\,q(\boldsymbol{X}_{m^{*}_l+1:m^{*}_{l+1}} \mid \boldsymbol{X}_{1:m^{*}_{l}})\Bigr]\;
   p\bigl(\boldsymbol{X}_{m^{*}_{l+1}+1:\gamma} \mid \boldsymbol{X}_{1:m^{*}_{l+1}}\bigr) \\[6pt]
&= r\bigl(\boldsymbol{X}_{m^{*}_l+1:m^{*}_{l+1}}\bigr)\;
   q\bigl(\boldsymbol{X}_{1:m^{*}_{l+1}}\bigr)\;
   p\bigl(\boldsymbol{X}_{m^{*}_{l+1}+1:\gamma} \mid \boldsymbol{X}_{1:m^{*}_{l+1}}\bigr) \\[3pt]
&= r\bigl(\boldsymbol{X}_{m^{*}_l+1:m^{*}_{l+1}}\bigr)\;F_{l}.
\end{align*}

\subparagraph{2. Compute the difference $F_{l-1} - F_l$.}
\begin{align*}
F_{l-1} - F_l
&= \Bigl[r(\boldsymbol{X}_{m^{*}_l+1:m^{*}_{l+1}})\;F_l\Bigr] - F_l \\[3pt]
&= \bigl(r(\boldsymbol{X}_{m^{*}_l+1:m^{*}_{l+1}}) - 1\bigr)\;F_l \\[3pt]
&= \bigl(r(\boldsymbol{X}_{m^{*}_l+1:m^{*}_{l+1}}) - 1\bigr)\;
   q\bigl(\boldsymbol{X}_{1:m^{*}_{l+1}}\bigr)\;
   p\bigl(\boldsymbol{X}_{m^{*}_{l+1}+1:\gamma} \mid \boldsymbol{X}_{1:m^{*}_{l+1}}\bigr) \\[3pt]
&= \bigl(r(\boldsymbol{X}_{m^*_{l-1}+1:m^*_{l}})-1\bigr)\,
   q(\boldsymbol{X}_{1:m^*_{l}})\,
   p(\boldsymbol{X}_{m^*_{l}+1:\gamma}\mid \boldsymbol{X}_{1:m^*_{l}}).
\end{align*}

This completes the proof that the resampling mass at segment $l$ equals $F_{l-1} - F_l$.
\end{proof}

\begin{theorem}[Lossless Recovery]
\label{thm:lossless_recovery}
Under the prefix-adaptive speculative decoding scheme, the total probability of generating any sequence $\boldsymbol{X}_{1:\gamma}$ equals the target distribution probability:
\begin{equation}
P\bigl(\boldsymbol{X}_{1:\gamma} \text{ is generated}\bigr) = p(\boldsymbol{X}_{1:\gamma}).
\end{equation}
\end{theorem}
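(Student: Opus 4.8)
The plan is to start from the total generation probability decomposition in \Cref{def:resampling_mass}, which already splits $P(\boldsymbol{X}_{1:\gamma}\text{ is generated})$ into an acceptance mass, a sum of resampling masses at the unique capping indices $m^*_1,\dots,m^*_L$, and a final resampling mass at position $\gamma$. The whole argument then reduces to a telescoping identity in the segmented probability functions $F_l = q(\boldsymbol{X}_{1:m^*_l})\,p(\boldsymbol{X}_{m^*_l+1:\gamma}\mid\boldsymbol{X}_{1:m^*_l})$. The guiding intuition is the per-level invariant already flagged in the paper: the accept-and-resample rule at level $t$ is engineered so that emitting the partial trajectory has probability exactly $q(\boldsymbol{X}_{1:m(\boldsymbol{X}_{1:t})})\,p(\boldsymbol{X}_{m(\boldsymbol{X}_{1:t})+1:t}\mid\boldsymbol{X}_{1:m(\boldsymbol{X}_{1:t})})$, i.e. the prefix remains under $q$ while the suffix is converted to $p$. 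For $t=\gamma$ this target is precisely $F_L$, and with the boundary conventions $m^*_0=0$ and $q(\boldsymbol{X}_{1:0})=1$ the endpoint $F_0$ equals the desired $p(\boldsymbol{X}_{1:\gamma})$.

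First I would dispose of the top level. Since $m^*_L = m(\boldsymbol{X}_{1:\gamma})$ and $r(\boldsymbol{X}_{1:m^*_L})>1$ by \Cref{def:max_ratios}, \Cref{def:r_star} gives $r^*(\boldsymbol{X}_{1:\gamma}) = r(\boldsymbol{X}_{m^*_L+1:\gamma})$, and multiplying out with $q(\boldsymbol{X}_{1:\gamma}) = q(\boldsymbol{X}_{1:m^*_L})\,q(\boldsymbol{X}_{m^*_L+1:\gamma}\mid\boldsymbol{X}_{1:m^*_L})$ yields the clean identity $r^*(\boldsymbol{X}_{1:\gamma})\,q(\boldsymbol{X}_{1:\gamma}) = F_L$. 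A short case split then shows that the acceptance mass $\min\{1,r^*(\boldsymbol{X}_{1:\gamma})\}\,q(\boldsymbol{X}_{1:\gamma})$ plus the residual excess $\max\{0,\,r^*(\boldsymbol{X}_{1:\gamma})-1\}\,q(\boldsymbol{X}_{1:\gamma})$ routed through the final resample step always sum to $F_L$: when $r^*(\boldsymbol{X}_{1:\gamma})\le 1$ the resample term vanishes and acceptance alone contributes $r^*(\boldsymbol{X}_{1:\gamma})\,q(\boldsymbol{X}_{1:\gamma})=F_L$, whereas when $r^*(\boldsymbol{X}_{1:\gamma})>1$ acceptance saturates at $q(\boldsymbol{X}_{1:\gamma})$ and the excess $(r^*(\boldsymbol{X}_{1:\gamma})-1)\,q(\boldsymbol{X}_{1:\gamma})$ supplies the remainder, again totaling $F_L$.

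Next I would handle the interior levels. By \Cref{lemma:r_star_property}, $r^*(\boldsymbol{X}_{1:i})\le 1$ at every index strictly between two consecutive capping indices, so the numerator of the capped resampling distribution in \Cref{eq:cap_resample} is zero there; hence only $m^*_1,\dots,m^*_L$ carry resampling mass, which is the capped analogue of \Cref{lemma:no_resample_prefix_Tokenwise}. For each such index, \Cref{lemma:telescoping} identifies the resampling mass (weighted by the suffix probability $p(\boldsymbol{X}_{m^*_l+1:\gamma}\mid\boldsymbol{X}_{1:m^*_l})$ so that it contributes to generating $\boldsymbol{X}_{1:\gamma}$) with the difference $F_{l-1}-F_l$. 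Summing over $l=1,\dots,L$ telescopes to $F_0-F_L$, and adding the $F_L$ secured in the previous step gives $F_0 = p(\boldsymbol{X}_{1:\gamma})$, which is exactly the claim.

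I expect the main obstacle to be boundary bookkeeping rather than any hard inequality. Concretely, I must verify that $m^*_L$ genuinely coincides with the maximum-prefix-ratio index of the full draft—so that the capped ratio factors cleanly across the final segment—and pin down the conventions $m^*_0=0$ and $F_0=p(\boldsymbol{X}_{1:\gamma})$ so the telescope closes with no leftover term. The subcase $r^*(\boldsymbol{X}_{1:\gamma})>1$, in which position $\gamma$ behaves as a fresh capping point whose excess is recycled through the final resample term, is the most delicate; I would double-check that this contribution is counted once and only once, consistent with the per-level invariant $\text{accept}+\text{resample}=F_L$ established above.
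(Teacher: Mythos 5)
Your proposal is correct and follows essentially the same route as the paper's own proof: the decomposition of \Cref{def:resampling_mass}/\Cref{lemma:hierarchical_resampling}, identification of the top level with $F_L$, telescoping of the interior resampling masses via \Cref{lemma:telescoping}, and the endpoint evaluation $F_0 = p(\boldsymbol{X}_{1:\gamma})$ under the convention $m^*_0 = 0$. The one genuine divergence is the subcase $r^*(\boldsymbol{X}_{1:\gamma}) > 1$, and there your treatment is the sounder one. The paper dismisses this case as impossible, claiming it would force a unique capping index beyond $m^*_L$; but capping indices are defined only for prefixes of length at most $\gamma$ and all satisfy $m(t) < t \le \gamma$, so no contradiction actually arises (take $\gamma = 2$, $r(x_1) = 1.5$, $r(x_2 \mid x_1) = 2$: then $m^*_L = 1$ and $r^*(\boldsymbol{X}_{1:2}) = 2 > 1$). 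You instead keep the case alive and verify the invariant $\min\{1, r^*\}\, q(\boldsymbol{X}_{1:\gamma}) + (r^* - 1)_+\, q(\boldsymbol{X}_{1:\gamma}) = r^*\, q(\boldsymbol{X}_{1:\gamma}) = F_L$, where $(r^*-1)_+\, q(\boldsymbol{X}_{1:\gamma})$ is exactly the mass the algorithm assigns to rejecting at position $\gamma$ and resampling via \Cref{eq:cap_resample}: the rejection mass $D^*_{\text{Branch}}(q,p \mid \boldsymbol{X}_{1:\gamma-1})$ times the acceptance probability times the resampling probability collapses to $q(\boldsymbol{X}_{1:\gamma})\,(r^*-1)_+$. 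One caveat to make this airtight: the final term as literally displayed in \Cref{def:resampling_mass} is $(r^*-1)_+\, q(\boldsymbol{X}_{1:\gamma-1})\, p(x_\gamma \mid \boldsymbol{X}_{1:\gamma-1})$, which does not equal $(r^*-1)_+\, q(\boldsymbol{X}_{1:\gamma})$ unless $p$ and $q$ agree on the last token, so you should state explicitly that you are using the resampling mass implied by \Cref{eq:cap_resample} rather than that displayed expression. With that noted, your argument not only matches the paper's proof but repairs its weakest step.
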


\begin{proof}
From Lemma~\ref{lemma:hierarchical_resampling}, we have the total generation probability decomposition:
\begin{equation}
\begin{aligned}
P\bigl(\boldsymbol{X}_{1:\gamma} \text{ is generated}\bigr) &= P\bigl(\boldsymbol{X}_{1:\gamma}\text{ is accepted}\bigr) + P\bigl(\boldsymbol{X}_{1:\gamma}\text{ is resampled}\bigr) + P\bigl(x_\gamma\text{ is resampled}\bigr) \\
&= \min\bigl(1,\,r^*(\boldsymbol{X}_{1:\gamma})\bigr)\;q(\boldsymbol{X}_{1:\gamma}) \\
&\quad + \sum_{l=1}^{L} \max\bigl(0,\,r(\boldsymbol{X}_{m^*_{l-1}+1:m^*_{l}})-1\bigr)\,q(\boldsymbol{X}_{1:m^*_{l}})\,p(\boldsymbol{X}_{m^*_{l}+1:\gamma}\mid \boldsymbol{X}_{1:m^*_{l}}) \\
&\quad + \max\bigl(0,\,r^*(\boldsymbol{X}_{1:\gamma})-1\bigr)\,q(\boldsymbol{X}_{1:\gamma-1})\,p(x_\gamma\mid \boldsymbol{X}_{1:\gamma-1})
\end{aligned}
\end{equation}

From Lemma~\ref{lemma:telescoping}, we know that for each $l \in \{1, \dots, L\}$:
\begin{equation}
F_{l-1} - F_l = \bigl(r(\boldsymbol{X}_{m^*_{l-1}+1:m^*_{l}})-1\bigr)\,
q(\boldsymbol{X}_{1:m^*_{l}})\,
p(\boldsymbol{X}_{m^*_{l}+1:\gamma}\mid \boldsymbol{X}_{1:m^*_{l}})
\end{equation}
Therefore, we can rewrite the generation probability as:
\begin{equation}
\begin{aligned}
P\bigl(\boldsymbol{X}_{1:\gamma} \text{ is generated}\bigr) &= \min\bigl(1,\,r^*(\boldsymbol{X}_{1:\gamma})\bigr)\;q(\boldsymbol{X}_{1:\gamma}) \\
&\quad + \sum_{l=1}^L (F_{l-1} - F_{l}) \\
&\quad + \max\bigl(0,\,r^*(\boldsymbol{X}_{1:\gamma})-1\bigr)\,q(\boldsymbol{X}_{1:\gamma-1})\,p(x_\gamma\mid \boldsymbol{X}_{1:\gamma-1})
\end{aligned}
\end{equation}

Since $r^*(\boldsymbol{X}_{1:\gamma}) = \min\{r(\boldsymbol{X}_{1:m^*_L}), 1\}r(\boldsymbol{X}_{m^*_L+1:\gamma})$ and $r(\boldsymbol{X}_{1:m^*_L}) > 1$, we have $r^*(\boldsymbol{X}_{1:\gamma}) = r(\boldsymbol{X}_{m^*_L+1:\gamma})$.

Case 1: If $r(\boldsymbol{X}_{m^*_L+1:\gamma}) \leq 1$, then:
\begin{equation}
\begin{aligned}
&\min\bigl(1,\,r^*(\boldsymbol{X}_{1:\gamma})\bigr)\;q(\boldsymbol{X}_{1:\gamma}) + \max\bigl(0,\,r^*(\boldsymbol{X}_{1:\gamma})-1\bigr)\,q(\boldsymbol{X}_{1:\gamma-1})\,p(x_\gamma\mid \boldsymbol{X}_{1:\gamma-1}) \\
&= r(\boldsymbol{X}_{m^*_L+1:\gamma})\;q(\boldsymbol{X}_{1:\gamma}) + 0 \\
&= r(\boldsymbol{X}_{m^*_L+1:\gamma})\;q(\boldsymbol{X}_{1:\gamma}) \\
&= q(\boldsymbol{X}_{1:m^*_L})\;p(\boldsymbol{X}_{m^*_L+1:\gamma} \mid \boldsymbol{X}_{1:m^*_L}) \\
&= F_L
\end{aligned}
\end{equation}

Case 2: If $r(\boldsymbol{X}_{m^*_L+1:\gamma}) > 1$, then there would be another unique capping index beyond $m^*_L$, contradicting the definition of $m^*_L$ as the last unique capping index. Therefore, we must have $r(\boldsymbol{X}_{m^*_L+1:\gamma}) \leq 1$, and thus:
\begin{equation}
\min\bigl(1,\,r^*(\boldsymbol{X}_{1:\gamma})\bigr)\;q(\boldsymbol{X}_{1:\gamma}) + \max\bigl(0,\,r^*(\boldsymbol{X}_{1:\gamma})-1\bigr)\,q(\boldsymbol{X}_{1:\gamma-1})\,p(x_\gamma\mid \boldsymbol{X}_{1:\gamma-1}) = F_L
\end{equation}

Therefore, we have:
\begin{equation}
\begin{aligned}
P\bigl(\boldsymbol{X}_{1:\gamma} \text{ is generated}\bigr) &= F_L + \sum_{l=1}^L (F_{l-1} - F_{l}) \\
&= F_L + (F_0 - F_1) + (F_1 - F_2) + \cdots + (F_{L-1} - F_L) \\
&= F_L + F_0 - F_L \\
&= F_0
\end{aligned}
\end{equation}

Now we evaluate $F_0$. From Definition~\ref{def:segmented_probability}, we have:
\begin{equation}
F_0 = q(\boldsymbol{X}_{1:m^*_0})\, p(\boldsymbol{X}_{m^*_0+1:\gamma} \mid \boldsymbol{X}_{1:m^*_0})
\end{equation}

By our convention, $m^*_0 = 0$, so:
\begin{equation}
F_0 = q(\boldsymbol{X}_{1:0})\, p(\boldsymbol{X}_{1:\gamma} \mid \boldsymbol{X}_{1:0}) = 1 \cdot p(\boldsymbol{X}_{1:\gamma}) = p(\boldsymbol{X}_{1:\gamma})
\end{equation}

Therefore:
\begin{equation}
\boxed{P\bigl(\boldsymbol{X}_{1:\gamma} \text{ is generated}\bigr) = p(\boldsymbol{X}_{1:\gamma})}
\end{equation}

This completes the proof of lossless recovery.
\end{proof}

\subsubsection{A Extended Explaination of Capped Ratio}
\label{app:capped_lossless_proof}
Let \( r(x_1), r(x_2\mid x_{1}), \dots, r(x_t\mid \boldsymbol{X}_{1:t-1}) \in \mathbb{R}_{> 0} \) be a sequence of ratios. 

Define the cumulative product up to index \( t \) as:
\begin{equation}
r(\boldsymbol{X}_{1:t}) = \prod_{i=1}^{t} r(x_i\mid\boldsymbol{X}_{1:i-1}),
\end{equation}
where $\boldsymbol{X}_{1:0}$ is equal to the prefix.

Let \( j^* \) be the last index (up to \( k \)) such that:
\begin{equation}
j^* = \max \left\{ j \leq k \,\middle|\; r(x_j\mid\boldsymbol{X}_{1:j-1}) > 1 \text{ and } \prod_{i=1}^{j} r(x_i\mid\boldsymbol{X}_{1:i-1}) > 1 \right\}
\end{equation}

Then the capped cumulative product \( \tilde{R}_k \) is given by:
\begin{equation}
r*(\boldsymbol{X}_{1:t}) = 
\left( \prod_{i=1}^{j^*} r(x_i\mid\boldsymbol{X}_{1:i-1}) \right) \cdot \left( \prod_{i=j^*+1}^{k} r(x_i\mid\boldsymbol{X}_{1:i-1}) \right)
\end{equation}

This ensures that the cumulative product is capped at the last index \( j^* \) such that the individual ratio \( r(x_{j^*\mid\boldsymbol{X}_{1:j^*-1}}) > 1 \) and the cumulative product up to that point also exceeds 1.

When $\gamma$ is 3, lets show simplest example to show the recovery of target probability.

\begin{equation}
\begin{aligned}
P\left( \boldsymbol{X} _{1: 3} \text { is accepted }\right) 
& =q(\boldsymbol{X}_{1:3})
\end{aligned}
\end{equation}

\begin{equation}
\begin{aligned}
&P\left( \boldsymbol{X} _{1: 3} \text { is resampled}\right)  =\sum_{i=0}^{\gamma = 3} P\left(x_\gamma, x_{\gamma-1}, \ldots, x_{\gamma-i} \text { are resampled } \mid \boldsymbol{X} _{\gamma-i+1}\right) \\
& = D_{\operatorname{Branch}}^*\left(q, p \mid \boldsymbol{X} _{1: 3}\right) \cdot \frac{\max((r(x_3) - 1)q(\boldsymbol{X}_{1:3}), 0)}{D_{\operatorname{Branch}}^*\left(q, p \mid \boldsymbol{X} _{1: 3}\right)} \\
&+D_{\operatorname{Branch}}^*\left(q, p \mid \boldsymbol{X} _{1: 2}\right) \cdot \frac{\max((r(x_2) - 1)q(\boldsymbol{X}_{1:2}), 0)}{D_{\operatorname{Branch}}^*\left(q, p \mid \boldsymbol{X} _{1: 2}\right)} \cdot p(x_3|\boldsymbol{X}_{1:2})  \\
&+D_{\operatorname{Branch}}^*\left(q, p \mid x_1\right) \cdot \frac{\max((r(x_1) - 1)q(x_{1}), 0)}{D_{\operatorname{Branch}}^*\left(q, p \mid x_1\right)} \cdot p(x_3|\boldsymbol{X}_{1:2})p(x_2|x_{1}) \\ 
\end{aligned}
\end{equation}

Let's take $\gamma=3$ as an example, only if $r(\boldsymbol{X}_{1:3})>1$,  the resampled portion of probability mass is needed. Suppose $r(\boldsymbol{X}_{1:2})>1$ with $r(x_1)>1$ and $r(x_2)<1$:

\begin{equation}
\begin{aligned}
&= p(x_3|\boldsymbol{X}_{1:2})p(x_2|{x_1})q(x_1) - q(\boldsymbol{X}_{1:3}) + 0 \\ &+p(x_1)p(x_2|x_{1})p(x_3|\boldsymbol{X}_{1:2}) - q(x_1)p(x_2|x_{1})p(x_3|\boldsymbol{X}_{1:2})
\\
&= p(\boldsymbol{X}_{1:3}) - q(\boldsymbol{X}_{1:3})\\
\end{aligned}
\end{equation}

\subsection{Expected Number of Accepted Tokens}
\label{sec:length}
We conduct efficiency analysis based on the expected acceptance length $\mathbb{E}[\tau]$.
For a given draft length $\gamma$, the expected number of accepted tokens for the tokenwise speculative decoding \cite{leviathan2023fast}, blockwise verification \cite{sun2024block}, and our HSD  are as follows:

\begin{lemma} \textbf{Expected Number of Accepted Tokens} (See~\Cref{app:expected-length} for proof.)
\begin{equation}
\small
\begin{aligned}
\setlength{\abovedisplayskip}{2pt}
\setlength{\belowdisplayskip}{2pt}
\!\!\!\mathbb{E}[\tau]_{\text{token}} = \sum_{i=1}^\gamma \prod_{k=1}^{i} h_k^{\text{token}},
\mathbb{E}[\tau]_{\text{block}} = \sum_{i=1}^\gamma \left[1 - \prod_{k=i}^\gamma \left(1 - h_k^{\text{block}}\right)\right],
\mathbb{E}[\tau]_{\text{branch}} = \sum_{i=1}^\gamma \left[1 - \prod_{k=i}^\gamma \left(1 - h_k\right)\right]
\end{aligned}
\end{equation}
\end{lemma}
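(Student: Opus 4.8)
The plan is to derive all three formulas from a single tail-sum identity. Since $\tau$ is integer-valued with $\tau \in \{0,1,\dots,\gamma\}$, we have $\mathbb{E}[\tau] = \sum_{i=1}^{\gamma} P(\tau \geq i)$, so the entire problem reduces to computing $P(\tau \geq i)$ under each verification scheme. Throughout I would condition on the realized draft sequence $\boldsymbol{X}_{1:\gamma}$, so that every acceptance probability ($h_k^{\text{token}}$, $h_k^{\text{block}}$, or $h_k$) becomes a fixed number in $[0,1]$ and the only remaining randomness is the independent uniform draws $\eta_t \sim U(0,1)$ used in the accept/reject tests. The identities then hold for each realized draft, with $h_k$ the corresponding realized probabilities.

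For the tokenwise case I would use that verification proceeds \emph{forward}: position $k$ is tested only after $1,\dots,k-1$ have all been accepted, and conditioned on that prefix it passes with probability $h_k^{\text{token}}$ via a fresh uniform. Hence the event $\{\tau \geq i\}$ is exactly the event $\{\text{positions } 1,\dots,i \text{ are all accepted}\}$, and because the tests use independent uniforms the chain rule gives $P(\tau \geq i) = \prod_{k=1}^{i} h_k^{\text{token}}$. Substituting into the tail-sum identity yields $\mathbb{E}[\tau]_{\text{token}} = \sum_{i=1}^{\gamma}\prod_{k=1}^{i} h_k^{\text{token}}$.

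For HSD (the branch case) and for blockwise verification the key difference is that acceptance length is determined by a \emph{backward} scan (\Cref{alg:backward}): one sets $\tau$ to the largest index $t$ at which the per-position test $h_t \geq \eta_t$ succeeds, with $\tau = 0$ if none do. Consequently $\{\tau \geq i\}$ is no longer a conjunction over a prefix but rather the event that at least one test among positions $i,i+1,\dots,\gamma$ succeeds. Its complement $\{\tau < i\}$ is the event that \emph{every} test in $\{i,\dots,\gamma\}$ fails, and since each test uses an independent $\eta_t$ the failure indicators are independent with failure probability $1 - h_k$. Therefore $P(\tau \geq i) = 1 - \prod_{k=i}^{\gamma}(1 - h_k)$, and the tail-sum identity gives $\mathbb{E}[\tau]_{\text{branch}} = \sum_{i=1}^{\gamma}\bigl[1 - \prod_{k=i}^{\gamma}(1 - h_k)\bigr]$; the identical argument applied with $h_k^{\text{block}}$ (blockwise sharing the same longest-passing-prefix structure) yields the blockwise formula.

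The main obstacle is correctly characterizing the event $\{\tau \geq i\}$ under the backward scan and recognizing its structural asymmetry with the forward case: forward scanning requires a \emph{conjunction} of successes over the growing prefix $1,\dots,i$ (a product of the $h_k$), whereas backward scanning requires only a \emph{disjunction} of successes over the suffix $i,\dots,\gamma$ (one minus a product of the $1-h_k$). Making this precise rests on verifying the independence of the per-position tests given the draft, which follows directly from the use of fresh uniform draws $\eta_t$ in both \Cref{alg:naive} and \Cref{alg:backward}; once independence is in hand, the product factorizations and the telescoping tail sum are routine.
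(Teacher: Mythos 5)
Your proposal is correct and follows essentially the same route as the paper: the tail-sum identity $\mathbb{E}[\tau]=\sum_{i=1}^{\gamma}\Pr(\tau\ge i)$, combined with the observation that under the backward scan $\{\tau\ge i\}$ is the disjunction of independent per-position successes over the suffix $i,\dots,\gamma$ (giving $1-\prod_{k=i}^{\gamma}(1-h_k)$), while forward tokenwise verification requires a conjunction over the prefix (giving $\prod_{k=1}^{i}h_k^{\text{token}}$). If anything, your treatment is slightly more self-contained, since the paper derives only the HSD and blockwise formulas this way and defers the tokenwise case to prior work.
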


We establish Theorem~\ref{thm:expected_length_comparison}, which guarantees that HSD is more efficient than other lossless methods:

\begin{theorem} 
\label{thm:expected_length_comparison}
    HSD and Blockwise Achieves Better Expected Number of Accepted Tokens
\begin{equation}
\mathbb{E}[\boldsymbol{\tau}]_{\text {branch }} \geq \mathbb{E}[\boldsymbol{\tau}]_{\text {block }} \geq \mathbb{E}[\boldsymbol{\tau}]_{\text {token }}\end{equation}
where equality holds in both inequalities if and only if $\gamma = 1$.
(See \Cref{app:compare} for proof.)
\end{theorem}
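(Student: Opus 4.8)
The plan is to route all three quantities through the tail-sum identity $\mathbb{E}[\tau] = \sum_{i=1}^{\gamma} P(\tau \ge i)$ and then prove the ordering \emph{survival-function-wise}: it suffices to establish the pointwise chain $P(\tau_{\text{branch}} \ge i) \ge P(\tau_{\text{block}} \ge i) \ge P(\tau_{\text{token}} \ge i)$ for every $i \in \{1,\dots,\gamma\}$, after which summation over $i$ yields the theorem. Here the blockwise and branch summands are already survival probabilities: $1 - \prod_{k=i}^{\gamma}(1-h_k)$ is exactly the probability that the backward scan accepts a prefix of length at least $i$, whereas the tokenwise summand $\prod_{k=1}^{i} h_k^{\text{token}}$ is the probability that the left-to-right scan clears the first $i$ tokens. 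The equality claim at $\gamma = 1$ is then immediate, since every product and sum collapses to the single common factor $h_1 = \min\{1,\,p(x_1)/q(x_1)\}$ shared by all three schemes.

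For the upper inequality $\mathbb{E}[\tau]_{\text{branch}} \ge \mathbb{E}[\tau]_{\text{block}}$, I would exploit that both schemes share the identical backward functional $S_i(\cdot) = 1 - \prod_{k=i}^{\gamma}(1-\cdot)$ and differ only in their per-position acceptance probabilities. First I record the elementary monotonicity lemma: $S_i$ is non-decreasing in each argument $h_k \in [0,1]$, because $\partial S_i / \partial h_k = \prod_{j \ge i,\, j \neq k}(1-h_j) \ge 0$. It then suffices to prove the pointwise domination $h_k \ge h_k^{\text{block}}$, which is where the capping machinery of \Cref{def:r_star} and \Cref{def:d_star} enters: replacing the raw joint ratio by the capped ratio $r^*$ never shrinks the locally recoverable deficient mass $D^*_{\text{Branch}}(p,q\mid\boldsymbol{X}_{1:t})$ relative to the trigger mass $D^*_{\text{Branch}}(q,p\mid\boldsymbol{X}_{1:t})$, so the acceptance probability in \Cref{def:acceptance_probability} dominates the uncapped blockwise one position by position. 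Feeding $h_k \ge h_k^{\text{block}}$ into the monotonicity lemma gives $S_i(\text{branch}) \ge S_i(\text{block})$ for each $i$.

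For the lower inequality $\mathbb{E}[\tau]_{\text{block}} \ge \mathbb{E}[\tau]_{\text{token}}$ the comparison is between a backward scan and a greedy forward scan, and this is the step I expect to be the main obstacle. The reduction itself is clean: once the pointwise bound $h_k^{\text{block}} \ge h_k^{\text{token}}$ is in hand, the structural chain $\prod_{k=1}^{i} h_k^{\text{token}} \le \prod_{k=1}^{i} h_k^{\text{block}} \le h_i^{\text{block}} \le 1 - \prod_{k=i}^{\gamma}(1-h_k^{\text{block}})$ closes the gap, where the second step drops the other factors (each at most $1$) and the last step uses $\prod_{k=i}^{\gamma}(1-h_k^{\text{block}}) \le 1-h_i^{\text{block}}$. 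The genuinely delicate part is justifying $h_k^{\text{block}} \ge h_k^{\text{token}}$ from the blockwise acceptance definition, since blockwise pools deficient mass over the whole branch rather than committing token-by-token; I would handle this by expanding both acceptances in terms of the conditional ratios $r(x_k \mid \boldsymbol{X}_{1:k-1})$ and invoking the hierarchy identity of \Cref{thm:hierarchy} to show the aggregated mass never decreases the per-position acceptance.

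Finally I would assemble the per-$i$ survival inequalities and sum to obtain $\mathbb{E}[\tau]_{\text{branch}} \ge \mathbb{E}[\tau]_{\text{block}} \ge \mathbb{E}[\tau]_{\text{token}}$. For strictness when $\gamma > 1$, I would isolate a single index, e.g. $i = \gamma$, where the backward survival $1 - \prod_{k=\gamma}^{\gamma}(1-h_k^{\text{block}}) = h_\gamma^{\text{block}}$ strictly exceeds the forward product $\prod_{k=1}^{\gamma} h_k^{\text{token}}$ whenever some earlier factor $h_k^{\text{token}} < 1$, i.e. whenever a nontrivial divergence is present; together with the collapse to equality at $\gamma = 1$ this pins down the equality condition and completes the argument.
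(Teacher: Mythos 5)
Your first inequality, $\mathbb{E}[\tau]_{\text{branch}} \ge \mathbb{E}[\tau]_{\text{block}}$, follows the paper's route: both schemes share the backward-scan survival functional $1-\prod_{k=i}^{\gamma}(1-h_k)$, so pointwise domination of the acceptance probabilities plus monotonicity in each argument suffices. However, your justification of that domination is where all the work lies, and your sketch misstates the object being compared: blockwise is not ``uncapped'' --- it clamps via the running suffix minimum $p_t=\min\{1,\,r_t,\,r_{t-1}r_t,\dots,r_1\cdots r_t\}$ (\Cref{lem:suffix-minimum}). The paper's proof in \Cref{app:compare} rests on two concrete facts your proposal never isolates: (i) this suffix minimum is dominated by the one particular suffix ratio $r(\boldsymbol{X}_{m(\boldsymbol{X}_{1:t+1})+1:t+1})$ that HSD's capping selects, which orders the numerators of the two acceptance probabilities; and (ii) the separate denominator bound $\sum_{x_{t+1}}\bigl(1-r(\boldsymbol{X}_{m(\boldsymbol{X}_{1:t+1})+1:t+1})\bigr)\,q(\boldsymbol{X}_{1:t+1})\le 1-p_t$. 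Without these, ``capping never shrinks the recoverable mass'' is an assertion, not an argument.

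The genuine gap is in the second inequality. Your chain $\prod_{k=1}^{i}h_k^{\text{token}}\le\prod_{k=1}^{i}h_k^{\text{block}}\le h_i^{\text{block}}\le 1-\prod_{k=i}^{\gamma}(1-h_k^{\text{block}})$ hinges on the pointwise bound $h_k^{\text{block}}\ge h_k^{\text{token}}$, and that bound is false. Take $\gamma=2$, a drafted token $x_1$ with $r(x_1)=0.8$, and identical conditionals at the second position, $p(\cdot\mid x_1)=q(\cdot\mid x_1)$. Then $h_1^{\text{token}}=0.8$, whereas for blockwise $p_1=0.8$ and every clamped ratio $p_1\,r(x_2\mid x_1)=0.8<1$, so the deficient mass in the numerator of $h_1^{\text{block}}$ vanishes and $h_1^{\text{block}}=0$: your first link reads $0.8\le 0$. (Blockwise compensates at the last position, where $h_2^{\text{block}}=0.8$, and both expectations equal $1.6$, so the theorem itself survives --- but no position-by-position comparison of a forward product against a backward scan can prove it.) This is precisely why the paper does not prove $\mathbb{E}[\tau]_{\text{block}}\ge\mathbb{E}[\tau]_{\text{token}}$ itself: it cites Appendix~B.2 of \cite{sun2024block} for that half and only proves branch versus block. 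The same example also breaks your strictness argument: there $h_1^{\text{token}}=0.8<1$ yet all expectations coincide at $\gamma=2$, so ``some earlier $h_k^{\text{token}}<1$'' does not force strict inequality, and the ``iff $\gamma=1$'' clause cannot be established the way you propose.
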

\begin{wrapfigure}{r}{0.3\textwidth}
\setlength{\abovecaptionskip}{0pt} 
\setlength{\belowcaptionskip}{0pt} 
\centering
\includegraphics[width=0.3\textwidth]{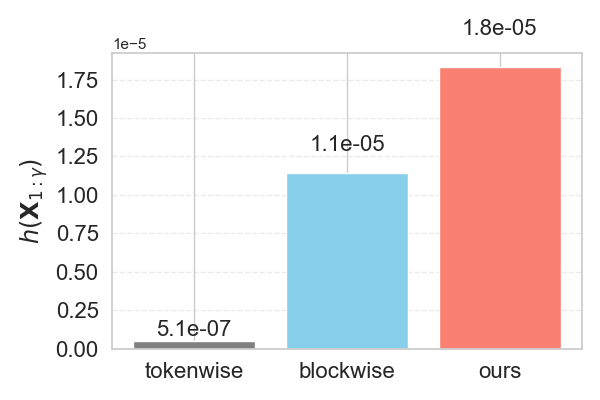}
\caption{\small{The average acceptance probability of the entire draft ($\tau=\gamma$) on GSM8K.}}
\label{fig:accept}
\vspace{-0.9cm}
\end{wrapfigure}
We reveal that limitations on acceptance probability in each method directly cause the gap from the ideal case w.r.t. expected accepted tokens. Let \( r(x_t) = \frac{p(x_t)}{q(x_t)} \). The acceptance probability of the entire draft  $h_{\gamma}$ is ideally \( \min\left\{ \prod_{t=1}^\gamma r(x_t), 1 \right\} \). In contrast, tokenwise acceptance is \( h_{\text{token}} = \prod_{t=1}^\gamma \min\{r(x_t), 1\} \), blockwise adopts \( h_{\text{block}} = \min\{ 1, r_\gamma, r_{\gamma-1}r_\gamma, \dots, r_1r_2\cdots r_\gamma \} \) (see \Cref{lem:suffix-minimum}), and HSD uses \( h_{\text{ours}} = \min\left\{ \min\left\{ \prod_{t=1}^{m(x_{\gamma})} r(x_t), 1 \right\}\prod_{t=m(x_{\gamma})+1}^{\gamma} r(x_t), 1 \right\}  \). See the average acceptance probability $h_\gamma$ on GSM8K in Fig.~\ref{fig:accept}.

Let \( \tau \in \{0, 1, \dots, \gamma\} \) denote the number of accepted tokens in a decoding attempt. Since \( \tau \) is a non-negative, integer-valued random variable, the tail-sum identity applies with lattice spacing \( a = 1 \).

\begin{lemma}[Tail Expectation]
Let $X$ be a non–negative random variable with values in \( \{na : n = 0, 1, 2, \dots\} \) for some \( a > 0 \). Then:
\begin{equation}
\mathbb{E}[X] = a \sum_{k=1}^{\infty} \Pr(X \ge k). \label{eq:tail-lattice}
\end{equation}
\end{lemma}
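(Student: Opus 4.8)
The plan is to reduce the claim to the classical tail-sum identity for a non-negative integer-valued random variable and then rescale by the lattice spacing. Since $X$ takes values in $\{na : n = 0, 1, 2, \dots\}$, I would first set $N := X/a$, so that $N$ takes values in $\{0, 1, 2, \dots\}$ and $\mathbb{E}[X] = a\,\mathbb{E}[N]$ by linearity. It then suffices to prove the integer identity $\mathbb{E}[N] = \sum_{k=1}^\infty \Pr(N \ge k)$ and multiply through by $a$; because the lattice spacing is $a$, the event $\{N \ge k\}$ is identical to $\{X \ge ka\}$, which recovers \eqref{eq:tail-lattice} under the lattice indexing and coincides exactly with $\{X \ge k\}$ in the application of interest, where $a = 1$.

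The core step is the layer-cake (Fubini--Tonelli) rearrangement for $N$. I would begin from the definition $\mathbb{E}[N] = \sum_{n=1}^\infty n\,\Pr(N = n)$, rewrite each integer weight as $n = \sum_{k=1}^\infty \mathbf{1}[k \le n]$, substitute, and interchange the two sums:
\begin{equation}
\mathbb{E}[N] = \sum_{n=1}^\infty \sum_{k=1}^n \Pr(N = n) = \sum_{k=1}^\infty \sum_{n=k}^\infty \Pr(N = n) = \sum_{k=1}^\infty \Pr(N \ge k),
\end{equation}
where the inner sum collapses to the tail probability $\Pr(N \ge k)$ by countable additivity.

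The only point requiring care --- and the closest thing to an obstacle in an otherwise routine argument --- is justifying the interchange of the two infinite sums. Because every summand $\Pr(N = n)$ is non-negative, Tonelli's theorem (equivalently, the unconditional reorderability of a double series of non-negative terms) licenses the swap with no integrability assumption, and the identity holds with both sides simultaneously finite or both infinite, so no finiteness hypothesis on $X$ is needed. Multiplying through by $a$ then yields $\mathbb{E}[X] = a\sum_{k=1}^\infty \Pr(X \ge ka)$, which is precisely \eqref{eq:tail-lattice} under lattice indexing and completes the proof. Since $\tau \in \{0, 1, \dots, \gamma\}$ is integer-valued with $a = 1$ and finite support, both the summation interchange and the finiteness of $\mathbb{E}[\tau]$ are immediate, so the lemma applies directly in the intended setting.
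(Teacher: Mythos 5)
Your proof is correct and is essentially the same argument as the paper's: both rest on interchanging a double sum of non-negative terms (the paper expands each tail probability $\Pr(X \ge ka)$ into point masses and swaps, while you expand the weight $n$ into indicators and swap, which is the same Tonelli step run in the opposite direction). Your preliminary rescaling $N = X/a$ and your remark clarifying that $\Pr(X \ge k)$ should be read as $\Pr(X \ge ka)$ are harmless refinements of the paper's presentation, not a different route.
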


\begin{proof}
Start with the right-hand side:
\begin{equation}
\begin{aligned}
a \sum_{k=1}^{\infty} \Pr(X \ge k a)
&= a \sum_{k=1}^{\infty} \sum_{\ell \ge k} \Pr(X = \ell a) \\
&= a \sum_{\ell = 1}^{\infty} \Pr(X = \ell a) \sum_{k=1}^{\ell} 1 \\
&= \sum_{\ell = 1}^{\infty} \ell a \cdot \Pr(X = \ell a) = \mathbb{E}[X]. 
\end{aligned}
\end{equation}
\end{proof}

\subsubsection{Expected Token Length Derivation}
\label{app:expected-length}

\subsubsection*{Token Wise Speculative Decoding}

Referring to \emph{Block‑wise Verification}~\cite{sun2024block}, the authors prove that it achieves a longer expected token length than the token‑wise verification~\cite{leviathan2023fast} (see Appendix~B.2 in ~\cite{sun2024block}).

\subsubsection*{Hierarchical Speculative Decoding}

Let \( \eta_1, \dots, \eta_\gamma \sim \mathcal{U}(0, 1) \) be the random draws used in verification. The accepted length is defined as:
\begin{equation}
\tau := \max \left\{ i \leq \gamma \,:\, \eta_i \leq h_i \right\}, \label{eq:tau-def}
\end{equation}
where \( h_i \) is the acceptance probability at step \( i \). By the tail-sum identity:
\begin{equation}
\mathbb{E}[\tau] = \sum_{i=1}^{\gamma} \Pr(\tau \ge i). \label{eq:tail-sum}
\end{equation}

If we define the event \( S_i := \{ \eta_i \leq h_i \} \), and assume independence of the draws, then:
\begin{equation}
\Pr(\tau \ge i) = 1 - \prod_{k=i}^{\gamma} (1 - h_k). \label{eq:tail-prob}
\end{equation}

Substituting into Equation~\eqref{eq:tail-sum}, we obtain:
\begin{equation}
\mathbb{E}[\tau] = \sum_{i=1}^{\gamma} \left[1 - \prod_{k=i}^{\gamma} (1 - h_k)\right]. \label{eq:expected-length-tail}
\end{equation}

\subsubsection*{Blockwise Verification}

In Algorithm 2 (blockwise decoding), the decoding continues even if some \( \eta_i > h_i^{\text{block}} \); the resampling happens only at the end. Therefore, the token count \( \tau \) still satisfies the same form.

Let \( h_i^{\text{block}} \) be the acceptance probability at step \( i \) computed via blockwise rules, and define events:
\begin{equation}
S_i := \{\eta_i \le h_i^{\text{block}}\}, \quad \text{so } \Pr(\overline{S_i}) = 1 - h_i^{\text{block}}.
\end{equation}

We then have:
\begin{equation}
\Pr(\tau \ge i) = 1 - \prod_{k=i}^{\gamma} (1 - h_k^{\text{block}}),
\end{equation}
and hence the expected number of accepted tokens under blockwise decoding is:
\begin{equation}
\mathbb{E}[\tau]_{\text{block}} = \sum_{i=1}^{\gamma}
\left[ 1 - \prod_{k=i}^{\gamma} (1 - h_k^{\text{block}}) \right] \label{eq:expected-length-blockwise}
\end{equation}

\subsubsection{Token Length Comparison}
\label{app:compare}

We re-express the acceptance probability to compare token length between block-wise speculative decoding and our method (~\Cref{def:acceptance_probability}). This yields a more precise comparison via the directional divergence expressions~\Cref{eq:d_star} and 
 ~\Cref{eq:side_by_side_align}.

\paragraph{Capped Branch Divergence Difference}
The difference of capped branch divergence is calculated as:
\begin{align}
    &D^*_{\text{Branch}}\left(p, q \mid \boldsymbol{X}_{1: t}\right) 
    - D^*_{\text{Branch}}\left(q, p \mid \boldsymbol{X}_{1: t}\right) \notag\\
    &= \sum_{x_{t+1}} \left( r^*(\boldsymbol{X}_{1:t+1}) - 1 \right) 
    q(\boldsymbol{X}_{1:t}) \notag\\
    &= \sum_{x_{t+1}}\bigl( \min\{r(\boldsymbol{X}_{0:m(t+1)}),1\}r(\boldsymbol{X}_{m(t+1)+1:t+1})-1\bigr)q(\boldsymbol{X}_{0:m(t+1)})q(\boldsymbol{X}_{m(t+1)+1:t+1})  \notag \\
    &= \sum_{x_{t+1}}\bigl(r(\boldsymbol{X}_{m(\boldsymbol{X}_{1:t+1})+1:t+1})-1\bigr)q(\boldsymbol{X}_{1:t+1}) 
    \label{eq:d_difference}
\end{align}

\paragraph{Branch Acceptance Probability}
Combine equations (\ref{eq:d_difference}), the acceptance ratio of hierarchical speculative decoding is:
\begin{equation}
\begin{aligned}
   h_t^{\mathrm{branch}}
   &=\frac{D^*_{\text{Branch}}\left(p, q \mid \boldsymbol{X}_{1: t}\right) }{D^*_{\text{Branch}}\left(q, p \mid \boldsymbol{X}_{1: t}\right) }\\&=\frac{D^*_{\text{Branch}}\left(p, q \mid \boldsymbol{X}_{1: t}\right) }{D^*_{\text{Branch}}\left(p, q \mid \boldsymbol{X}_{1: t}\right) +\sum (1-r(\boldsymbol{X}_{m(\boldsymbol{X}_{1:t+1})+1:t+1}))q(\boldsymbol{X}_{1:t+1})}\\
&=\frac{\sum[r(\boldsymbol{X}_{m(\boldsymbol{X}_{1:t+1})+1:t+1})-1]_+}{\sum[r(\boldsymbol{X}_{m(\boldsymbol{X}_{1:t+1})+1:t+1})-1]_++\sum(1-r(\boldsymbol{X}_{m(\boldsymbol{X}_{1:t+1})+1:t+1}))} \label{eq: h_branch}
\end{aligned}
\end{equation}

where $[a]_+$ is equal to $\max\{a,0\}$

\paragraph{Blockwise Acceptance Ratio}
Algorithm 2 (blockwise decoding), blockwise keeps an internal clamp
\(
p_{t} = \min\{p_{t-1}\,r(x_{t}|\boldsymbol{X}_{1:t-1}),1\}
\), which could be simplified based on Suffix–minimum characterization of \(p_t\)
\begin{lemma}[Suffix–minimum characterization of \(p_t\)]
\label{lem:suffix-minimum}
Let \(\{r_i\}_{i=1}^{\infty}\subseteq[0,\infty)\) and define the sequence \(\{p_t\}_{t\ge 0}\) recursively by
\begin{equation}
   p_0 \;=\; 1,
   \qquad
   p_t \;=\; \min\!\bigl\{\,p_{t-1}\,r_t,\;1\bigr\},
   \quad t\ge 1.
\end{equation}
Then for every \(t\ge 0\)
\begin{equation}\label{eq:pt_closed}
   p_t \;=\; \min_{0\le s\le t}\;
             \prod_{i=s+1}^{t} r_i,
   \qquad
   (\text{with the empty product for }s=t\text{ equal to }1).
\end{equation}
Equivalently,
\begin{equation}
   p_t \;=\; \min\!\bigl\{\,1,\;r_t,\;r_{t-1}r_t,\,\dots,\;r_1r_2\cdots r_t\bigr\}.
\end{equation}
\end{lemma}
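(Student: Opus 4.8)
The plan is to establish the closed form by induction on $t$, exploiting the fact that the recursion $p_t = \min\{p_{t-1}\, r_t,\, 1\}$ extends each candidate suffix product by exactly one factor at every step. For the base case $t = 0$, the recursion gives $p_0 = 1$, and the right-hand side collapses to the single index $s = 0$, whose product is empty and equal to $1$ by convention, so the two sides agree.

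For the inductive step, I would assume $p_{t-1} = \min_{0 \le s \le t-1} \prod_{i=s+1}^{t-1} r_i$ and substitute it into the recursion. The crucial move is to push the factor $r_t$ inside the minimum: since each product $\prod_{i=s+1}^{t-1} r_i$ is non-negative and $r_t \ge 0$, scalar multiplication commutes with the minimum, giving
\begin{equation}
p_{t-1}\, r_t = r_t \min_{0 \le s \le t-1} \prod_{i=s+1}^{t-1} r_i = \min_{0 \le s \le t-1} \prod_{i=s+1}^{t} r_i,
\end{equation}
which re-indexes every suffix product to terminate at $t$ for $s \in \{0,\dots,t-1\}$. Applying the outer $\min\{\cdot,\,1\}$ then adjoins the remaining index $s = t$, whose product is again empty and equal to $1$, so that $p_t = \min_{0 \le s \le t} \prod_{i=s+1}^{t} r_i$. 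The equivalent expanded form follows immediately by writing this minimum out explicitly over $s = t, t-1, \dots, 0$.

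The step I expect to require the most care — though it is the heart of the argument rather than a genuine obstacle — is the distribution of the scalar $r_t$ across the minimum. This identity relies essentially on the hypothesis $\{r_i\} \subseteq [0,\infty)$: a negative factor would reverse the ordering of the products and invalidate the commutation. I would also state the empty-product convention for $s = t$ explicitly, so that the clamp at $1$ in the recursion is unambiguously matched with the $s = t$ term in the closed form. No additional estimates are needed, and the induction closes directly.
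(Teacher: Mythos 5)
Your proposal is correct and follows essentially the same route as the paper: induction on $t$, with the key step of distributing the nonnegative factor $r_t$ across the minimum of suffix products and letting the clamp $\min\{\cdot,1\}$ supply the empty-product term $s=t$. Your explicit justification that the commutation $r_t\min\{\cdots\}=\min\{r_t\cdot\,\cdots\}$ requires $r_t\ge 0$ is a point the paper's proof leaves implicit, but the argument is otherwise identical.
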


\begin{proof}
We prove \eqref{eq:pt_closed} by induction on \(t\).

\medskip
\noindent\textbf{Base case (\(t=0\)).}
For \(t=0\) the right–hand side becomes
\begin{equation}
   \min_{0\le s\le 0}(\text{empty product}) = 1 = p_0,
\end{equation}
so the claim holds.

\medskip
\noindent\textbf{Inductive step.}
Assume \eqref{eq:pt_closed} holds for some \(t-1\ge 0\).  Using the
recurrence,
\begin{equation}
   p_t \;=\; \min\!\bigl\{\,1,\;p_{t-1}\,r_t\bigr\}.
\end{equation}
By the induction hypothesis,
\begin{equation}
   p_{t-1} = \displaystyle\min_{0\le s\le t-1}\prod_{i=s+1}^{t-1} r_i.
\end{equation}
Substituting,
\begin{equation}
   p_t
   = \min\!\Bigl\{\,1,\;
                   \bigl[\min_{0\le s\le t-1}
                         \prod_{i=s+1}^{\,t-1} r_i\bigr] r_t
             \Bigr\}.
\end{equation}
Multiplying every candidate product in the inner minimum by \(r_t\)
and then taking the outer minimum yields exactly all suffix products
\begin{equation}
   \prod_{i=s+1}^{t} r_i
\end{equation}
for \(s = 0,\dots ,t-1\), together with the empty product \(1\) for
\(s=t\).  Hence \eqref{eq:pt_closed} holds for \(t\), completing
the induction. And obviously, $p_t<r(\boldsymbol{X}_{start:t})$, where $start\in(1,t-1)$ 
\end{proof}

\begin{equation}
\begin{aligned}
&\!\!\!\!\!h^{\text{block}}_t\!\!=\!\!\frac{\displaystyle\sum_{x_{t+1}} (p_{t}r(x_{t+1}|\boldsymbol{X}_{1:t})-1)_{+}\;q(x_{t+1}\mid \boldsymbol{X}_{1:t})}{\displaystyle\sum_{x_{t+1}} (p_{t}r(x_{t+1}|\boldsymbol{X}_{1:t})-1)_{+}\;q(x_{t+1}\mid \boldsymbol{X}_{1:t})\;+\;1-p_{t}} \\
&\!\!\!\!\!\!\!=\!\!\frac{\displaystyle\sum_{x_{t+1}}(\min\!\bigl\{r(x_{t+1}),\!r(\boldsymbol{X}_{t:t+1}),\!r(\boldsymbol{X}_{t-1:t+1}),\dots,\!r(\boldsymbol{X}_{1:t+1})\bigr\}\!-\!1)_{\!+\!\!}\;q(x_{t+1}\!\mid\! \boldsymbol{X}_{1:t})}{\displaystyle\sum_{x_{t+1}}(\min\!\bigl\{\,r(x_{t+1})\!,\!\;r(\boldsymbol{X}_{t:t+1})\!,\!\;r(\boldsymbol{X}_{t-1:t+1})\!,\!\dots\!,\!r(\boldsymbol{X}_{1:t+1})\bigr\}\!-\!1)_{\!+\!}\!q(x_{t+1}\!\mid\! \boldsymbol{X}_{1:t})\!+\!1\!-\!p_{t}}
\end{aligned}
\end{equation}

Since $\min\!\bigl\{\,r(x_{t+1}),\;r(\boldsymbol{X}_{t:t+1}),\;r(\boldsymbol{X}_{t-1:t+1}),\,\dots,\;r(\boldsymbol{X}_{1:t+1})\bigr\}\leq r(\boldsymbol{X}_{m(\boldsymbol{X}_{1:t+1})+1:t+1})$,
\begin{equation}
\begin{aligned}
    h^{\text{block}}_t&
    \leq\frac{\displaystyle
        \sum_{x_{t+1}} (r(\boldsymbol{X}_{m(\boldsymbol{X}_{1:t+1})+1:t+1})-1)_{+}\;
                   q(x_{t+1}\mid \boldsymbol{X}_{1:t})}
       {\displaystyle
        \sum_{x_{t+1}} (r(\boldsymbol{X}_{m(\boldsymbol{X}_{1:t+1})+1:t+1})-1)_{+}\;
                   q(x_{t+1}\mid \boldsymbol{X}_{1:t})
        \;+\;1-p_{t}}\\
&\leq\frac{\displaystyle
        \sum_{x_{t+1}} (r(\boldsymbol{X}_{m(\boldsymbol{X}_{1:t+1})+1:t+1})-1)_{+}}
       {\displaystyle
        \sum_{x_{t+1}} (r(\boldsymbol{X}_{m(\boldsymbol{X}_{1:t+1})+1:t+1})-1)_{+}
        \;+\;1-p_{t}}
\end{aligned}
\end{equation}

From equation \ref{eq: h_branch}:
\begin{equation}
\begin{aligned}
    \sum_{x_{t+1}}(1-r(\boldsymbol{X}_{m(\boldsymbol{X}_{1:t+1})+1:t+1}))&=q(\boldsymbol{X}_{m(t+1)+1:t})-p(\boldsymbol{X}_{m(t+1)+1:t})\\&=(1-r(\boldsymbol{X}_{m(t+1)+1:t}))q(\boldsymbol{X}_{m(t+1)+1:t})\\&\leq (1-p_t)q(\boldsymbol{X}_{m(t+1)+1:t}) \\&\leq (1-p_t)
\end{aligned}
\end{equation}
Since
\begin{equation}
\begin{aligned}
   h_t^{\mathrm{branch}}
   &=\frac{\sum_{x_{t+1}}[r(\boldsymbol{X}_{m(\boldsymbol{X}_{1:t+1})+1:t+1})-1]_+}{\sum_{x_{t+1}}[r(\boldsymbol{X}_{m(\boldsymbol{X}_{1:t+1})+1:t+1})-1]_++\sum_{x_{t+1}}(1-r(\boldsymbol{X}_{m(\boldsymbol{X}_{1:t+1})+1:t+1}))} \\
   &\geq \frac{\sum_{x_{t+1}}[r(\boldsymbol{X}_{m(\boldsymbol{X}_{1:t+1})+1:t+1})-1]_+}{\sum_{x_{t+1}}[r(\boldsymbol{X}_{m(\boldsymbol{X}_{1:t+1})+1:t+1})-1]_++1-p_t}\\
   &\geq h_t^{\mathrm{block}}
\end{aligned}
\end{equation}

\newpage

\subsection{Extended Experiments}
\label{ap:experiment_extenstion}

\noindent\textbf{Result Robustness} To prove the robustness of our experiments and guarantee fair comparison, we conduct additional experiments with different methods as shown in Table \ref{tab:std}. We observe that our method demonstrates stable performance and exceeds both tokenwise and blockwise methods on average.

\begin{table}[t]
    \centering
        \caption{Comparison of different algorithm performance on GSM8K with Qwen-2.5. We list the average and standard deviation across 5 runs with different seeds.}
    \begin{tabular}{cccc}
\toprule
    \textbf{Method}&\textbf{Tokenwise}&\textbf{Blockwise}&\textbf{Ours} \\
   \midrule
         Block Efficiency& 6.40$\pm$0.10& 6.51$\pm$0.09&6.64$\pm$0.04 \\         Decoding Speed&31.52$\pm$0.06 &31.70$\pm$0.05 &32.61$\pm$0.02\\
\bottomrule
    \end{tabular}

    \label{tab:std}
\end{table}

\noindent\textbf{Verification of Task Performance} 
We compare our method with the token-wise approach on GSM8K. As shown in Table \ref{tab:acc}, our method achieves equivalent (or better) accuracy among different model sizes, demonstrating the preserved distributional fidelity.

\begin{table}[!t]
\centering
\caption{Comparison of task performance across model sizes and methods.}
\begin{tabular}{lcccc}
\toprule
\textbf{Metric} & \textbf{Method} & \textbf{72B} & \textbf{32B} & \textbf{14B} \\
\midrule
\multirow{2}{*}{GSM8K (Accuracy)} 
  & Tokenwise & 0.8213   & 0.8213   & 0.8327 \\
  & HSD       & 0.8517   & 0.8479   & 0.8327 \\
\bottomrule
\end{tabular}
\label{tab:acc}
\end{table}



\noindent\textbf{Capped Prefix Ratio Ablation Study}
We conducted ablations on the role of the capped prefix ratio in Algorithm 2 (HSD), which is essential to preserve distributional fidelity, as shown in Table \ref{tab:Capping_ablation}. Removing capping (i.e., directly using the uncapped ratio to compute divergences) yields a slight increase in efficiency, but at the cost of varying degrees of performance degradation (which may or may not be obvious depending on the task).




\begin{table}[!t]
\centering
\caption{Ablation on capping mechanism.}
\begin{tabular}{lcccc}
\toprule
\textbf{Dataset} & \textbf{Method} & \textbf{ACC} & \textbf{BE} &\textbf{DS} \\
\midrule
  GSM8K & HSD  & 84.40±1.75\%   & 6.76±0.05 & 33.63±0.53\\
  HumanEval & HSD       & 80.61±0.69\%   & 5.60±0.06 & 29.15±0.43 \\
    GSM8K & HSD + Capping  & 84.96±0.93\%   & 	6.63±0.06 & 32.73±0.55\\
  HumanEval & HSD  + Capping      & 82.47±1.15\%  & 5.45±0.08& 27.54±0.48  \\
\bottomrule
\end{tabular}
\label{tab:Capping_ablation}
\end{table}


\subsection{Python Implementation}
\label{ap:python}
We provide the Python implementation of our Hierarchical Speculative Decoding (HSD) algorithm in \Cref{ls:hsd}, which builds upon the token-wise speculative decoding approach from Hugging Face \cite{wolf-etal-2020-transformers} Transformers v4.46.3, shown in \Cref{ls:sd} for comparison. Following Hugging Face, our implementation eliminates the use of an explicit for-loop by leveraging an equivalent masking mechanism: we perform parallel sampling across all positions to determine whether to accept or reject subsequences of varying lengths, and then select the longest accepted prefix as the final output.

\begin{listing}[h]%
\caption{Tokenwise Speculative Decoding (SD) {\tt SD.py}}%
\label{ls:sd}%
\begin{lstlisting}[language=Python]
import torch

def SD(candidate_input_ids, candidate_logits, new_logits):
    """
    Args:
        candidate_input_ids (Tensor): Token IDs from the draft model. Shape: [batch_size, seq_len]
        candidate_logits (Tensor): Logits from the draft model. Shape: [batch_size, seq_len, vocab_size]
        new_logits (Tensor): Logits from the target model. Shape: [batch_size, seq_len, vocab_size]
    Returns:
        n_matches (int): Number of accepted tokens from the draft model.
        valid_tokens (Tensor): Accepted token prefix with one new token sampled. Shape: [batch_size, n_matches+1]
    """

    # Convert logits to probabilities
    q = candidate_logits.softmax(dim=-1)
    p = new_logits.softmax(dim=-1)

    candidate_length = candidate_logits.shape[1]
    new_candidate_input_ids = candidate_input_ids[:, -candidate_length:]

    # Extract token-wise probabilities for the candidate tokens
    q_i = q[:, torch.arange(candidate_length), new_candidate_input_ids].squeeze(1)
    p_i = p[:, torch.arange(candidate_length), new_candidate_input_ids].squeeze(1)

    probability_ratio = p_i / q_i
    is_accepted = torch.rand_like(probability_ratio) <= probability_ratio

    # assuming batch size = 1
    n_matches = ((~is_accepted).cumsum(dim=-1) < 1).sum()  # this is `n` in algorithm 1

    # Next token selection: if there is a rejection, adopt the resampling distribution.
    if n_matches < candidate_length:
        p_n_plus_1 = p[:, n_matches, :]
        q_n_plus_1 = q[:, n_matches, :]
        p_prime = torch.clamp((p_n_plus_1 - q_n_plus_1), min=0)
        p_prime.div_(p_prime.sum())
    else:
        p_prime = p[:, n_matches, :]

    # Ensure we don't generate beyond max_len or an EOS token.
    if is_done_candidate[0] and n_matches == candidate_length:

        # Output length is assumed to be `n_matches + 1`. Since we won't generate another token with the target model
        # due to acceptance on EOS we fix `n_matches`
        n_matches -= 1
        valid_tokens = candidate_input_ids[:, -candidate_length:]

    else:
        # Next token selection: if there is a rejection, adjust the distribution from the main model before sampling.
        # The selected tokens include the matches (if any) plus the next sampled tokens
        if n_matches > 0:
            if n_matches < candidate_length:
                valid_tokens = candidate_input_ids[:, -candidate_length:n_matches - candidate_length]
                if not stop(valid_tokens, scores=None):
                    t = torch.multinomial(p_prime, num_samples=1)
                    valid_tokens = torch.cat(
                        (valid_tokens, t), dim=-1)
                else:
                    n_matches = n_matches-1
            else:
                valid_tokens = candidate_input_ids[:, -candidate_length:]
                if not stop(valid_tokens, scores=None):
                    t = torch.multinomial(p_prime, num_samples=1)
                    valid_tokens = torch.cat(
                    (valid_tokens, t), dim=-1)
                else:
                    n_matches = n_matches -1
        else:
            t = torch.multinomial(p_prime, num_samples=1)
            valid_tokens = t

    return valid_tokens, n_matches

    


\end{lstlisting}
\end{listing}

\begin{listing}[t]%
\caption{Hierarchical Speculative Decoding (HSD) {\tt HSD.py}}%
\label{ls:hsd}%
\begin{lstlisting}[language=Python]
import torch

def HSD(candidate_input_ids, candidate_logits, new_logits):
    """
    Args:
        candidate_input_ids (Tensor): Token IDs from the draft model. Shape: [batch_size, seq_len]
        candidate_logits (Tensor): Logits from the draft model. Shape: [batch_size, seq_len, vocab_size]
        new_logits (Tensor): Logits from the target model. Shape: [batch_size, seq_len, vocab_size]
    Returns:
        n_matches (int): Number of accepted tokens from the draft model.
        valid_tokens (Tensor): Accepted token prefix with one new token sampled. Shape: [batch_size, n_matches+1]
    """
    
    # Convert logits to probabilities
    q = candidate_logits.softmax(dim=-1)
    p = new_logits.softmax(dim=-1)
    candidate_length = candidate_logits.shape[1]
    new_candidate_input_ids = candidate_input_ids[:, -candidate_length:]
    
    # Extract token-wise probabilities for the candidate tokens
    q_i = q[:, torch.arange(candidate_length), new_candidate_input_ids].squeeze(1)
    p_i = p[:, torch.arange(candidate_length), new_candidate_input_ids].squeeze(1)
    
    # Compute cumulative joint probabilities for draft and target model
    q_prev = torch.roll(q_i, shifts=1, dims=1)
    q_prev[:, 0] = 1.0
    q_cumprod = torch.exp(torch.log(q_prev).cumsum(dim=1)).unsqueeze(-1)
    q_next = q_cumprod * q[:, :candidate_length]
    p_prev = torch.roll(p_i, shifts=1, dims=1)
    p_prev[:, 0] = 1.0
    p_cumprod = torch.exp(torch.log(p_prev).cumsum(dim=1)).unsqueeze(-1)
    
    # Constrain p_cumprod with q_cumprod for computing the capped resampling distribution
    ratio = p_cumprod / q_cumprod
    previous_max = 1
    new_p_previous = torch.ones_like(p_cumprod).to(p_cumprod.device)
    for k in range(candidate_length):
        if ratio[:, k] > previous_max:
            previous_max = ratio[:, k]
        new_p_previous[:, k] = p_cumprod[:, k] / previous_max
    p_next = new_p_previous * p[:, :candidate_length]
    
    # Construct resampling distribution p'
    diffs = p_next - q_next
    p_plus = torch.clamp(diffs, min=0.0)
    p_minus = torch.clamp(-diffs, min=0.0)
    p_primes = p_plus / torch.maximum(p_plus.sum(dim=-1, keepdim=True), p_minus.sum(dim=-1, keepdim=True))
    
    # Step-back probability: reject prefix with 1 - mass of p'
    step_back_probs = 1 - p_primes.sum(dim=-1)
    step_back = torch.rand_like(step_back_probs) < step_back_probs
    
    # Find first position to stop (from the end)
   if step_back.all():
        stop_positions = 0
    else:
        stop_positions = candidate_length - n_matches - 1 - torch.flip(~step_back, [-1]).max(-1, keepdim=True)[1]
    
    # Mask to decide which tokens are accepted
    select = torch.zeros_like(step_back).to(step_back.device)

    # apply cumprod on the ratio instead of the raw probabilities to avoid underflow
    probability_ratio = (p_i / q_i).cumprod(1).unsqueeze(-1)
    is_accepted = torch.rand_like(probability_ratio) <= probability_ratio
    
    # only decide to accept or not at the last position based on the joint probability ratio
    # assign 0 to all positions when the full draft is rejected, otherwise assign 1 to the rest of the positions
    select[torch.arange(p_primes.shape[0]), stop_positions] = ~is_accepted[:, -1:]
    is_accepted = 1 - torch.cumsum(select, dim=-1)

    #### assume batch_size=1 for the current implementation
    n_matches = is_accepted.sum().item()
\end{lstlisting}
\end{listing}
\clearpage
\begin{listing}[h]
\ContinuedFloat
\caption{Hierarchical Speculative Decoding {\tt HSD.py} (cont.)}
\begin{lstlisting}
    if is_done_candidate[:] and n_matches == candidate_length:
        # Output length is assumed to be `n_matches + 1`. Since we won't generate another token with the target model
        # due to acceptance on EOS we fix `n_matches`
        n_matches -= 1
        # valid_tokens = new_candidate_input_ids[:, : n_matches + 1]
        valid_tokens = candidate_input_ids[:, -candidate_length:]

    else:
        # Next token selection: if there is a rejection, adjust the distribution from the main model before sampling.
        gamma = candidate_length
        p_n_plus_1 = p[:, candidate_length, :]
        if n_matches < gamma:
            p_prime = p_primes[:, n_matches]
            p_prime = p_prime/p_prime.sum(-1, keepdim=True)
        else:
            p_prime = p_n_plus_1

        # The selected tokens include the matches (if any) plus the next sampled tokens
        # because if n_matches=0, we add one resampled token for sure, if n_matches=10, we add one more for sure
        # as well, because the previous if checked not stop and n_matches-candidate_length will be 0 causing problem
        if n_matches > 0 and n_matches<candidate_length:
            valid_tokens = candidate_input_ids[:, -candidate_length:n_matches-candidate_length]
            if not stop(candidate_input_ids[:, :n_matches-candidate_length], scores=None):
                t = torch.multinomial(p_prime, num_samples=1)
                valid_tokens = torch.cat(
                    (valid_tokens, t), dim=-1)
            else:
                n_matches = n_matches-1
        else:
            t = torch.multinomial(p_prime, num_samples=1)
            if n_matches==0:
                valid_tokens = t
            else:
                valid_tokens = candidate_input_ids[:, -candidate_length:]
                valid_tokens = torch.cat(
                    (valid_tokens, t), dim=-1)

    return valid_tokens, n_matches
\end{lstlisting}
\end{listing}

\clearpage

\subsection{Integration with Recursive Reject Sampling in the Multi-Draft Setup}
We demonstrate in \Cref{alg:backward_rrs} that our HSD algorithm is compatible with existing lossless multi-draft verification methods, exemplified by Recursive Reject Sampling (RRS) with replacement~\cite{yang2024multi}. Notably, independently sampled parallel draft sequences do not guarantee the existence of an additional draft sequence that shares the accepted subsequence as its prefix.

\begin{algorithm}[h]
\setstretch{1}
\caption{Hierarchical Speculative Sampling with Recursive Rejection Sampling}
\label{alg:backward_rrs}
\begin{algorithmic}[1]
\REQUIRE 
Draft tokens: $\boldsymbol{X}^k_{1:t}=\{x^k_1, ..., x^k_\gamma \}_{k=1}^K$;\\ 
Target probabilities for all draft tokens:  $\{p(\cdot),..., p(\cdot|\boldsymbol{X}^k_{1:\gamma})\}_{k=1}^K$;\\
Draft probabilities for all draft tokens:  $\{q(\cdot),..., q(\cdot|\boldsymbol{X}^k_{1:\gamma})\}_{k=1}^K$;\\

\STATE Initialize $\tau = 0$;
\STATE Initialize $\{x_i^1\}_1^\gamma$;

\FOR{$k$ \textbf{in} $1:K$}
    \IF{$\boldsymbol{X}_{1:\tau} = \boldsymbol{X}_{1:\tau}^k$}
        \FOR{$j$ \textbf{in} $\tau+1:\gamma$}
           \STATE \textbf{Set} $x_j=x^k_j$    \hfill {\color{blue}\textit{\#select draft $\boldsymbol{X}^k_{\tau+1:\gamma}$} for verification}
        \ENDFOR
        \STATE ~
        \FOR{$t$ \textbf{in} $\gamma:\tau+1$}
            \STATE Compute acceptance probability $h_t$ from \Cref{def:acceptance_probability} based on the corresponding probabilities for the draft tokens: $\{x_{\tau+1}, ..., x_\gamma\}$
            
            \STATE Sample $\eta_t \sim U(0,1)$
        
            \IF{$h_t \geq \eta_t$}
                 \STATE \textbf{Set} $\tau = t$
                \STATE \textbf{break}
            \ELSE
                 \STATE \textbf{Set} $\tau = t-1$
                \STATE \textbf{continue}
            \ENDIF
        \ENDFOR
    \ELSE
        \STATE \textbf{continue}  \hfill {\color{blue}\textit{\#skip  draft $\boldsymbol{X}^k_{1:\gamma}$} due to prefix mismatch}
    \ENDIF
    \STATE ~
    \IF{$\tau = \gamma$}
        \STATE Sample token from $p(\cdot|\boldsymbol{X}_{1:\gamma})$ \hfill {\color{blue}\textit{\#accept the entire selected draft and sample a bonus token}}
         \STATE \textbf{break}
    \ELSE
        \STATE 
           \textbf{Compute} $P^*_{\text{res}}(\cdot \mid 
           \boldsymbol{X}_{1:\tau})$;\\
             \STATE \textbf{Set} $p(\cdot|\boldsymbol{X}_{1:\tau})=P^*_{\text{res}}(\cdot \mid \boldsymbol{X}_{1:\tau})$;  \hfill {\color{blue}\textit{\#set  $P^*_{\text{res}}(\cdot \mid \boldsymbol{X}_{1:\tau})$} as new target distribution} \\
            \textbf{Set} $r(\cdot|\boldsymbol{X}_{1:\tau})=\frac{P^*_{\text{res}}(\cdot \mid \boldsymbol{X}_{1:\tau})}{q(\tilde{x}|\boldsymbol{X}_{1:\tau})}$  \hfill {\color{blue}\textit{\#set  $r(\cdot \mid \boldsymbol{X}_{1:\tau})$} as new probability ratio}
    \ENDIF
        \STATE ~
\ENDFOR 

Sample token from $P^*_{\text{res}}(\cdot \mid \boldsymbol{X}_{1:\tau})$
\ENSURE $[\boldsymbol{X}_{1:\tau}, \text{token}]$
\end{algorithmic}

\end{algorithm}

\clearpage

\subsection{Computation Efficiency}
\label{app:computation}

We begin by noting that the computational cost of the verification stage in HSD is \textit{effectively as efficient as that of tokenwise verification} in practice.

While there are minor differences in the computational cost of verification---whether using any of the three verification methods---these differences are \textbf{insignificant in practice} compared to the reduction in target model forward passes. Indeed, \textbf{block efficiency (or equivalently, the acceptance rate) remains the most meaningful metric for evaluating performance}. A detailed complexity analysis is provided in the revised version below:

Both HSD (Eqs.\ 17--20 in our paper) and blockwise verification (Eqs.\ 4--5 in \cite{sun2024block}) require summing over the vocabulary to compute the acceptance probability at each position. Therefore, HSD introduces \textbf{no theoretical overhead} compared to blockwise verification.

Moreover, our implementation is \textbf{more efficient} than that of blockwise verification (Appendix A in \cite{sun2024block} ). By leveraging an equivalent masking mechanism, we eliminate the for-loop and compute probabilities for all positions in parallel. This makes HSD nearly as efficient as tokenwise verification. While tokenwise verification only sums over the vocabulary at a rejected position, \textbf{our HSD computation is fully parallelized via tensor operations across both the vocabulary and the draft length $\gamma$, which is typically much smaller than the vocabulary size $\mathcal{V}$}.

Importantly, the computational cost of verification is negligible relative to the reduction in target model forward passes, which is the main bottleneck in verification. Below, we compare the verification cost with the forward-pass reduction for a batch size of~1.

Since the vocabulary size $\mathcal{V}$ is much larger than the draft length $\gamma$, the main cost of HSD arises from computing branch divergences, which requires only $4 \gamma \mathcal{V}$ FLOPs:

\begin{enumerate}
    \item $r^*(\mathbf{X}_{1:t}) - 1$ $\rightarrow$ $\gamma \mathcal{V}$ FLOPs  
    \item Selecting $(r^*(\mathbf{X}_{1:t}) - 1 > 0)$ via the $\max$ operator $\rightarrow$ $\gamma \mathcal{V}$ FLOPs  
    \item Multiplication by $q$ $\rightarrow$ $\gamma \mathcal{V}$ FLOPs  
    \item Summation over the vocabulary $\rightarrow$ $\gamma \mathcal{V}$ FLOPs  
\end{enumerate}

For Qwen-2.5 with $|\mathcal{V}| = 151{,}643$ and $\gamma = 10$, this amounts to approximately \textbf{5.8M FLOPs}.

In contrast, \textbf{the forward-pass FLOPs per new token} in large language models are orders of magnitude larger, even with KV-cache inference. Considering the main contributions:

\begin{itemize}
    \item $Q \cdot K$ dot products  
    \item Attention score $\times V$  
    \item All projection/MLP FLOPs  
    \item Ignoring softmax, LayerNorm, rotary embeddings, and bias  
\end{itemize}

The per-token FLOPs can be approximated as:
\[
\text{FLOPs per new token} \approx L \cdot \Big[4 d L_{\text{past}} + 4 d^2 + 2 d d_{\text{ff}} \Big],
\]
where $L$ is the number of transformer layers, $d$ is the hidden size, $d_{\text{ff}}$ is the MLP intermediate size, and $L_{\text{past}}$ is the number of cached tokens.

Using a context length $L_{\text{past}} = 1024$, the per-token FLOPs for Qwen2.5 models are:

\begin{itemize}
    \item Qwen2.5-0.5B: \textbf{0.374 GFLOPs}  
    \item Qwen2.5-72B: \textbf{62.915 GFLOPs}  
\end{itemize}

As shown in \Cref{tab:main_table} in our paper, all methods achieve block efficiency larger than 2. Consequently, the cost of HSD verification is negligible relative to the reduction in target model forward passes.

To directly quantify the overhead of the verification stage, we evaluate verification cost on 100 GSM8K problems using GPTQ-quantized 8-bit Qwen2.5-72B-Instruct and Qwen2.5-0.5B-Instruct as target and draft models on a single H200 GPU. As shown in \Cref{tab:runtime}, the verification stage consistently accounts for \textbf{less than 1\%} of the total decoding time. At the same time, the vast majority of runtime is spent in the draft and target forward passes. Here, the draft forward pass accounts for about 24\% of the runtime, and the target forward pass accounts for about 72\% in both blockwise and HSD. The verification stage of HSD is about 20\% faster than that of blockwise.

\begin{table}[h!]
\centering
\caption{Runtime breakdown of Blockwise and HSD.}
\label{tab:runtime}
\small
\begin{tabular}{lrrrr}
\toprule
\textbf{Component} & 
\textbf{Blockwise Mean (ms/token)} & 
\textbf{Blockwise \%} & 
\textbf{HSD Mean (ms/token)} & 
\textbf{HSD \%} \\
\midrule
\textbf{Total}            & \textbf{34.168} & \textbf{100.00\%} & \textbf{33.788} & \textbf{100.00\%} \\
Prefill          & 0.913 & 2.67\% & 0.915 & 2.71\% \\
Draft Forward    & 8.210 & 24.03\% & 7.865 & 23.28\% \\
Target Forward   & 24.690 & 72.26\% & 24.695 & 73.09\% \\
KV Cache Input   & 0.007 & 0.02\% & 0.005 & 0.01\% \\
KV Cache Output  & 0.070 & 0.20\% & 0.067 & 0.20\% \\
Logits Processing & 0.093 & 0.27\% & 0.103 & 0.31\% \\
Verification     & 0.160 & 0.47\% & 0.127 & 0.37\% \\
Other            & 0.025 & 0.08\% & 0.012 & 0.03\% \\
\bottomrule
\end{tabular}
\end{table}

\end{document}